\newcommand{\blfootnote}[1]{\begingroup\renewcommand{\thefootnote}{}\footnote{#1}\addtocounter{footnote}{-1}\endgroup}
\newcommand*{\addFileDependency}[1]{
  \typeout{(#1)}
  \@addtofilelist{#1}
  \IfFileExists{#1}{}{\typeout{No file #1.}}
}
\newtheorem*{mydef*}{Definition} 
\newtheorem{mydef}{Definition} 
\newtheorem{theorem}{Theorem}
\newtheorem{corollary}{Corollary}
\newtheorem{lemma}{Lemma}
\newtheorem{assumption}{Assumption}
\newtheorem{prop}{Proposition}
\newtheorem*{prop*}{Proposition}
\newtheorem*{problem*}{Problem}
\newenvironment{remarks}[1][\it{Remarks}]{\textbf{#1. } }
\newcommand{\CA}{\mathcal{A}}
\newcommand{\CL}{\mathcal{L}}
\newcommand{\CW}{\mathcal{W}}
\newcommand{\CN}{\mathcal{N}}
\newcommand{\argmin}{\mathop{\mathrm{argmin}}}
\newcommand{\argmax}{\mathop{\mathrm{argmax}}}
\newcommand{\NN}{\mathcal{NN}}
\title{Understanding Diffusion Models via Ratio-Based Function Approximation with SignReLU Networks}
\author{SUN Luwei$^{1}$,
SHEN Dongrui$^{1}$,
LI Jianfei$^{2}$,
ZHAO Yulong$^{1}$,
FENG Han$^{1}$
}
\date{\vspace*{-1cm}}
\begin{document}
\maketitle

\blfootnote{$^1$SUN Luwei, SHEN Dongrui, ZHAO Yulong, and FENG Han are with the Department of Mathematics, City University of Hong Kong, Kowloon, Hong Kong (email: luweisun2-c@my.cityu.edu.hk; dongrshen2-c@my.cityu.edu.hk; zhaoyulong@gmail.com; hanfeng@cityu.edu.hk).}
\blfootnote{$^2$LI Jianfei is with the Ludwig-Maximilians-Universit{\"a}t M{\"u}nchen, Munich, Germany (email: lijianfei@math.lmu.de).}

\begin{abstract}
    Motivated by the core challenges in conditional generative modeling, where the target conditional density inherently takes the form of a ratio $\frac{f_1}{f_2}$,  this paper develops a rigorous theoretical framework for approximating such ratio-type functionals. Here, $f_1$ and $f_2$ are kernel-induced marginal densities to capture structured interactions, a setting central to diffusion-based generative models. We present a concise and clear proof for the approximation of these ratio-type functionals by deep neural networks with the SignReLU activation function, leveraging the activations special piecewise structure. Under standard regularity assumptions, we establish $L^p(\Omega)$ approximation bounds and the convergence rates for ratio-type targets. Specializing to Denoising Diffusion Probabilistic Models (DDPMs), we explicitly construct a SignReLU-based neural estimator for the reverse process and derive tight bounds on the excess Kullback–Leibler (KL) risk between the generated distribution and the true data distribution. Our analysis decomposes this excess risk into interpretable approximation and estimation error components. The results deliver rigorous generalization guarantees for the finite-sample training of diffusion based generative models.

\end{abstract}

\section{Introduction}
Conditional density estimation is a central component of modern generative modeling, enabling models to learn mappings from side information to complex data distributions. A rich spectrum of conditional generation methods—achieving state-of-the-art results in image synthesis [27, 9], video generation [13], and audio processing [5, 16]—learn a parametric conditional law $p_\theta(y \mid x)$ that maps side information $x \in {X}$ to a distribution over outputs $y \in {Y}$, where $\theta$ denotes model parameters.  Formally, the statistical target can be related to the conditional density $f_{Y \mid X}(y \mid x)=\frac{f_{X, Y}(x, y)}{ f_X(x)}$, which is itself a ratio of densities or probability mass functions. This ratio-centric perspective connects our study to broader density-ratio methodologies for probabilistic prediction and likelihood-free inference, where a quotient $\frac{f_1}{f_2}$ is approximated directly under weak modeling assumptions. 

In practice, $\frac{f_1}{f_2}$-type quantities are universal across density-based generative paradigms. Diffusion models minimize denoising score-matching objectives\cite{ho2020denoising,song2020score}, normalizing flows maximize exact likelihood \cite{dinh2016density}, conditional VAEs optimize a conditional evidence lower bound \cite{sohn2015learning} , and conditional GANs learn implicit distributions via adversarial training\cite{mirza2014conditionalgenerativeadversarialnets}. Despite their algorithmic differences, these methods often hinge on estimating objects that can be viewed as density ratios. 

Unlike standard targets in density estimation—such as marginal densities or regression maps—that approximate a single, standalone function, $\frac{f_1}{f_2}$–type functionals quantify relative probabilistic dependence between two objects. Their importance is further amplified when the denominator admits an integral form $f_2 = \int_\Omega \Phi(x,y) g(y) dy$, where $\Phi(x,y)$ is an interaction kernel (e.g., Gaussian, dot-product) capturing similarity between $x$ and $y$, and $g(y)$ is a weight function (e.g.,a marginal or prior density). Such kernel-induced integrals are ubiquitous whenever marginalization over latent or auxiliary variables is required and closed-form expressions are unavailable.

Deep neural networks (DNNs) are widely used across various tasks in generative modeling. A fully connected DNN with input vector $\mathbf{x} = (x_1, x_2, \ldots, x_d)^\top \in \mathbb{R}^d$ and depth $\mathcal{L} - 1$ is defined as the composition:
\begin{equation}
\label{eq:pre1}
    \begin{gathered}
        \NN(\CW,\CL,\mathcal{M}) :=\{\CA_{\CL} \circ \sigma \circ \CA_{{\CL-1}} \circ \sigma \circ \cdots \circ \sigma \circ \CA_{1} \circ \sigma \circ \CA_{0}: \\ 
        \Vert{(A_{\CL}},{b_{\CL})}\Vert_{\infty}\prod_{\ell=0}^{\CL-1}\max\{\Vert{(A_{\ell}},{b_{\ell})}\Vert_{\infty},1\}\leq \mathcal{M}\},
    \end{gathered}    
\end{equation}
where each affine transformation $\mathcal{A}_i(\mathbf{x}) := \boldsymbol{A}_i \mathbf{x} + \boldsymbol{b}_i$ is parameterized by a weight matrix $\boldsymbol{A}_i \in \mathbb{R}^{d_i \times d_{i-1}}$ and a bias vector $\boldsymbol{b}_i \in \mathbb{R}^{d_i}$. The activation function $\sigma$ is applied element-wise. Each composition $\sigma \circ \mathcal{A}_i$ constitutes the $i$-th hidden layer, which has a width of $d_i$. The final affine transformation, $\mathcal{A}_L$, is the output layer. A network's width is at most $\mathcal{W}$ if $\max _{1 \leq i \leq L} d_i \leq \mathcal{W}$. The upper bound on the parameter norm is denoted by $\mathcal{\mathcal{M}}$.

Despite significant advances in the theoretical understanding of the approximation capabilities of DNNs, approximating functionals of the form $\frac{f_1}{f_2}$ remains challenging for several reasons. First, the functions often lack closed forms and may exhibit sharp variations or blow-up behavior as $f_2$ approaches zero, necessitating careful control of denominators and regularity near low-density regions. Second, the ratio must be accurate across the entire support to ensure reliable probabilistic inference, not merely in high-density subsets. Third, when $f_2$ involves kernel integrals, approximation errors accumulate through both the numerator and denominator, compounding numerical instability and statistical error. Motivated by these considerations, we investigate neural network-based estimators that approximate functions of the form $\frac{f_1}{f_2}$, even in discrete input spaces and without closed-form expressions, thus offering a flexible, data-driven route to conditional generation and likelihood-free inference.
 
\paragraph{Related work and background} A substantial literature develops approximation properties of neural network–induced function spaces under diverse activations. A classical result attributed to Maurey (see  \cite{256500,DeVore_1998,a75dd10b-6c44-397d-83bb-a49069fd5311}) establishes approximation rates in type-2 Banach spaces for functions in the closure of the convex, symmetric hull of a bounded dictionary. Barron’s seminal work\cite{256500} derives $L_2(\mathbb{R}^d)$ rates for functions whose Fourier transforms satisfy $\int_{\mathbb{R}^d}|\omega||\hat{f}(\omega)| d\omega < \infty$, assuming a $C^{\infty}$ sigmoid-type activation. Suzuki\cite{suzuki2018adaptivity} shows that properly constructed deep ReLU networks achieve minimax-optimal learning/approximation rates for functions in Besov and mixed-smooth Besov spaces. Recent advances by Siegel et al.\cite{siegel2024sharp} sharpen upper bounds on nonlinear approximation rates for ridge-function dictionaries arising from shallow networks, improving prior constants and regimes.

\paragraph{Our approach and scope}
We focus on neural approximation of rational functionals $\frac{f_1}{f_2}$ whose denominator may be a kernel-induced integral.Our analysis leverages DNNs to directly approximate ratios, rather than separately approximating numerator and denominator, thereby enabling explicit control of denominator behavior and stability via architectural constraints and regularization.

We further specialize our results to diffusion models, where conditional score fields and transition kernels naturally induce ratio-like targets through Stein identities and denoising objectives. Building on tools from stochastic calculus and empirical process theory, we quantify the statistical error when training on finite datasets and noisy pairs, bridging approximation bounds with end-to-end estimation guarantees.
\paragraph{Our contributions}
\begin{itemize}
    \item Foundational approximation bounds. We establish approximation guarantees for a general class of ratio functionals $\frac{f_1}{f_2}$ in which ${f_1}$ and ${f_2}$ arise from kernel-induced marginal densities. The ratio is approximated by well constructed DNNs with explicit control of approximation error in both numerator and denominator, yielding stability under near-vanishing denominators.

    \item Specialization to diffusion models. We construct an explicit form of DNN that approximates diffusion-model objectives and derive approximation rates. Leveraging stochastic calculus and statistical learning theory, we provide finite-sample estimation error bounds under realistic training protocols (noisy corruption levels and discretized time grids), thereby connecting theoretical approximation with practical deployment.
    
\end{itemize}

\section{Analysis on "$\frac{f_1}{f_2}$ - type" functions}
In this section, we analyze the approximation rate for a broad class of ratio functionals of the form $\frac{f_1}{f_2}$, where both $f_1$ and $f_2$ are derived from kernel-induced marginal densities. Accordingly, we assume that $f_1, f_2 \in \mathcal{S}$, where $\mathcal{S} \subset L^p(\Omega)$ for some $2 \leq p < \infty$, defined as:
\begin{align}
\label{eq:set}
	\mathcal{S}=\left\{f(x)=\int_{\Omega} \Phi(x, y) g(y) d y: g \in L^1(\Omega), \Phi(x, y)=\sum_{j=1}^m \phi_j(x^T A_j y),\left\|\phi_j^{\prime \prime}\right\|_{\infty}<\infty\right\}.
\end{align}
We consider random variables $\mathbf{X}$ and $\mathbf{Y}$ supported on a bounded domain $\Omega \subset [-1,1]^d$. In this setting, $\Phi(x, y)$ denotes an interaction kernel between $x$ and $y$, and the function $f(x)$ can be interpreted as a marginal density induced by the joint structure of $\Phi(x, y)$ and a weighting function $g(y)$. The kernel is defined as $\Phi(x,y)=\sum_{j=1}^m \phi_j(x^T A_j y)$ where each $\phi_j$ is a univariate function and $A_j$ is a learnable linear transformation. This structure is specifically designed to extract meaningful interaction patterns between $x$ and $y$, while enabling accurate approximation of their joint density $f_{X,Y}(x, y)$—a fundamental requirement for generative modeling and conditional density estimation. The model's capacity to capture rich interactions arises from the directional similarity metric $x^\top A_j y$, which quantifies alignment between $x$ and linearly transformed versions of $y$, and from the flexibility of the learnable matrices $A_j$, which serve as feature extractors. By projecting $y$ into informative subspaces, the model highlights salient interactions with $x$. Moreover, the summation over univariate functions $\phi_j(\cdot)$ allows the kernel to represent complex, nonlinear dependencies and approximate intricate joint distributions effectively.

While ReLU networks dominate practical function approximation, several activation function variants have been proposed and widely adopted to enhance expressivity and smoothness. Li et al.~\cite{li2024signrelu} analyze the approximation capabilities of networks employing the SignReLU\cite{LIN2018977} activation in the context of division gates, demonstrating its superiority over ReLU in approximating polynomials and rational functions.

In this work, we study the approximation of rational functions formed by $f_1, f_2 \in \mathcal{S}$ using neural networks $\mathcal{N}\mathcal{N}(\mathcal{W}, \mathcal{L}, \mathcal{M})$, as defined in Eq.\ref{eq:pre1}, equipped with the SignReLU activation function:
\begin{align}
\label{eq:signrelu}
\operatorname{SignReLU}(x; \alpha) = 
\begin{cases}
x, & \text{if } x > 0, \\
\alpha \dfrac{x}{1 + |x|}, & \text{if } x \leq 0,
\end{cases}
\quad \text{where } \alpha \text{ is a tunable parameter.}
\end{align}
This choice is motivated by the improved performance of SignReLU in approximating division operations, as reported in~\cite{li2024signrelu}. Throughout the following analysis, we denote by $C(a, b)$ a constant that depends only on $a$ and $b$. We begin by presenting our approximation results for rational functions.
\begin{theorem}
\label{thm:f1/f2}
    Let $f_1$ and $f_2 \in \mathcal{S}$. Assume that $f_{1}(x)$ and $f_2(x)$ are uniformly bounded above and below by positive constants for all $x\in [-1,1]^d$. Then there exists $\phi$ realized by a SignReLU neural network $\CN\CN(\CW,\CL,\mathcal{M})$ with depth $\CL = 7$, width $\CW = O(n+9)$ and parameter norm $\mathcal{M}>0$, such that
    \begin{align*}
    \Bigl|\frac{f_1}{f_2}-\phi\Bigr|
\precsim \mathcal{M}^{-1}n^{-\frac{1}{2}-\frac{3}{2d}}.
   \end{align*}
\end{theorem}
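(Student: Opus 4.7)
The plan is to separately approximate the numerator, the denominator, and the reciprocal operation by SignReLU subnetworks, then combine them through the algebraic decomposition
\[
\frac{f_1}{f_2} - \frac{\hat f_1}{\hat f_2} \;=\; \frac{f_1 - \hat f_1}{f_2} \;+\; \hat f_1 \cdot \frac{\hat f_2 - f_2}{f_2\,\hat f_2},
\]
which exploits the uniform lower bound $f_2 \geq c > 0$ to propagate the individual errors stably. A final layer replaces $1/\hat f_2$ by a SignReLU division gate to produce the realized network $\phi$.

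The first step is to approximate each $f_i \in \mathcal{S}$ by an $n$-term combination drawn from the dictionary $\mathcal{D} = \{\phi_j(x^T A_j y) : j \in [m],\ y \in \Omega\}$. Since $\Omega \subset [-1,1]^d$ has $\varepsilon$-covering number of order $\varepsilon^{-d}$ and each $\phi_j$ satisfies $\|\phi_j''\|_\infty < \infty$, the dictionary is uniformly bounded with controlled metric entropy. Invoking the Maurey--Jones--Makovoz refinement for convex hulls of dictionaries with polynomial entropy yields an $n$-term approximant $\hat f_i(x) = \sum_{k=1}^{n} c_k\,\phi_{j_k}(x^T A_{j_k} y_k)$ with $L^p(\Omega)$ error of order $n^{-1/2-3/(2d)}$; this is where the $d$-dependent exponent beyond the classical Barron rate $n^{-1/2}$ is harvested.

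The second step is to realize $\hat f_i$ exactly (up to a small compositional error) inside a SignReLU network. The affine map $x \mapsto x^T A_{j_k} y_k$ is one linear layer; each smooth univariate $\phi_{j_k}$ is then emulated by a constant-depth SignReLU block that exploits the piecewise rational form in \eqref{eq:signrelu}; the $n$ branches are placed in parallel and summed, giving width $O(n)$ and a fixed depth of a few layers. An additional constant-width block implements the SignReLU reciprocal on the range of $\hat f_2$, which is bounded away from zero thanks to the assumed lower bound on $f_2$ and the $n^{-1/2-3/(2d)}$-small denominator error. Following the division-gate construction of \cite{li2024signrelu}, the reciprocal incurs an error scaling like $\mathcal{M}^{-1}$ in the parameter norm. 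Accounting for the affine layer, the $\phi_j$-blocks, the summation, and the division gate yields the stated depth $\mathcal{L} = 7$ and width $\mathcal{W} = O(n+9)$, where the additive constant absorbs the reciprocal overhead.

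Finally, plugging the two approximation bounds and the reciprocal bound into the decomposition above, and using $f_2 \geq c$ together with the (slightly perturbed) lower bound on $\hat f_2$, collapses all cross terms and produces $|f_1/f_2 - \phi| \precsim \mathcal{M}^{-1} n^{-1/2 - 3/(2d)}$. The main obstacle I anticipate is reconciling the three error sources without inflating the architecture: the Makovoz-type improvement on Maurey requires a careful covering argument over the $y$-parameter so that the exponent $3/(2d)$ rather than $1/(2d)$ appears, and the parameter norm $\mathcal{M}$ must be budgeted between the dictionary branches and the division block so that it enters the bound multiplicatively with $n^{-1/2-3/(2d)}$ while preserving a depth of exactly $7$. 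Keeping the lower bound on $\hat f_2$ uniform over $\Omega$ throughout, so that the reciprocal gate stays in its stable regime, is the most delicate bookkeeping in the argument.
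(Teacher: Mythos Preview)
Your high-level plan matches the paper's: approximate $f_1$ and $f_2$ separately by width-$O(n)$ SignReLU subnetworks, feed them into the SignReLU division gate of \cite{li2024signrelu}, and control the ratio error via the algebraic decomposition you wrote. Two points where your execution diverges from the paper are worth noting.

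First, the paper does not first approximate $f$ by an $n$-term combination from the dictionary $\{\phi_j(x^\top A_j y)\}$ and then emulate each smooth $\phi_{j_k}$ by a constant-depth SignReLU block. Instead it shows in one step that every $f\in\mathcal{S}$ already lies in the closure $\mathcal{B}$ of the convex hull of SignReLU ridge functions, via the second-derivative integral identity
\[
\phi_j(u)=\int_0^c\bigl[(u-t)_+\,\phi_j''(t)+(-u-t)_+\,\phi_j''(-t)\bigr]\,dt+u\,\phi_j'(0)+\phi_j(0),
\]
and then invokes Siegel's sharp dictionary rate \cite{siegel2024sharp} directly to produce a depth-$1$ SignReLU approximant with error $n^{-1/2-3/(2d)}$. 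This avoids your separate emulation layer (and its attendant error), and is what makes the final depth come out to exactly $1+6=7$ rather than something larger.

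Second, the division gate in Lemma~\ref{lemma:f1/f2sign} is \emph{exact}: it computes $y/x$ precisely on $[c,C]\times[-C,C]$ with fixed depth $6$, width $\leq 9$, and $\leq 71$ parameters, contributing no approximation error at all. So the factor $\mathcal{M}^{-1}$ in the final bound does not arise from a reciprocal-gate error as you anticipate; in the paper's argument it enters instead through the constant multiplying $|f_2-\tilde f_2|$ in the decomposition (the paper writes $1/\mathcal{M}_2$ there). Your concern about budgeting $\mathcal{M}$ between the dictionary branches and the division block is therefore unnecessary: the gate's parameters are fixed absolute constants, and the stability of the reciprocal comes for free from the exact realization once $\tilde f_2$ lands in $[c,C]$.
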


The proof of Theorem \ref{thm:f1/f2} relies on the following proposition and lemma. The Proposition \ref{prop:kernel} shows a key property of the function class $\mathcal{S}$ that its elements can be efficiently approximated by shallow neural networks equipped with SignReLU activation. The Lemma \ref{lemma:f1/f2sign} proved in \cite{li2024signrelu} shows the implementation of the division gate by a SignReLU Network.
\begin{prop}
	\label{prop:kernel}
Let $2 \leq p<\infty$. For any integer $n>1$ and  $f \in \mathcal{S}$, there exists $f_n\in \CN\CN(\CW,1,\mathcal{M})$ with $\CW=O(n)$ and $\mathcal{M}>0$ such that 
\begin{align*}
    \left|f-f_n\right| \precsim n^{-\frac{1}{2}-\frac{3}{2d}}.
\end{align*}
\end{prop}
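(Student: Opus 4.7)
\textbf{Proof plan for Proposition \ref{prop:kernel}.}

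The plan is to express $f$ as a continuous superposition of ridge functions via a Taylor integral identity, discretize this mixture by a Maurey--Monte Carlo argument refined by a Makovoz-style covering estimate, and finally realize the resulting ridge units with SignReLU neurons.

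\emph{Step 1 (Integral representation).} Because $\Phi(x,y)=\sum_{j=1}^{m}\phi_j(x^T A_j y)$ is a finite sum with $m$ fixed, it suffices to approximate each term $\int_\Omega \phi_j(x^T A_j y)g(y)\,dy$ separately and concatenate the resulting subnetworks; the width only scales by the constant $m$. Fix $j$, set $\phi=\phi_j$, $A=A_j$, and choose $M>0$ so that $|x^T Ay|\le M$ on $[-1,1]^d\times\Omega$. The second-order Taylor integral identity on $[-M,M]$,
\[
\phi(t)=\phi(-M)+\phi'(-M)(t+M)+\int_{-M}^{M}(t-s)_+\phi''(s)\,ds,
\]
applied at $t=x^T Ay$ and integrated against $g(y)\,dy$, gives
\[
\int_\Omega \phi(x^T Ay)g(y)\,dy \;=\; L(x) \;+\; \int_\Omega\!\int_{-M}^{M}(x^T Ay-s)_+\,\phi''(s)\,g(y)\,ds\,dy,
\]
with $L(x)$ affine in $x$.

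\emph{Step 2 (Maurey plus Makovoz).} The double integral exhibits the nontrivial part of $f$ as a continuous mixture of ReLU ridge functions indexed by direction $\omega=Ay$ and offset $s$, with signed mixing measure of total variation bounded by $2M\|\phi''\|_\infty\|g\|_{L^1}$. Drawing $n$ i.i.d.\ parameter pairs from the normalized mixing measure and averaging yields, by Maurey's bound in the type-$2$ Banach space $L^p(\Omega)$ for $2\le p<\infty$, an $n$-term approximation with $L^p$ error $O(n^{-1/2})$. To reach the sharper rate $O(n^{-1/2-3/(2d)})$ I would invoke the Makovoz refinement, which replaces the generic factor $n^{-1/2}$ by $n^{-1/2}\varepsilon_n(\mathcal D)$, where $\varepsilon_n(\mathcal D)$ is the $L^p$ Kolmogorov covering radius of the ridge dictionary $\{(x^T\omega-s)_+:\omega\in A\Omega,\,s\in[-M,M]\}$. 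Because the dictionary is parameterized over a $d$-dimensional manifold and inherits extra regularity from $\|\phi''\|_\infty<\infty$, a quantitative covering estimate yields $\varepsilon_n(\mathcal D)\lesssim n^{-3/(2d)}$, giving the claimed bound.

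\emph{Step 3 (SignReLU realization).} Each discretized term $(x^T\omega_i-s_i)_+$ is a ReLU ridge, and SignReLU$(\,\cdot\,;\alpha)$ coincides with ReLU when $\alpha=0$, so every such unit is realized by a single SignReLU neuron. Combining the $O(n)$ ridge neurons with the affine part $L(x)$ (which requires only a constant number of additional SignReLU units) produces a single-hidden-layer network in $\NN(\mathcal W,1,\mathcal M)$ with $\mathcal W=O(n)$ and $\mathcal M$ controlled by the total-variation bound from Step 2.

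The main obstacle will be the Makovoz-style refinement in Step 2 with precisely the exponent $3/(2d)$: one needs a quantitative $L^p$-covering estimate for the ReLU ridge dictionary that converts the $d$-dimensional parameter manifold together with the $C^2$ smoothness inherited from $\|\phi_j''\|_\infty<\infty$ into the gain $n^{-3/(2d)}$ over the Maurey baseline. Once this covering bound is in place, the remaining steps follow standard Maurey--Jones--Barron lines.
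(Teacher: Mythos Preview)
Your Step~1 is exactly the paper's argument (their Lemma~\ref{lemma:convex-hull}): use the second-order Taylor identity to write each $\int_\Omega\phi_j(x^TA_jy)g(y)\,dy$ as an affine part plus a superposition of ReLU ridges $(u-t)_+$ weighted by $\phi_j''$ and $g$, thereby placing $f$ in the closed symmetric convex hull $\mathcal B$ of the ridge dictionary. At that point the paper does \emph{not} attempt a Maurey--Makovoz argument; it simply invokes Siegel's sharp rate $n^{-1/2-(p+1)/(pd)}$ for all of $\mathcal B$ (quoted as Lemma~\ref{lemma:siegel}), which for $p=2$ gives the stated exponent.

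The gap is in your Step~2, precisely where you flag it. The dictionary $\mathcal D=\{(x^T\omega-s)_+:\omega\in A\Omega,\ s\in[-M,M]\}$ is determined entirely by the activation; it does not depend on $\phi$ at all. Only the \emph{mixing measure} carries the factor $\phi''(s)g(y)$. Consequently your assertion that ``the dictionary inherits extra regularity from $\|\phi''\|_\infty<\infty$'' is false, and that smoothness cannot enter a covering estimate for $\varepsilon_n(\mathcal D)$. A Makovoz bound based only on a $d$-dimensional Lipschitz parameterization of $\mathcal D$ produces the classical Barron--Makovoz gain, not the $n^{-3/(2d)}$ you need. The extra exponent in Siegel's theorem comes from activation-specific structure of the ReLU ridge family (its metric entropy and $n$-widths in $L^p$ are strictly smaller than the generic Lipschitz bound would predict), not from any property of $\phi$. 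So you must either cite Siegel's result directly, as the paper does, or reproduce that activation-specific entropy analysis; the route ``Makovoz $+$ smoothness of $\phi$'' cannot close the argument as written.
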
 The detailed proof is postponed in Appendix~\ref{sec:proof of kernel}.
\begin{lemma}(SignReLU neural networks for division gates\cite{li2024signrelu})
\label{lemma:f1/f2sign}
 Let $0<c<C$ be constants. There exists a mapping $\psi$ realized by a SignReLU neural network with depth $\mathcal{L}=6$, width $\mathcal{W} \leq 9$ and number of parameters $\mathcal{N} \leq 71$ such that
$$
\psi(x, y)=\frac{y}{x}, \quad \forall x \in[c, C], y \in[-C, C].
$$
\end{lemma}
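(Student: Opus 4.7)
The plan is to build the division gate in two coupled stages---an exact reciprocal gate for $x$, followed by an exact multiplication by $y$---and then to account for the depth, width, and parameter budget. For the first stage, the key observation is that SignReLU's negative branch is a Möbius map, $\operatorname{SignReLU}(z;\alpha)=\alpha z/(1-z)$ for $z\le 0$. Feeding in the affine shift $z=1-x/c$, which is non-positive throughout $[c,C]$, yields $\operatorname{SignReLU}(1-x/c;\alpha)=\alpha c/x-\alpha$, and rearranging gives
\begin{equation*}
    \frac{1}{x}\;=\;\frac{1}{\alpha c}\bigl[\operatorname{SignReLU}(1-x/c;\alpha)+\alpha\bigr],
\end{equation*}
which realises the reciprocal \emph{exactly} via one SignReLU unit flanked by affine layers. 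Write $w:=1/x\in[1/C,1/c]$.

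The second stage, turning $(y,w)$ into $yw=y/x$ exactly, is where the main difficulty lies, because no single SignReLU unit computes a product. Following the construction of Li et al., the gadget exploits the rational structure of the negative branch to force two affine combinations of SignReLU outputs to have Möbius ``tails'' that cancel, leaving only a bilinear residue. Concretely, parallel Möbius units applied to pre-shifts of $y\pm w$ produce auxiliary quantities whose sum and difference, after one further SignReLU-based reciprocal (again via the $z\mapsto 1-z/c'$ trick applied to a suitable denominator that is bounded below by a positive constant depending only on $c$, $C$, $\alpha$), combine affinely to give $yw$. The delicate technical issues are (i)~choosing the parameter $\alpha$ and the affine shifts $c'$ so that every SignReLU argument lies in its intended branch over the whole rectangle $[c,C]\times[-C,C]$, and (ii)~aligning the algebra so that the rational tails of different Möbius gadgets cancel exactly rather than only to leading order; this cancellation is what distinguishes the SignReLU division gate from ReLU-based constructions, which can only approximate a product.

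Stacking the two stages gives depth $\mathcal{L}=6$: one layer for $w=1/x$, one layer to form $y\pm w$, two layers to run the parallel Möbius gadgets producing the bilinear auxiliaries, one layer for the compensating reciprocal that cancels the Möbius denominator, and one final affine layer that recombines the pieces into $y/x$. At most nine channels are held in parallel at any one time ($x$, $y$, $w$, $y\pm w$, the two Möbius outputs, their combination, and one auxiliary reciprocal), which gives $\mathcal{W}\le 9$, and summing the non-zero entries of the affine maps across layers bounds the total parameter count by $\mathcal{N}\le 71$.
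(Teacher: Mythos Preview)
The paper does not actually prove this lemma: it is quoted verbatim from \cite{li2024signrelu} and used as a black box in the proof of Theorem~\ref{thm:f1/f2}. So there is no in-paper argument to compare your proposal against; anything you supply here goes beyond what the paper itself does.

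On the substance of your sketch: the reciprocal stage is correct and is exactly the mechanism SignReLU affords. For $z\le 0$ one has $\operatorname{SignReLU}(z;\alpha)=\alpha z/(1-z)$, and with $z=1-x/c\le 0$ on $[c,C]$ this gives $\operatorname{SignReLU}(1-x/c;\alpha)=\alpha c/x-\alpha$, hence $1/x$ exactly after an affine map. This is the crucial point that separates SignReLU from ReLU here.

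The multiplication stage, however, is only gestured at. You assert that ``Möbius tails cancel, leaving a bilinear residue'' when parallel units are applied to shifts of $y\pm w$, but you do not exhibit the identity that makes this exact, nor do you verify that all intermediate SignReLU arguments stay on the intended (non-positive) branch uniformly over $[c,C]\times[-C,C]$. A difference like $\frac{a}{1-a}-\frac{b}{1-b}=\frac{a-b}{(1-a)(1-b)}$ is linear in $a-b$, not bilinear, so extracting a genuine product requires at least one further reciprocal composed with such combinations, and the algebra has to be written out to confirm that the claimed depth $6$, width $9$, and $71$ parameters suffice. As written, this is the gap: the hard half of the construction is asserted rather than carried out. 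Since the paper itself defers entirely to \cite{li2024signrelu} for these details, you would need to consult that reference (or reproduce its explicit gate) to close the argument.
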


{\bf Sketch Proof of Theorem \ref{thm:f1/f2}:} For $f_1,f_2\in S$, by Proposition~\ref{prop:kernel}, there exist shallow SignReLU networks $\tilde f_1, \tilde f_2\in \mathcal{N}\mathcal{N}(\mathcal{W}, 1, \mathcal{M})$ to approximate $f_1, f_2$, respectively. Then the composition $\psi(\tilde f_1, \tilde f_2)\in \mathcal{N}\mathcal{N}(\mathcal{W}+9, 7, \mathcal{M})$ will be used to approximate the ratio-type target $f_1/f_2$. 
For more details of the proof, see Appendix \ref{sec:proof of thm1}.

 \section{Error Analysis of diffusion model}
In this section, we apply our approximation strategy for ratio-type functions $\frac{f_1}{f_2}$ to explicitly construct a SignReLU architecture that approximates diffusion model objectives. Taking into account practical training procedures, we derive finite-sample estimation error bounds using tools from stochastic calculus and statistical learning theory. The excess Kullback–Leibler (KL) risk of the diffusion model is then characterized by a trade-off between approximation and estimation errors.
\subsection{Problem Formulation}
While pioneering frameworks of diffusion models include denoising score matching with Langevin dynamics (SMLD)\cite{song2019generative} and denoising diffusion probabilistic models (DDPMs)\cite{ho2020denoising}, due to their mathematical equivalence to SMLD under specific re-parameterizations\cite{lee2022convergence}, our analysis focuses on DDPMs. 

DDPMs construct two discrete Markov chains:

(1) a forward process
$$
\mathbf{X} \rightarrow \mathbf{Z}_1 \rightarrow \cdots \rightarrow \mathbf{Z}_T,
$$
which starts from a sample drawn from the target data distribution (e.g., natural images) and progressively perturbs it until it approaches a noise distribution (e.g., a standard Gaussian).

(2) a reverse process
$$
\mathbf{\hat Z}_T \rightarrow \mathbf{\hat Z}_{T-1} \rightarrow \cdots \rightarrow \mathbf{\hat Z}_0,
$$
which begins from pure noise (e.g., a standard Gaussian) and iteratively transforms it into samples whose distribution approximates the target data distribution.
\paragraph{The Forward Process.} The forward process maps $\mathbf{X}$ through a series of intermediate variables $\{\mathbf{Z}_1, \mathbf{Z}_2, \ldots, \mathbf{Z}_T\}$ as 
\begin{align*}
    &\mathbf{X} \sim q_0, \\
    &\mathbf{Z}_1  =\sqrt{1-\beta_1} \mathbf{X}+\sqrt{\beta_1} \mathbf{W}_1,\\
&\mathbf{Z}_t  =\sqrt{1-\beta_t} \mathbf{Z}_{t-1}+\sqrt{\beta_t} \mathbf{W}_t, \quad 2 \leq t \leq T,
\end{align*} 
where $\left\{\mathbf{W}_t\right\}_{1 \leq t \leq T}$ indicates a sequence of independent noise vectors drawn from $\mathbf{W}_t \stackrel{\text { i.i.d. }}{\sim} \mathcal{N}\left(0, \mathbf{I}_d\right)$. The hyper-parameters $\left\{\beta_t \in(0,1)\right\}$ represent prescribed learning rate schedules that control the variance of the noise injected in each step. Let $\alpha_t \coloneq \prod^T_{t=1}( 1-\beta_t)$, then it can be straightforwardly verified that for every $1 \leq t \leq T$,
\begin{align*}
    \mathbf{Z}_t=\sqrt{\alpha_t} \mathbf{X}+\sqrt{1-\alpha_t} \epsilon_t \quad \text { for some } \epsilon_t \sim \mathcal{N}\left(0, \mathbf{I}_d\right). 
\end{align*} 
Let $q(z_t \mid z_{t-1})$ denote the conditional distribution in the forward diffusion process. The marginal posterior $q(z_t \mid x)$ is given by
\begin{align*}
    q(z_t \mid x)\sim \mathcal{N}(\sqrt{\alpha_t}x,\sigma_{q,t}^2\mathbf{I}_d) \quad \text{where } \sigma_{q,t}=\sqrt{1-\alpha_t}.
\end{align*}

Given an i.i.d. dataset $\{x_i\}_{i=1}^m$ sampled from the target distribution $q_0$, we first generate paired samples $\{z_{t,i}\}$ for $i=1,\ldots, m$ and $t=1,\ldots, T$, by propagating each $x_i$ through the diffusion forward process. These $\{z_{t,i}\}$ samples are then used to train the neural network that models the reverse process, enabling the network to iteratively denoise latent variables and ultimately generate samples consistent with the target distribution $q_0$.

\paragraph{The Backward Process.}
The reverse conditionals $q(z_{t-1}\mid z_{t})$ are approximated stepwise by a decoder, which is constructed by a sequence of neural networks $\NN(\CW,\CL,\mathcal{M})$.

Precisely, for each $\phi_t \in \NN(\CW,\CL,\mathcal{M})$, t=1, \ldots, T, let
\begin{align*}
    p\left(z_{t-1} \mid z_t\right)=\left(2 \pi  \sigma_{p,t}^2\right)^{-\frac{d}{2}} \exp (-\frac{\| z_{t-1}-\phi_t( z_t) \|_2^2}{2  \sigma_{p,t}^2})\sim \CN_{z_{t-1}}\left(\phi_t( z_t), \sigma_{p,t}^2 \mathbf{I}_d\right),
 \end{align*}
and $p_0$ denote the distribution induced at $t=0$ by learned transitions,

\begin{equation}
\label{p0}      
{p}_0(x):= {p}_0(\{\phi_t\};x)=\int q_T\left({ z}_T\right) \cdot p\left(x \mid { z}_1\right) \cdot \prod_{t=2}^T p\left({ z}_{t-1} \mid { z}_t\right) d{ z}_1, \ldots, {z}_T,
\end{equation}
   which will be used to approximate $q_0$, the target distribution. 
  
The specific selection of $\phi_t$'s is based on the minimization of the expected log-likelihood $\mathbb{E}_{q_0}[\log p_0(x)]$. Practically, given $m$ i.i.d. samples $\{x_i\}_{i=1}^m$ drawn from $q_0$ and $\{z_{t,i}\}$ generated from the forward process, an empirical sequence of optimizers $\{\hat \phi_t\}_{t=1}^{T}$ is defined by
    \begin{align}
          \{\hat \phi_t\}_{t=1}^{T}=&\argmax\limits_{\phi_t\in \NN(\CW,\CL,\mathcal{M})}\frac{1}{m}\sum_{i=1}^m\log p_0(x_i)\notag \label{eq:ddpm1}\\
          =&\argmin _{\phi_t\in \NN(\CW,\CL,\mathcal{M})}\frac{1}{m}\sum_{i=1}^m\left\{\frac{1}{2 \sigma_{p, 1}^2}\left|x_i-\phi_t\left(z_{1,1}\right)\right|^2+\sum_{t=2}^T \frac{1}{2 \sigma_{p, t}^2}\left|z_{t-1, i}-\phi_t\left(z_{t, i}\right)\right|^2\right\}.
            \end{align}
On the other hand, the distribution $q_T(z_T)$ in \eqref{p0} is replaced by the standard Gaussian distribution,
$p_T(\hat{z}_T) \sim \mathcal{N}\left(0, \mathbf{I}_d\right).$
By applying the learned denoisers $\{\hat{\phi}_t\}_{t=1}^T$ from Eq.~\ref{eq:ddpm1}, the reverse process is then conducted as a sequence of transformations
\begin{align*}
\hat{\mathbf{Z}}_{t-1}=\phi_t\left(\hat{\mathbf{Z}}_t\right), \quad \text { for } t=T, T-1, \ldots, 1.
\end{align*}
and the reverse transition at each time step $t$ is modeled as $\hat p\left(\hat z_{t-1} \mid\hat z_t\right)\sim \mathcal{N}_{\hat z_{t-1}}\left(\hat \phi_t(\hat z_t), \sigma_{p,t}^2 \mathbf{I}_d\right).$
Moreover, the distribution of the generated output $\hat{\mathbf{Z}}_0$ is given by
  \begin{align*}
      \hat{p}_0(\hat z_0)=\int p_T\left({\hat z}_T\right) \cdot \hat{p}\left({\hat z}_0 \mid {\hat z}_1\right) \cdot \prod_{t=2}^T \hat{p}\left({\hat z}_{t-1} \mid {\hat z}_t \right) d{\hat z}_1, \ldots, {\hat z}_T,
  \end{align*}
  and the convergence of the DDPM is evaluated by comparing the learned distribution $\hat{p}_0(\hat{z}_0)$ with the target data distribution $q_0(x)$:
  \begin{align*}
    \text { (goal) } \quad  \hat{p}_0(\hat z_0) \stackrel{\mathrm{~d}}{\approx} q_0(x) .
\end{align*}
  \paragraph{Excess KL Risk}  
  The performers of DDPM as a density estimator is assessed via the Kullback–Leibler (KL) divergence. To this end, we introduce a sequence of networks $\{\tilde \phi_t\}_{t=1}^T$ defined as
\begin{align}
\{\tilde \phi_t\}_{t=1}^T = \argmax_{\phi_t \in \mathcal{N}(\mathcal{W}, \mathcal{L}, \mathcal{M})} \int \log p_0(\{\phi_t\}; x) \ dx, \label{eq:ddpm2}
\end{align}
which yields the expected optimized distribution $\tilde{p}_0(x):= p_0(\{\tilde\phi_t\}; x)$. Following the DDPM framework, our analyze about the divergence between the true data distribution and the model-generated distribution will be decomposed as
       \begin{align}
          \label{eq:excess}
      \ D_{KL}\left(q_0(x) \| \hat{p}_0(\hat z_0)\right)&=\underbrace{D_{KL}\left(q_0(x) \| \tilde{p}_0(x)\right)}_{\text{Approximation Error}}\notag\\
      & \quad+ \underbrace{D_{KL}\left(q_0(x) \| \hat{p}_0(\hat z_0)\right)-D_{KL}\left(q_0(x) \|\tilde{p}_0(x)\right).}_{\text{Estimation Error}}
       \end{align}
The estimation error primarily stems from substituting the population expectation with a finite-sample average, while the approximation error originates from two sources: the inherent representational limitations of neural networks, and the use of estimated noise (rather than true noise) to infer the diffusion reverse process.

In the subsequent subsections, we conduct a rigorous error analysis of the DDPM by first establishing bounds for these two error components, then combining them to characterize the overall excess Kullback–Leibler (KL) risk of the generative model. The analysis is conducted under a standard assumption on the target data distribution, which requires its probability density function to be uniformly bounded and supported on a compact domain.
\begin{assumption}[Boundness and compactness of the target distribution]
\label{assump:target1}
The input density $q_0$ is supported on $\Omega = [-1,1]^d$, satisfies $q_0 \in C^k(\Omega)$ for some $k \ge 0$, and there exists a constant $C_{q_0} \in [1,\infty)$ such that $C_{q_0}^{-1} \leq q_0 \leq C_{q_0}$.
\end{assumption}

Under Assumption \ref{assump:target1} and the forward process defined by $\mathbf{Z}_t \mid \mathbf{X} \sim \mathcal{N}\left(\sqrt{\alpha_t} \mathbf{X},\left(1-\alpha_t\right) \mathbf{I}_d\right)$, the following concentration bound holds for any $\xi>0$,
\begin{equation*}
\mathbb{P}\!\left(\|\mathbf{Z}_t\|_2 \leq \sqrt{1-\alpha_t}\,(\sqrt{2\xi}+\sqrt{d}+1)\right) \ge 1-e^{-\xi}.
\end{equation*}
A bounded-output modification is applied to the network when $\left|\mathbf{Z}_t\right|$ is considerably large, as $\left|\mathbf{Z}_t\right|>\sqrt{1-\alpha_t}(\sqrt{2 \xi}+\sqrt{d}+1)$. This modification is justified by the fact that, for sufficiently large $\xi$, the probability of this event is negligible, being on the order of $e^{-\xi}$.

\subsection{Approximation Error}
This section analyzes the approximation error incurred when using a deep neural network to learn the reverse conditional distribution of the forward process. In the forward process, the distribution is given by $q(z_t \mid z_{t-1})$, and the objective is to approximate the corresponding reverse distribution using a neural network. Our analysis of the approximation error proceeds by decomposing it into the following components (see Appendix \ref{sec:Decomposition of approx} for details),
\begin{equation}
    \begin{aligned}
    \label{eq:approx1}
    D_{KL}\left(q_0(x) \| \tilde{p}_0(x)\right) &\asymp D_{KL}\left[q\left(z_T \mid x\right) \| q_T\left(z_T\right)\right] \\
    &\quad +T\underset{\text{step t in T}}{\sup} E_{\mathbf{X}} E_{\mathbf{Z}_t \mid \mathbf{X}} D_{KL}\left[q\left(z_{t-1} \mid z_t, x\right) \| \tilde{p}\left(z_{t-1} \mid z_t\right)\right].
\end{aligned}
\end{equation}
The main result is summarized in the following theorem.
\begin{theorem}
\label{thm:approx}
    Consider the DDPM architecture under Assumption~\ref{assump:target1}, where the target data distribution satisfies $\mathbf{X} \sim q_0$. Let $2 \leq p<\infty$. Then, there exists $\NN(\CW,\CL,\mathcal{M})$ with depth $\CL = 7$, width $\CW = O(n+9d)$ and parameter norm $\mathcal{M}>0$, such that for any $\tilde \phi_t \in \NN(\CW,\CL,\mathcal{M})$ as defined in Eq.\ref{eq:ddpm2}, the following bound holds:
    \begin{align*}
        D_{KL}\left(q_0(x) \| \tilde p_0(x)\right)\precsim T \cdot \sup _t\left(n^{-1-\frac{3}{d}}+\frac{\log \mathcal{M}}{\mathcal{M}}\right)+ C(C_{q_0},\alpha_t,\sigma_{p,t}).
    \end{align*}
\end{theorem}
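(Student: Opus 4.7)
The approach is to start from the KL decomposition in Eq.~\ref{eq:approx1}, which splits $D_{KL}(q_0 \| \tilde p_0)$ into a prior-matching term $D_{KL}[q(z_T \mid x) \| q_T(z_T)]$ plus a per-step denoising sum $T \sup_t \mathbb{E}_{\mathbf{X}} \mathbb{E}_{\mathbf{Z}_t \mid \mathbf{X}} D_{KL}[q(z_{t-1} \mid z_t, x) \| \tilde p(z_{t-1} \mid z_t)]$. Under Assumption~\ref{assump:target1}, the first term is controlled by the standard exponential closeness of $q_T$ to a fixed Gaussian driven by the schedule $\alpha_T$ and is absorbed into the residual constant $C(C_{q_0}, \alpha_t, \sigma_{p,t})$ of the theorem statement, leaving the per-step sum as the main object to analyze.

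For each step $t$, both $q(z_{t-1} \mid z_t, x)$ and $\tilde p(z_{t-1} \mid z_t) \sim \mathcal{N}(\tilde\phi_t(z_t), \sigma_{p,t}^2 \mathbf{I}_d)$ are Gaussian with matched variance, so the closed-form Gaussian KL reduces the inner divergence to a $\frac{1}{2\sigma_{p,t}^2}$-scaled squared $\ell_2$ gap between $\tilde\phi_t(z_t)$ and the Bayes-optimal posterior mean $\mu_t^\star(z_t) := \mathbb{E}[\mathbf{Z}_{t-1} \mid \mathbf{Z}_t = z_t]$. By Bayes' rule, $\mu_t^\star(z_t) = f_1(z_t)/f_2(z_t)$ with $f_2(z_t) = q(z_t) = \int q_0(x)\,(2\pi(1-\alpha_t))^{-d/2}\exp(-\|z_t - \sqrt{\alpha_t}x\|^2 / (2(1-\alpha_t)))\,dx$ and $f_1$ of analogous coordinate-wise form. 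Expanding the exponent as a sum of $z_t$-only, $x$-only, and bilinear contributions, I absorb the pure-$x$ factor into a reweighted $g(x)$, realize the pure-$z_t$ prefactor through a final affine/SignReLU block, and reduce the bona fide kernel to $\phi(z_t^\top A x)$, placing $f_1, f_2 \in \mathcal{S}$.

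To invoke Theorem~\ref{thm:f1/f2}, I need $f_2$ uniformly bounded below, which follows from Assumption~\ref{assump:target1} combined with the bounded-output truncation of the network triggered by the concentration bound $\mathbb{P}(\|\mathbf{Z}_t\|_2 > \sqrt{1-\alpha_t}(\sqrt{2\xi}+\sqrt{d}+1)) \le e^{-\xi}$; on the retained high-probability region, $q(z_t)$ enjoys a positive lower bound determined by $C_{q_0}$ and $\alpha_t$. Applying Theorem~\ref{thm:f1/f2} in parallel across the $d$ output coordinates yields width $\mathcal{W} = O(n+9d)$, depth $\mathcal{L} = 7$, and a pointwise denoiser error of order $n^{-1/2-3/(2d)}$; squaring for the Gaussian KL produces the $n^{-1-3/d}$ term. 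Calibrating the truncation level to $\xi \asymp \log \mathcal{M}$ drives the excluded-region probability to $\mathcal{M}^{-1}$, and weighting it against the $O(\log\mathcal{M})$-growth of $\|z_{t-1}\|^2$ on the tail contributes the $\log\mathcal{M}/\mathcal{M}$ correction. Summing over $t$ and taking the supremum delivers the claimed bound.

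The main technical obstacle is embedding the Gaussian denoising kernel into the restrictive form $\Phi(x,y) = \sum_j \phi_j(x^\top A_j y)$ with $\|\phi_j''\|_\infty < \infty$: one must peel off the $z_t$-only and $x$-only exponentials without disturbing the bilinear factor or the smoothness of the $\phi_j$, and then verify that the lower bound on $f_2 = q(z_t)$ obtained after truncation is strong enough across the schedule that $C(C_{q_0}, \alpha_t, \sigma_{p,t})$ does not blow up when aggregated over $t$. A secondary check is that the network family is large enough that the population minimizer $\tilde\phi_t$ in Eq.~\ref{eq:ddpm2} actually realizes the constructed ratio approximator within the parameter budget $\mathcal{M}$, so that the upper bound produced via Theorem~\ref{thm:f1/f2} is a legitimate upper bound on the optimal $\tilde\phi_t$ and hence on the KL.
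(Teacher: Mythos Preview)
Your proposal is correct and follows essentially the same route as the paper: both use the decomposition of Eq.~\ref{eq:approx1}, absorb the prior-matching term via Proposition~\ref{prop:approx1}, introduce the Bayes predictor $f_\rho=f_1/f_2$ to split the per-step Gaussian KL into a network approximation piece on the high-probability ball $\{|z_t|\le(\sqrt{2\xi}+\sqrt d+1)\sqrt{1-\alpha_t}\}$, a tail piece, and an irreducible variance term (the paper's Part III), then calibrate $\xi\asymp\log\mathcal{M}$ so that the ball piece collapses to $n^{-1-3/d}$ and the tail to $\log\mathcal{M}/\mathcal{M}$. The only point to tighten in your write-up is that the inner KL does \emph{not} literally reduce to $\|\tilde\phi_t-\mu_t^\star\|^2$; it reduces to $\|\tilde\phi_t(z_t)-m(x,z_t)\|^2$ with $m$ the $x$-dependent posterior mean, and the bias--variance split against $f_\rho=\mu_t^\star$ is what produces the extra irreducible constant you need to park in $C(C_{q_0},\alpha_t,\sigma_{p,t})$---make that decomposition explicit, and also note that the lower bound on $f_2$ over the truncated ball is only of order $e^{-\xi}$, which is why the paper's Proposition~\ref{prop:z_tmin} carries an $e^{2\xi}/\mathcal{M}^2$ prefactor that your choice $\xi=\log\mathcal{M}$ cancels.
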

To prove the theorem, we analyze the distributional distance in Eq.\ref{eq:approx1} by decomposing it into distinct components. Each component is addressed in detail in the following section.

\hspace*{\fill}
\begin{flushleft}
  \textbf{Distance between $q_T\left(z_T\right)$ and $q\left(z_T \mid x\right)$}
\end{flushleft}

Noting that this difference is independent of the neural network's impact, it can be calculated directly from the definition of the DDPM and bounded using Bayes' rule and Jensen's inequality, which is proved in Appendix \ref{sec:Analysis of {Z}_t} in detail.
\begin{prop}
\label{prop:approx1}
    Given any $x\in [-1,1]^d$, the KL divergence between $q_T\left(z_T\right)$ and $q\left(z_T \mid x\right)$ can be bounded by a constant only depending on $\{\alpha_T, d\}$,
     \begin{align*}
          D_{{KL}}\left(q\left(z_T \mid x\right) \| q_T\left(z_T\right)\right) \leq \frac{5}{2} d \cdot \frac{\alpha_T}{1-\alpha_T}.
     \end{align*} 
\end{prop}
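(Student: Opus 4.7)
The plan is to exploit the explicit Gaussian form of the conditional $q(z_T \mid x) = \mathcal{N}(\sqrt{\alpha_T}\,x,\,(1-\alpha_T) I_d)$ and compare it to the marginal $q_T(z_T) = \int q_0(x') q(z_T \mid x')\,dx'$ through Jensen's inequality applied to the logarithm. This is a purely information-theoretic / Gaussian computation: neither the neural network approximation class nor the boundedness assumption $C_{q_0}^{-1} \le q_0 \le C_{q_0}$ should enter, which is consistent with the claim that the bound depends only on $\{\alpha_T, d\}$.

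First I would rewrite $q_T(z_T) = \mathbb{E}_{x' \sim q_0}[q(z_T \mid x')]$ by the law of total probability (the ``Bayes' rule'' ingredient flagged in the hint) and apply Jensen's inequality to the concave $\log$, obtaining
\begin{align*}
-\log q_T(z_T) \;\le\; -\mathbb{E}_{x' \sim q_0}\bigl[\log q(z_T \mid x')\bigr].
\end{align*}
Substituting into the KL definition and taking the outer expectation with respect to $z_T \sim q(z_T \mid x)$ yields the mixture-free upper bound
\begin{align*}
D_{KL}\bigl(q(z_T \mid x)\,\|\,q_T(z_T)\bigr) \;\le\; \mathbb{E}_{z_T \mid x}\bigl[\log q(z_T \mid x)\bigr] \;-\; \mathbb{E}_{x' \sim q_0}\mathbb{E}_{z_T \mid x}\bigl[\log q(z_T \mid x')\bigr].
\end{align*}

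The second step is a short Gaussian cross-entropy computation. Because $q(z_T \mid x)$ and $q(z_T \mid x')$ share the same covariance $(1-\alpha_T) I_d$, the log-normalizing constants cancel and only the difference $\|z_T - \sqrt{\alpha_T}\,x'\|^2 - \|z_T - \sqrt{\alpha_T}\,x\|^2$ remains, weighted by $1/(2(1-\alpha_T))$. Expanding the squares, using $\mathbb{E}[z_T \mid x] = \sqrt{\alpha_T}\,x$ and $\mathbb{E}[\|z_T - \sqrt{\alpha_T}\,x\|^2 \mid x] = d(1-\alpha_T)$, collapses the right-hand side to
\begin{align*}
\frac{\alpha_T}{2(1-\alpha_T)}\,\mathbb{E}_{x' \sim q_0}\bigl[\|x - x'\|^2\bigr].
\end{align*}
Since $x, x' \in [-1,1]^d$ we have $\|x-x'\|^2 \le 4d$, and keeping track of the numerical constants together with the loose inequality $-\log(1-\alpha_T) \le \alpha_T/(1-\alpha_T)$ used in the routing delivers a bound of the form $C\,d\,\alpha_T/(1-\alpha_T)$ with $C = \tfrac{5}{2}$.

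There is no substantive obstacle here: the proof is almost entirely elementary Gaussian algebra. The only care points are (i) applying Jensen in the correct direction on the mixture density, (ii) verifying that the two normalization constants really do cancel before taking expectations, and (iii) being consistent with signs when expanding the quadratic forms. The independence of the final bound from $C_{q_0}$ is in fact an internal check on the derivation, since the Jensen step is precisely what absorbs the $q_0$-density into an expectation over it, leaving only an $\{\alpha_T, d\}$-dependent constant through the compact support of $\Omega$.
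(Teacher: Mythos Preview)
Your approach is essentially identical to the paper's: Jensen on the mixture $q_T(z_T)=\mathbb{E}_{x'}[q(z_T\mid x')]$, followed by the Gaussian cross-entropy computation that collapses to $\frac{\alpha_T}{2(1-\alpha_T)}\mathbb{E}_{x'}\|x-x'\|^2$. Two small remarks. First, the ``loose inequality $-\log(1-\alpha_T)\le \alpha_T/(1-\alpha_T)$'' you mention plays no role here: since both Gaussians share covariance $(1-\alpha_T)I_d$, the log-normalizers cancel exactly and no such term ever appears. Second, your direct bound $\|x-x'\|^2\le 4d$ actually yields the constant $2$, not $\tfrac{5}{2}$; the paper arrives at $\tfrac{5}{2}$ via the slightly looser bias--variance split $\mathbb{E}_{x'}\|x-x'\|^2=\|x-\mathbb{E}x'\|^2+\operatorname{tr}\Sigma\le 4d+d=5d$. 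Either way the stated inequality holds.
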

\begin{remarks}
   Note that the KL divergence decreases as $\alpha_T \rightarrow 0$. Since $\{\alpha_t \coloneq\prod^T_{t=1} 1-\beta_t\}$ is designed to decays exponentially with respect to the time step $T$, the conditional distribution $q\left(z_T \mid x\right)$ becomes increasingly indistinguishable from the marginal $q_T\left(z_T\right)$. In the limit as $T \rightarrow \infty, q_T\left(z_T\right)$ converges to a standard Gaussian noise distribution, and the KL divergence vanishes. This reflects the intuition that $z_T$ becomes asymptotically independent of the data $x$, representing pure noise.
\end{remarks}

\hspace*{\fill}
\begin{flushleft}
  \textbf{{Distance between $\tilde p(z_{t-1} \mid z_t)$ and $q(z_{t-1} \mid z_t, x)$}}
\end{flushleft}

The primary objective of this analysis is considered to bound the KL divergence $D_{KL}\left[q\left(z_{t-1} \mid z_t, x\right) \| \tilde p\left(z_{t-1} \mid z_t\right)\right]$. Based on the construction of the diffusion model and the domain of $z_t$, the analysis proceeds as follows.
\begin{align*}
      &E_{\mathbf{X}\sim q_0} E_{\mathbf{Z}_t \mid \mathbf{X}} D_{KL}\left[q\left(z_{t-1} \mid z_t, x\right) \|\tilde p\left(z_{t-1} \mid z_t\right)\right]\\
      =&\frac{1}{2 \sigma_{p,t}^2}E_{\mathbf{Z}_t} E_{\mathbf{X} \mid \mathbf{Z}_t}\left|\frac{1-\alpha_{t-1}}{1-\alpha_t} \sqrt{1-\beta_t} z_t+\frac{\sqrt{\alpha_{t-1}} \beta_t}{1-\alpha_t} x-\tilde{\phi}_t\left(z_t\right)\right|^2\\
    \precsim&\frac{1}{2 \sigma_{p,t}^2}(\left\{\underbrace{\int_{\left|z_t\right| \leq(\sqrt{2\xi}+\sqrt{d}+1)\sqrt{(1-{\alpha}_t)}}\left|f_\rho(z_t)-\tilde \phi_t\left(z_t\right)\right|^2d{z_t}}_{\textbf{I}}\right.\\ 
    &+\underbrace{\int_{\left|z_t\right| > (\sqrt{2\xi}+\sqrt{d}+1)\sqrt{(1-{\alpha}_t)}}\left|f_\rho(z_t)-\tilde \phi_t\left(z_t\right)\right|^2d{z_t}}_{\textbf{II}}\\
    &\left.+\underbrace{E_{\mathbf{Z}_t} E_{\mathbf{X} \mid \mathbf{Z}_t}\left|\frac{1-\alpha_{t-1}}{1-\alpha_t} \sqrt{1-\beta_t} z_t+\frac{\sqrt{\alpha_{t-1}} \beta_t}{1-\alpha_t} x-f_\rho\left(z_t\right)\right|^2}_{\textbf{III}}\right\}),\\    
\end{align*}  
where $f_\rho\in\mathbb{R}^d$ is defined as the Bayes predictor, 
\begin{equation*}
    f_\rho\left(z_t\right)=\int\left(\frac{1-\alpha_{t-1}}{1-\alpha_t} \sqrt{1-\beta_t} z_t+\frac{\sqrt{\alpha_{t-1}} \beta_t}{1-\alpha_t} x\right) \cdot q\left(x \mid z_t\right) d x.
\end{equation*}
Note that by the Bayes rule, it can be further expressed as a rational function of the form $f_\rho\coloneq \frac{f_1}{f_2}$, where
\begin{align}
\label{eq:ddpmf1}
f_1=&\int\left(\frac{1-\alpha_{t-1}}{1-\alpha_t} \sqrt{1-\beta_t} \mathbf{z}_t+\frac{\sqrt{\alpha_{t-1}} \beta_t}{1-\alpha_t} \mathbf{x}\right) \cdot q_0(\mathbf{x}) \cdot q\left(\mathbf{z}_t \mid \mathbf{x}\right)d \mathbf{x},\\
\label{eq:ddpmf2}
f_2=&q\left(\mathbf{z}_t\right)=\int q_0(\mathbf{x})q(\mathbf{z}_t \mid \mathbf{x}) d\mathbf{x}.
\end{align}

Part (I) focuses on approximating the ratio-type function $f_\rho = \frac{f_1}{f_2}$ when the input lies within the vicinity of the target distribution’s support. We conduct this analysis by extending the approximation strategy for rational functions established in Theorem~\ref{thm:f1/f2} consistent with the diffusion forward process’s noise structure. The formal result is stated below.
 \begin{prop}
 \label{prop:z_tmin}
     At step t ($1\leq t\leq T$), there exists a SignReLU network $\NN(\CW,\CL,\mathcal{M})$ with depth $\CL=7$, width $\CW= O(n+9d)$ and parameter norm $\mathcal{M}>0$ and we define $\tilde \phi_t \in \NN(\CW,\CL,\mathcal{M})$ following Eq.\ref{eq:ddpm2}, such that for each step t in backward process,  
   \begin{align*}
    \int_{\left|z_t\right| \leq (\sqrt{2\xi}+\sqrt{d}+1)\sqrt{(1-{\alpha}_t)}} \Bigl|f_\rho-\tilde \phi_t\Bigr|^2 dz_t \precsim \frac{e^{2\xi}}{\mathcal{M}^2} \cdot n^{-1-\frac{3}{d}}.
   \end{align*}
   \end{prop}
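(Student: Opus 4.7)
The plan is to reduce Proposition \ref{prop:z_tmin} to Theorem \ref{thm:f1/f2} by rewriting the Bayes predictor $f_\rho$ as a ratio of functions that, after a harmless cancellation, lie in the class $\mathcal{S}$. Expanding the Gaussian transition
\[
q(z_t \mid x) \;\propto\; \exp\!\Bigl(-\tfrac{\|z_t\|^2}{2(1-\alpha_t)}\Bigr)\cdot\exp\!\Bigl(\tfrac{\sqrt{\alpha_t}\,z_t^{\top} x}{1-\alpha_t}\Bigr)\cdot\exp\!\Bigl(-\tfrac{\alpha_t\|x\|^2}{2(1-\alpha_t)}\Bigr),
\]
I would absorb the $x$-only exponential into a new weight $g(x):=q_0(x)\exp(-\alpha_t\|x\|^2/(2(1-\alpha_t)))$ and recognise the interaction term $\exp(\sqrt{\alpha_t}\,z_t^{\top} x/(1-\alpha_t))$ as an admissible kernel of the form $\phi(z_t^{\top}Ax)$ with $\phi=\exp$ and $A=(\sqrt{\alpha_t}/(1-\alpha_t))I$, whose $\|\phi''\|_\infty$ is finite on the compact range of interest. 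The remaining $z_t$-only factor is not bilinear in $(z_t,x)$, but it appears identically in $f_1$ and $f_2$ and cancels in the ratio. Using the decomposition
\[
f_1 \;=\; \tfrac{1-\alpha_{t-1}}{1-\alpha_t}\sqrt{1-\beta_t}\, z_t\, f_2 \;+\; \tfrac{\sqrt{\alpha_{t-1}}\beta_t}{1-\alpha_t}\,f_1^{\star},
\]
where $f_1^{\star}(z_t)=\int x\,q_0(x)q(z_t\mid x)\,dx$ has coordinates in $\mathcal{S}$ after absorbing each $x_k$ into $g$, reduces the task to approximating the ratio $f_1^{\star}/f_2$ coordinatewise, plus an affine skip for the $z_t$-linear piece.

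\textbf{Network construction and error propagation.} Applying Proposition \ref{prop:kernel} to each component of $f_1^{\star}$ and to $f_2$ yields shallow SignReLU networks $\hat f_1^{\star},\hat f_2\in\NN(O(n),1,\mathcal{M})$ with pointwise error $\precsim \mathcal{M}^{-1}n^{-1/2-3/(2d)}$. Stacking $d$ parallel division gates from Lemma \ref{lemma:f1/f2sign}, each of width $9$ and depth $6$, and adding the $z_t$ skip gives $\tilde\phi_t\in\NN(O(n+9d),7,\mathcal{M})$, matching the claimed architecture. To propagate the numerator/denominator error through the division, I would use the standard identity
\[
\Bigl|\tfrac{f_1^{\star}}{f_2}-\tfrac{\hat f_1^{\star}}{\hat f_2}\Bigr| \;\leq\; \tfrac{|f_1^{\star}-\hat f_1^{\star}|}{|\hat f_2|} \;+\; \tfrac{|f_1^{\star}|\,|f_2-\hat f_2|}{|f_2|\,|\hat f_2|},
\]
which requires a uniform lower bound on $f_2$ over the truncation region $\mathcal{R}_\xi:=\{|z_t|\leq(\sqrt{2\xi}+\sqrt{d}+1)\sqrt{1-\alpha_t}\}$. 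Combining $q_0\geq C_{q_0}^{-1}$ with the worst-case estimate $\|z_t-\sqrt{\alpha_t}x\|^2\lesssim \xi(1-\alpha_t)$ on $\mathcal{R}_\xi$ yields $f_2(z_t)\gtrsim e^{-\xi}$, while the matching upper bound is immediate from boundedness of $q_0$. Invoking Lemma \ref{lemma:f1/f2sign} with $c=e^{-\xi}$ produces a pointwise error of order $e^{\xi}\mathcal{M}^{-1}n^{-1/2-3/(2d)}$, and squaring followed by integration over $\mathcal{R}_\xi$ (whose volume depends only polynomially on $\xi$ and $1-\alpha_t$, absorbed into the $\precsim$ constant) delivers the stated bound $e^{2\xi}\mathcal{M}^{-2}n^{-1-3/d}$.

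\textbf{Main obstacle.} The principal difficulty is denominator control: the bound $f_2\gtrsim e^{-\xi}$ is essentially sharp in the Gaussian-tail regime, so extending the truncation region costs an $e^{\xi}$ factor in the division-gate constant, which then doubles to $e^{2\xi}$ after squaring. This explains both why the proposition is restricted to $\mathcal{R}_\xi$ rather than all of $\mathbb{R}^d$ and why a naive direct invocation of Theorem \ref{thm:f1/f2}, whose constant $C(c,C)$ treats $c$ as dimensionless, must be refined to track the $\xi$-dependence explicitly. A secondary but essential point is the clean cancellation of the $z_t$-only Gaussian factor between numerator and denominator; without this cancellation neither $f_1$ nor $f_2$ would sit in $\mathcal{S}$, and Proposition \ref{prop:kernel} would not apply.
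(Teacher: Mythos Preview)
Your proposal is correct and shares the paper's overall architecture: approximate numerator and denominator by shallow SignReLU nets via Proposition~\ref{prop:kernel}, feed the results into $d$ parallel division gates from Lemma~\ref{lemma:f1/f2sign}, and control the division constant through the lower bound $f_2\gtrsim e^{-\xi}$ on the truncation ball, then square and integrate. The final complexity $(\CL,\CW)=(7,O(n+9d))$ and the $e^{2\xi}\mathcal{M}^{-2}n^{-1-3/d}$ rate coincide.

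Where you differ is in the mechanism for placing the Gaussian integrals inside $\mathcal{S}$. The paper works with the full squared distance $\mu(z_t,x)=\|\sqrt{\alpha_t}x-z_t\|^2/(2(1-\alpha_t))$, applies Lemma~\ref{lemma:convex-hull} with $\psi=\exp(-\cdot)$ directly to $\mu$, and asserts that the resulting integral representation lies in~$\mathcal{S}$; this last claim is somewhat loose, because $\mu$ is quadratic rather than bilinear in $z_t$, so the ReLU atoms $(\mu-t)_+$ are not immediately of the ridge form required by $\mathcal{B}$. Your route---factoring $q(z_t\mid x)$ into a $z_t$-only piece that cancels in the ratio, an $x$-only piece absorbed into $g$, and a genuinely bilinear interaction $\exp(\sqrt{\alpha_t}\,z_t^\top x/(1-\alpha_t))$---lands the reduced integrands squarely in the defining form of $\mathcal{S}$ with $\phi=\exp$ and $A=(\sqrt{\alpha_t}/(1-\alpha_t))I$, at the cost of having to track that $\|\phi''\|_\infty$ is only finite on the bounded range of $z_t^\top Ax$ (which is harmless since Lemma~\ref{lemma:convex-hull} only integrates $\psi''$ over $[-c,c]$). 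Your additional observation that $f_\rho=A z_t+B\,f_1^\star/f_2$, with the affine part routed through a skip connection, is also not made explicit in the paper but is a clean way to isolate the part that actually needs a division gate. One minor bookkeeping point: after the cancellation you are really approximating the reduced quantities $\tilde f_1^\star,\tilde f_2$ (without the $e^{-\|z_t\|^2/2(1-\alpha_t)}$ prefactor), so the denominator lower bound should be stated for $\tilde f_2$; since the cancelled factor is $\ge 1$ on $\mathcal{R}_\xi$ only at $z_t=0$ and otherwise $\le 1$, the safe route is $\tilde f_2=(2\pi(1-\alpha_t))^{d/2}e^{\|z_t\|^2/(2(1-\alpha_t))}f_2\ge (2\pi(1-\alpha_t))^{d/2}f_2\gtrsim e^{-\xi}$, which preserves your estimate.
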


Part (II) addresses the scenario when the latent variable of $f_\rho$ lies far beyond the support of the target distribution $q_0$. 
\begin{prop}
\label{prop:z_tmax}
Suppose $\tilde{\phi}_t$, as defined in Eq.\ref{eq:ddpm2}, belong to the SignReLU network class $\NN(\CW,\CL,\mathcal{M})$ with depth $\CL=7$, width $\CW = O(n+9d)$, and parameter norm $\mathcal{M} >0$. Then, for sufficiently large $\|z_t\|_2$, the following bound holds,
\begin{align*}
   \int_{\left|z_t\right| > (\sqrt{2\xi}+\sqrt{ d}+1)\sqrt{(1-{\alpha}_t)}}\left|f_\rho-\tilde \phi_t\right|^2d{z_t}\precsim \xi e^{-\xi}.
\end{align*}
\end{prop}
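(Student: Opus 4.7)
The plan rests on three ingredients: the bounded-output modification of $\tilde\phi_t$, which forces $\|\tilde\phi_t(z_t)\|_2\le M_0$ once $\|z_t\|_2$ exceeds the threshold $R=(\sqrt{2\xi}+\sqrt{d}+1)\sqrt{1-\alpha_t}$; the at-most-linear growth of the Bayes predictor $f_\rho$ in $z_t$; and sharp Gaussian tail moment estimates on the marginal $q(z_t)$ inherited from the forward process. (The integrand is understood to carry a factor $q(z_t)$, as is implicit from the expectation $\BE_{\mathbf{Z}_t}\BE_{\mathbf{X}\mid\mathbf{Z}_t}[\,\cdot\,]$ that produced the decomposition, since the pure Lebesgue tail integral of $\|z_t\|_2^2$ would be infinite.)

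First I would establish a pointwise bound on the tail set $\{\|z_t\|_2>R\}$. From the explicit Bayes formula
\[
f_\rho(z_t)=\frac{(1-\alpha_{t-1})\sqrt{1-\beta_t}}{1-\alpha_t}\,z_t+\frac{\sqrt{\alpha_{t-1}}\beta_t}{1-\alpha_t}\,\BE[\mathbf{X}\mid\mathbf{Z}_t=z_t],
\]
together with Assumption~\ref{assump:target1} (which gives $\|\BE[\mathbf{X}\mid\mathbf{Z}_t]\|_2\le\sqrt{d}$) and the output truncation of $\tilde\phi_t$, I obtain
\[
\|f_\rho(z_t)-\tilde\phi_t(z_t)\|_2^2\le C(\alpha_t,\beta_t,d,M_0)\bigl(\|z_t\|_2^2+1\bigr)\quad\text{on}\quad\{\|z_t\|_2>R\}.
\]

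Next I would evaluate the weighted tail integral. Using the concentration estimate cited just before Theorem~\ref{thm:approx}, namely $\BP(\|\mathbf{Z}_t\|_2>R)\le e^{-\xi}$, together with the layer-cake identity
\[
\BE\bigl[\|\mathbf{Z}_t\|_2^2\,\mathbf{1}_{\{\|\mathbf{Z}_t\|_2>R\}}\bigr]=R^2\,\BP(\|\mathbf{Z}_t\|_2>R)+\int_R^\infty 2r\,\BP(\|\mathbf{Z}_t\|_2>r)\,dr,
\]
and the sub-Gaussian tail of $\mathbf{Z}_t$ stemming from the compact support of $\mathbf{X}$ and the Gaussian kernel $q(z_t\mid x)$, each term is controlled by a multiple of $R^2 e^{-\xi}\precsim \xi e^{-\xi}$ after absorbing the dimensional factors into the implicit constant. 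Multiplying by $C(\alpha_t,\beta_t,d,M_0)$ from the pointwise step yields the advertised rate.

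The main technical subtlety is retaining the sharp $e^{-\xi}$ exponent: a naive Cauchy–Schwarz decoupling of the indicator from $\|\mathbf{Z}_t\|_2^2$ only yields $e^{-\xi/2}$, which is insufficient. I would therefore use either the layer-cake decomposition above or the chi-square identity $\BE[\|W\|_2^2\,\mathbf{1}_{\{\|W\|_2>r\}}]=d\,\BP(\chi^2_{d+2}>r^2)$ applied to the standardized Gaussian factor $W=(\mathbf{Z}_t-\sqrt{\alpha_t}\mathbf{X})/\sqrt{1-\alpha_t}$, which directly converts $R^2\sim 2\xi(1-\alpha_t)$ into the factor of order $\xi$ multiplying $e^{-\xi}$. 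Once this refinement is in hand, the remaining pieces—the linear bound on $f_\rho$ via boundedness of $\mathbf{X}$ and the truncation bound on $\tilde\phi_t$—are routine, so the Gaussian tail-moment calculus really is the heart of the estimate.
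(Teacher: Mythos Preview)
Your proposal is correct and lands on the same three-step skeleton as the paper: a pointwise quadratic bound $\|f_\rho-\tilde\phi_t\|_2^2\precsim \|z_t\|_2^2+1$ on the tail region (using the bounded-output modification of $\tilde\phi_t$), followed by Gaussian tail-moment control to produce the $\xi e^{-\xi}$ rate. The one substantive difference is in how the linear growth of $f_\rho$ is obtained. The paper separately bounds the numerator $|f_1(z_t)|\precsim \|z_t\|_2\,e^{-\xi}$ and the denominator $f_2(z_t)\gtrsim e^{-\xi}$ via the Gaussian kernel $q(z_t\mid x)$ on the tail, then divides so the exponential factors cancel and $|f_\rho|\precsim \|z_t\|_2$ survives. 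Your route---reading off $f_\rho(z_t)=A z_t+B\,\BE[\mathbf X\mid\mathbf Z_t=z_t]$ and invoking $\|\BE[\mathbf X\mid\mathbf Z_t]\|_2\le\sqrt d$ from Assumption~\ref{assump:target1}---is more direct and avoids the matching-exponent bookkeeping entirely. You are also more explicit than the paper about the tail-moment step: the paper simply asserts that ``the tail of the Gaussian density\ldots decays exponentially'' and writes down $(2\xi+1)e^{-\xi}$, whereas your layer-cake/chi-square argument actually justifies why the $e^{-\xi}$ exponent is preserved rather than halved. Your parenthetical about the implicit $q(z_t)$ weight is well taken; the paper's $dz_t$ notation is ambiguous on this point but the phrase ``dominated by the tail of the Gaussian density'' confirms that the marginal is indeed meant.
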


Part(III) is independent with the networks $\phi_t$ and follow the definition of the Bayes predictor $f_\rho$,
\begin{equation*}
    E_{\mathbf{Z}_t} E_{\mathbf{X} \mid \mathbf{Z}_t}(\frac{1}{2 \sigma_{p,t}^2}(\left|\frac{1-\alpha_{t-1}}{1-\alpha_t} \sqrt{1-\beta_t} z_t+\frac{\sqrt{\alpha_{t-1}} \beta_t}{1-\alpha_t} x-f_\rho\left(z_t\right)\right|^2))\leq C(C_{q_0},\sigma_{p,t}, \alpha_t).
\end{equation*}
The proofs establishing bounds for each component are provided in Appendix \ref{sec:proof of p/q}.

\hspace*{\fill}
\begin{flushleft}
  \textbf{Sketch proof of Theorem \ref{thm:approx}}
\end{flushleft}
The approximation error associated with the network $\phi$ is bounded above by a quantity derived from Propositions\ref{prop:z_tmax} and \ref{prop:z_tmin}. For all $t\in T$, the parameter $\xi \asymp \log \mathcal{M}$ is chosen such that the region
$
\left|z_t\right|\in [0,(\sqrt{2 \xi}+\sqrt{d}+1) \sqrt{1-\alpha_t}]
$
captures the primary influence of the network structure, while the complementary region
$
\left|z_t\right|\in ((\sqrt{2 \xi}+\sqrt{d}+1) \sqrt{1-\alpha_t},\infty)
$
contributes negligibly as $\mathcal{M} \rightarrow \infty$ due to exponential decay.
This choice of $\xi$ is logically justified, as it effectively excludes extreme realizations of $z_t$ that lie in the tails of the distribution. A complete proof of this result is provided in Appendix \ref{sec:proof of approx}.

\subsection{Estimation Error}
This section analyzes the estimation error in Eq.\ref{eq:excess}, which quantifies the error incurred when a SignReLU network approximates the target distribution from a finite sample set. We derive an upper bound for the estimation error using tools from statistical learning theory. Our main result is stated in the following theorem.
\begin{theorem}
\label{thm:esti}
 Consider the DDPM architecture under Assumption~\ref{assump:target1}, where the target data distribution is specified as $\mathbf{X} \sim q_0$. Given $\left\{x_i\right\}_{i=1}^{m}$ be $m$ i.i.d training samples drawn from $q_0$, let $\hat p_0, \tilde p_0$ be the generated distributions defined by \eqref{p0}, corresponding to the SignReLU neural network families $\{\tilde \phi_t\}\in \CN\CN(\CW,\CL,\mathcal{M})$ (defined by \eqref{eq:ddpm1}) and $\{\hat \phi_t\}\in \CN\CN(\CW,\CL,\mathcal{M})$( defined by \eqref{eq:ddpm2}), respectively. Here $\CN\CN(\CW,\CL,\mathcal{M})$ denotes a SignReLU network class with depth $\CL = 7$, width $\CW = O(n+9d)$ and parameter norm $\mathcal{M}>0$. Then with probability at least $1 - 2\delta$,
\begin{align*}
          &D_{KL}\left(q_0(x) \| \hat{p}_0(\hat z_0)\right)-D_{KL}\left(q_0(x) \| \tilde{p}_0(x)\right)\\
		&\precsim T\left(\mathcal{M}^2+\log \mathcal{M}+T^3\mathcal{M}^2\log \mathcal{M}\right)(\sqrt{\frac{7n^2}{m}}+\sqrt{\frac{2\log \frac{1}{\delta}}{ m}}).
\end{align*}
\end{theorem}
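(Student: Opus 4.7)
The plan is to treat this as a standard M-estimator excess risk analysis for the empirical ELBO objective \eqref{eq:ddpm1}. The first step is to convert the excess KL gap into an empirical-process deviation. Since
\[
D_{KL}(q_0\|\hat p_0) - D_{KL}(q_0\|\tilde p_0) = \mathbb{E}_{q_0}[\log\tilde p_0(X) - \log\hat p_0(X)],
\]
and $\{\tilde\phi_t\}$ and $\{\hat\phi_t\}$ optimize the population and empirical versions of the same surrogate (the ELBO, which up to additive constants reduces to the squared-residual sum $\sum_t \tfrac{1}{2\sigma_{p,t}^2}\|z_{t-1}-\phi_t(z_t)\|^2$), the classical two-step trick yields
\[
R(\hat\phi) - R(\tilde\phi) \leq 2\sup_{\{\phi_t\} \in \CN\CN(\CW,\CL,\mathcal{M})^{T}} \bigl|R(\{\phi_t\}) - \hat R(\{\phi_t\})\bigr|,
\]
where $R$ and $\hat R$ denote the population and empirical risks. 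The excess KL gap is then proportional to this uniform deviation after the surrogate-to-KL conversion.

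The second step is to bound this uniform deviation via empirical-process tools. For each $t$, the tail-truncation at $\|z_t\| \precsim \sqrt{1-\alpha_t}(\sqrt{\log\mathcal{M}} + \sqrt{d})$ (with $\xi\asymp\log\mathcal{M}$, as in the approximation analysis) keeps the per-sample loss $|z_{t-1}-\phi_t(z_t)|^2/(2\sigma_{p,t}^2)$ uniformly bounded and $O(\mathcal{M})$-Lipschitz in $\phi_t$. A pseudo-dimension / covering-number bound for the SignReLU class $\CN\CN(\CW,\CL,\mathcal{M})$ of width $O(n+9d)$ and depth $7$ gives complexity of order $n^2$, since each scalar SignReLU unit is piecewise-rational with a small number of pieces and the parameter-norm bound $\mathcal{M}$ enters only logarithmically. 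Combined with Talagrand's contraction lemma, this yields a per-step Rademacher complexity of order $\mathcal{M}\sqrt{n^2/m}$, matching the $\sqrt{7n^2/m}$ factor. A McDiarmid/Hoeffding concentration step then contributes the $\sqrt{\log(1/\delta)/m}$ tail with probability at least $1-\delta$.

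The third step aggregates the per-step estimates across the $T$ reverse transitions. A union bound over $t=1,\ldots,T$ contributes one factor of $T$, and the per-step $L^2$ errors on $\hat\phi_t-\tilde\phi_t$ are converted into a KL bound on the full joint $\hat p_0$ defined in \eqref{p0} via the Gaussian-transition form $\hat p(\hat z_{t-1}\mid\hat z_t)\sim\CN(\hat\phi_t(\hat z_t),\sigma_{p,t}^2 \mathbf{I}_d)$. The bounded Gaussian log-density gap produces the $\mathcal{M}^2$ factor, while cumulative variance amplification across the telescoping product yields the composite $T^3\mathcal{M}^2\log\mathcal{M}$ prefactor. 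A union bound over the two concentration events (empirical-process deviation and per-sample tail truncation) yields the $1-2\delta$ probability.

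The main obstacle will be the careful propagation of per-step errors into a KL bound on the full reverse-process distribution, producing the composite $T^3\mathcal{M}^2\log\mathcal{M}$ prefactor. This requires an inductive estimate showing that error amplification across $T$ composed Gaussian-mixture transitions remains polynomial in $T$, and that the chain-rule expansion of the KL between $\hat p_0$ and $\tilde p_0$ separates cleanly into per-step discrepancies multiplied by controllable Gaussian moment factors. A secondary technical difficulty is establishing a pseudo-dimension bound for the SignReLU class that scales as $n^2$ in the width (rather than higher), which hinges on the piecewise-rational structure of the SignReLU activation together with the built-in parameter-norm constraint $\mathcal{M}$.
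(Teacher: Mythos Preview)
Your proposal diverges from the paper's proof in a way that leaves a real gap. The paper does \emph{not} pass through the per-step squared-residual (ELBO) surrogate at all. Instead it works directly with the function classes $\mathcal{H}=\{x\mapsto\log\tilde p_0(x)\}$ and $\mathcal{G}=\{\hat z_0\mapsto\log\hat p_0(\hat z_0)\}$, using the three-term decomposition \eqref{eq:esti}. The two concentration terms I and II are handled by Dudley's entropy integral after first establishing uniform $L^\infty$ bounds $B_{\tilde p}=\|\log\tilde p_0\|_\infty\precsim T(\mathcal{M}^2+\log\mathcal{M})$ and $B_{\hat p}=\|\log\hat p_0\|_\infty\precsim T^4\mathcal{M}^2\log\mathcal{M}$ (Proposition~\ref{prop:logbound}); the covering number of the log-density classes is then controlled by a parametric bound on the underlying SignReLU class (Proposition~\ref{prop:cover_num}), yielding $\log\mathcal{N}\precsim n^2\log(B/\epsilon)$. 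The composite prefactor you are after is simply $B_{\tilde p}+B_{\hat p}$; in particular the $T^3\mathcal{M}^2\log\mathcal{M}$ term is exactly the $L^\infty$ bound on $\log\hat p_0$, obtained by an explicit radius recursion $R_{t-1}\le a_tR_t+b_t+\sigma_{p,t}(\sqrt d+\sqrt{2\xi})$ along the backward chain (Proposition~\ref{prop:logbound-fixed}), not by a KL chain-rule/variance-amplification argument.

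The gap in your plan is the ``surrogate-to-KL conversion.'' You correctly write $D_{KL}(q_0\|\hat p_0)-D_{KL}(q_0\|\tilde p_0)=\mathbb{E}_{q_0}[\log\tilde p_0-\log\hat p_0]$ and invoke the ERM two-point inequality, but then switch to bounding the uniform deviation of the \emph{per-step} loss $\tfrac{1}{2\sigma_{p,t}^2}|z_{t-1}-\phi_t(z_t)|^2$. That loss is the ELBO surrogate, which only \emph{upper-bounds} $-\log p_0(x)$; controlling its empirical-process fluctuation does not control the fluctuation of $\log p_0(\phi;\cdot)$ itself, and the excess KL is a difference of two exact log-likelihoods, not of two ELBOs. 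To make the ERM route work you would still need a uniform bound on $|\log p_0(\phi;x)|$ over the class and a covering-number bound for $\{x\mapsto\log p_0(\phi;x)\}$ --- which is precisely what Propositions~\ref{prop:logbound} and~\ref{prop:cover_num} supply, and from which the stated prefactor arises directly. Your step~3 (inductive propagation of per-step $L^2$ errors through a KL chain rule) is therefore aimed at the wrong quantity: once you have $B_{\tilde p}$ and $B_{\hat p}$, no further telescoping or amplification analysis is needed; the Rademacher/Dudley bound on each of terms I and II already carries the full prefactor.
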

To establish the theorem, the estimation error is analyzed by the following decomposition
\begin{align}
\label{eq:esti}
    &D_{KL}\left(q_0(x) \| \hat{p}_0(\hat z_0)\right)-D_{KL}\left(q_0(x) \| \tilde{p}_0(x)\right)\notag\\
        =& \underbrace{E_{x\sim q_0} \log \tilde{p}_0(x) - \frac{1}{m}\sum_{i=1}^m \log \tilde{p}_0({x_i})}_{\mathrm{I}}+\underbrace{\frac{1}{m}\sum_{i=1}^m \log \hat{p}_0(\hat z_{0,i})-E_{\hat z_0\sim \hat p_0} \log \hat p_0(\hat z_0)}_{\mathrm{II}}\notag\\
        &+\underbrace{\frac{1}{m}\sum_{i=1}^m \log \tilde{p}_0(x_i)-\frac{1}{m}\sum_{i=1}^m \log \hat{p}_0(\hat z_{0,i})}_{\mathrm{III}}.
\end{align}
This decomposition of the estimation error indicates that the statistical error, denoted as $E[d_\phi(x,x_m)]\coloneq E_{\mathbf{X}} \log {p}_0(x) - \frac{1}{m}\sum_{i=1}^m \log{p}_0(x_i)$, is its primary component. A key requirement for this analysis is the boundedness of $\|\log\tilde p_0(x)\|_\infty$, which is necessary for the ideal reverse process, and $\|\log\hat p_0(\hat z_0)\|_\infty$, which arises in the practical implementation of the backward process. 
\begin{prop}
\label{prop:logbound}
Suppose we implement $\phi_t\in\CN\CN(\CW,\CL,\mathcal{M})$with depth $\CL = 7$, width $\CW = O(n+9d)$ and parameter norm $\mathcal{M}>0$, we have
   \begin{align*}
       &B_{\tilde p}\coloneq\|\log \tilde p_0(x)\|_\infty \leq O\left(T\left(\mathcal{M}^2+\log \mathcal{M}\right)\right),\\
       &B_{\hat p}\coloneq\|\log\hat p_0(\hat z_0)\|_\infty\leq O\left(T^4\mathcal{M}^2\log \mathcal{M}\right).
   \end{align*}
\end{prop}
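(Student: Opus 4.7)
The plan is to obtain two-sided bounds on $\log p_0(x)$ and $\log\hat p_0(\hat z_0)$ by exploiting the explicit Gaussian structure of the reverse transitions together with the parameter-norm constraint on each denoiser $\phi_t\in\NN(\CW,\CL,\mathcal{M})$. Starting from the marginalisation
\[
 p_0(x)=\int q_T(z_T)\,p(x\mid z_1)\prod_{t=2}^T p(z_{t-1}\mid z_t)\,dz_{1:T},
\]
the upper bound is immediate: since $p(z_{t-1}\mid z_t)\le(2\pi\sigma_{p,t}^2)^{-d/2}$, marginalising the chain yields $\log p_0(x)\le-\tfrac{d}{2}\log(2\pi\sigma_{p,1}^2)=O(\log\mathcal{M})$ once the noise schedule is related to $\mathcal{M}$. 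The nontrivial direction is to lower bound $p_0(x)$ uniformly in $x\in[-1,1]^d$; by Jensen's inequality applied to the reverse Markov chain one gets
\[
 \log p_0(x)\ \ge\ -\tfrac{d}{2}\log(2\pi\sigma_{p,1}^2)\ -\ \tfrac{1}{2\sigma_{p,1}^2}\,\mathbb{E}\bigl\|x-\phi_1(z_1)\bigr\|^{2},
\]
so the whole task reduces to controlling the second moment of $\phi_1(z_1)$ along the reverse chain.

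The second step propagates a moment bound across the $T$ reverse steps. Because every $\phi_t\in\NN(\CW,\CL,\mathcal{M})$ has its weight and bias norms controlled through the product bound $\mathcal{M}$ and SignReLU is $1$-Lipschitz, one derives a one-step recursion of the form $\mathbb{E}\|z_{t-1}\|^{2}\le C_t(\mathcal{M})\bigl(1+\mathbb{E}\|z_t\|^{2}\bigr)+d\sigma_{p,t}^{2}$, initialised at $z_T\sim\mathcal{N}(0,I_d)$. Combined with $\|x\|\le\sqrt{d}$ on the support of $q_0$, unrolling this recursion and substituting into the Jensen bound gives the desired estimate. For the expected optimizer $\tilde\phi_t$ defined in \eqref{eq:ddpm2}, the per-step amplification stays close to unity because $\tilde\phi_t$ acts as the Bayes denoiser (it is the population minimizer analysed in Theorem~\ref{thm:approx}), so the recursion produces only additive growth in $t$ and yields $B_{\tilde p}=O\bigl(T(\mathcal{M}^2+\log\mathcal{M})\bigr)$. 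For the empirical optimizer $\hat\phi_t$ from \eqref{eq:ddpm1} the Bayes-denoiser interpretation is unavailable and only the a priori parameter-norm constraint can be used; a careful accounting of how the Lipschitz and bias terms compound across $T$ steps, together with the $T$ normalisation terms of order $\log\mathcal{M}$ and the cubic-in-$T$ overhead needed to absorb the recursive blow-up under the cruder Lipschitz control, produces the worse rate $B_{\hat p}=O\bigl(T^{4}\mathcal{M}^2\log\mathcal{M}\bigr)$.

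The principal obstacle is the lower bound on $p_0$: a priori the reverse chain driven by an arbitrary $\phi\in\NN(\CW,\CL,\mathcal{M})$ could drift $z_1$ arbitrarily far from a given $x$, making $p(x\mid z_1)$ exponentially small, and naively unrolling the Lipschitz recursion would give an $\mathcal{M}^{2T}$ bound rather than the desired polynomial-in-$T$ rate. Avoiding this requires exploiting two structural facts: (i) the Gaussian noise injected at each reverse step regularises the chain so that $\mathbb{E}\|z_t\|^{2}$ stays comparable to $d$ along the ideal trajectory, and (ii) the population minimizer $\tilde\phi_t$ mimics the Bayes denoiser, forcing its outputs to remain bounded. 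For the empirical counterpart $\hat\phi_t$ neither fact is directly available; the extra polynomial-in-$T$ factor in $B_{\hat p}$ is precisely the price paid for relying only on the $\ell_{\infty}$ constraint on network parameters rather than on optimality.
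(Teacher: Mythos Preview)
Your outline has the right skeleton—upper bound via the Gaussian density cap, lower bound by controlling where the reverse chain lands—but the step you correctly flag as the ``principal obstacle'' is not actually resolved, and the paper handles it by mechanisms different from what you sketch. For $B_{\tilde p}$, your claim that ``the per-step amplification stays close to unity because $\tilde\phi_t$ acts as the Bayes denoiser'' is not justified: $\tilde\phi_t$ is only the population minimizer over $\NN(\CW,\CL,\mathcal{M})$, not the Bayes predictor, and nothing in its definition prevents its Lipschitz constant from being of order $\mathcal{M}$. The paper does not invoke any optimality of $\tilde\phi_t$ here at all. Instead it restricts the domain of integration to the high-probability region $|z_t|\le(\sqrt{2\xi}+\sqrt{d}+1)\sqrt{1-\alpha_t}$, uses the bounded-output modification so that $|\phi_t(z_t)|\le\mathcal{M}$ uniformly, and bounds the integrand pointwise; summing the $T$ quadratic exponents directly gives $T(\mathcal{M}^2+\xi)$ with $\xi\asymp\log\mathcal{M}$. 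No moment recursion is needed.

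For $B_{\hat p}$, your appeal to ``careful accounting'' does not explain how a recursion with per-step Lipschitz factor of order $\mathcal{M}$ produces polynomial rather than $\mathcal{M}^{T}$ growth. The paper's resolution is an extra structural constraint you do not mention: it imposes a per-step budget $\mathcal{M}_t\le\mathcal{M}^{1/((t+c_0)\log\mathcal{M})}$, so that $\prod_{k=2}^{T}a_k\le C(T+c_0)$ is only linear in $T$. It then runs a high-probability radius recursion $R_{t-1}\le a_tR_t+b_t+\sigma_{p,t}(\sqrt{d}+\sqrt{2\xi})$ from $R_T=\sqrt{d}+\sqrt{2\xi}$, shows $\mathbb{P}(|\hat{\mathbf Z}_1|\le R_1)\ge 1-(T{+}1)e^{-\xi}$ by a union bound, and lower-bounds $\hat p_0(\hat z_0)$ by restricting the $\hat z_1$-integral to this ball. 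The $T^4\mathcal{M}^2$ comes from squaring $a_1R_1$ after unrolling. Without the per-step norm assumption the argument collapses to exponential-in-$T$, so this is the missing ingredient in your plan.
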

An upper bound on the statistical error is derived in terms of the covering numbers associated with the relevant function classes.
\begin{mydef}[Covering number]
\label{def:cov}
    Let $(S, \rho)$ be a metric space, and let $T \subset S$. We say that $T^{\prime} \subset S$ is an $\alpha$-cover for $T$ if, for all $x \in T$, there exists $y \in T^{\prime}$ such that $\rho(x, y) \leq \alpha$. The $\alpha$-covering number of $(T, \rho)$, denoted $\mathcal{N}(\alpha, T, \rho)$ is the size of the smallest $\alpha$-covering.
\end{mydef}
Based on this definition, an upper bound on the statistical error of DDPM is obtained using the covering numbers of the function classes corresponding to $\log \tilde{p}_0(x)$ and $\log \hat{p}_0(\hat{z}_0)$.
\begin{prop}
    \label{prop:staterror}
Let $h(x) \coloneqq \log \tilde{p}_0(x)$ and $g(\hat{z}_0) \coloneqq \log \hat{p}_0(\hat{z}_0)$. Assume that the function classes $\mathcal{H}$ and $\mathcal{G}$ satisfy the uniform boundedness conditions $\sup_{h \in \mathcal{H}}\|h\|_{\infty} \precsim B_{\tilde p}$, $\sup_{g \in \mathcal{G}}\|g\|_{\infty} \precsim B_{\hat p}$. Then, following Eq.\ref{eq:esti}, with probability at least $1 - 2\delta$, where $\delta = \max{\delta_1, \delta_2}$, the estimation error satisfies
\begin{align*}
   &D_{KL}\left(q_0(x) \| \hat{p}_0(\hat z_0)\right)-D_{KL}\left(q_0(x) \| \tilde{p}_0(x)\right)\\
   \leq& E_{\mathbf{X}} \inf _{0<\xi_1< B_{\tilde p}}\left(4 \xi_1+\frac{12}{\sqrt{m}} \int_{\xi_1}^{ B_{\tilde p}} \sqrt{\log \mathcal{N}\left(\varepsilon, \mathcal{\hat H}, \|\cdot\|_{\infty}\right)} d \epsilon\right)+B_{\tilde p}\sqrt{\frac{\log \frac{1}{\delta_1}}{ m}}\\
    &+E_{\mathbf{\hat Z_0}} \inf _{0<\xi_2< B_{\hat p}}\left(4 \xi_2+\frac{12}{\sqrt{m}} \int_{\xi_2}^{ B_{\hat p}} \sqrt{\log \mathcal{N}\left(\varepsilon, \mathcal{\hat G}, \|\cdot\|_{\infty}\right)} d \epsilon\right)+B_{\hat p}\sqrt{\frac{\log \frac{1}{\delta_2}}{ m}},
\end{align*}
 where $\mathcal{\hat H}=\left\{\left(h\left(x_{1}\right), \ldots, h\left(x_{m}\right)\right): h \in \mathcal{H}\right\}$ and $\mathcal{\hat G}=\left\{\left(g\left(\hat z_{0,1}\right), \ldots, g\left(\hat z_{0,m}\right)\right): g \in \mathcal{G}\right\}$ for any i.i.d. samples $\left\{x_{i}\right\}_{i=1}^m$ from $q_0$ and $\left\{\hat z_{0,i}\right\}_{i=1}^m$ from $\hat p_0$. $\mathcal{N}\left(\varepsilon, \mathcal{\hat H}, \|\cdot\|_{\infty}\right)$ and $\mathcal{N}\left(\varepsilon, \mathcal{\hat G}, \|\cdot\|_{\infty}\right)$ are the $\epsilon$-covering number of $\mathcal{\hat H} \subseteq \mathbb{R}^d$ and $\mathcal{\hat G} \subseteq \mathbb{R}^d$ with respect to the $\|\cdot\|_{\infty}$ distance.  
\end{prop}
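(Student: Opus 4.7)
The plan is to bound each of the three pieces (I), (II), (III) in the decomposition \eqref{eq:esti} separately and then add up the bounds. Terms (I) and (II) are classical one-sided uniform deviations of an empirical mean from its expectation over the function classes $\mathcal{H}=\{\log\tilde p_0\}$ and $\mathcal{G}=\{\log\hat p_0\}$; these will be controlled by a symmetrization–chaining argument plus a concentration inequality. Term (III) is the sample-level gap between the two candidate log-likelihoods on, respectively, the training data and the generated data, and will be shown to be non-positive using the empirical optimality defining $\{\hat\phi_t\}$.

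For term (I), I would first apply the standard symmetrization inequality to obtain
\[
E_{\mathbf{X}}\sup_{h\in\mathcal{H}}\Bigl|E_{q_0}[h]-\tfrac{1}{m}\sum_{i=1}^m h(x_i)\Bigr|
\leq 2\, E_{\mathbf{X},\sigma}\sup_{h\in\mathcal{H}}\Bigl|\tfrac{1}{m}\sum_{i=1}^m\sigma_i h(x_i)\Bigr|,
\]
with $\sigma_i$ i.i.d. Rademacher. Since $\|h\|_\infty\leq B_{\tilde p}$ by Proposition \ref{prop:logbound}, Dudley's entropy integral applied to the finite-dimensional projection $\hat{\mathcal{H}}$ in the $\|\cdot\|_\infty$ metric gives the bound
\[
\inf_{0<\xi_1<B_{\tilde p}}\Bigl(4\xi_1+\tfrac{12}{\sqrt{m}}\int_{\xi_1}^{B_{\tilde p}}\sqrt{\log\mathcal{N}(\varepsilon,\hat{\mathcal{H}},\|\cdot\|_\infty)}\,d\varepsilon\Bigr).
\]
To pass from expectation to a high-probability statement I would invoke McDiarmid's inequality: replacing any single $x_i$ changes $\sup_h|E[h]-\tfrac{1}{m}\sum h(x_j)|$ by at most $2B_{\tilde p}/m$, which yields a deviation of order $B_{\tilde p}\sqrt{\log(1/\delta_1)/m}$. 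The argument for term (II) is entirely symmetric: treating $\{\hat z_{0,i}\}$ as i.i.d. samples from $\hat p_0$ and using $\|g\|_\infty\leq B_{\hat p}$ on $\mathcal{G}$, the same chaining plus McDiarmid steps yield the corresponding Dudley integral for $\hat{\mathcal{G}}$ together with tail $B_{\hat p}\sqrt{\log(1/\delta_2)/m}$.

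Finally, term (III) is handled by the defining property of $\{\hat\phi_t\}$ in \eqref{eq:ddpm1}: since $\hat p_0$ is the empirical log-likelihood maximizer over the training data,
\[
\tfrac{1}{m}\sum_{i=1}^m\log\hat p_0(x_i)\geq \tfrac{1}{m}\sum_{i=1}^m\log\tilde p_0(x_i),
\]
which, after identifying $\hat z_{0,i}$ with the trained-model counterparts of $x_i$ through the reverse process, shows that (III) is non-positive and therefore contributes nothing to the upper bound. Combining the bounds on (I) and (II) with $\delta=\max\{\delta_1,\delta_2\}$ and a union bound produces the claimed estimate.

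The main obstacle, in my view, is the step controlling term (III): the samples $\hat z_{0,i}$ do not literally coincide with the training sample $x_i$, so one cannot apply empirical optimality verbatim. Making this rigorous requires coupling the generated reverse-process output to the forward training samples (or, equivalently, re-expressing the empirical optimality in a way that matches the generated sample indices), after which the vanishing of term (III) becomes direct and the two Dudley-type bounds combine to give the stated probabilistic inequality.
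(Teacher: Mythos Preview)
Your treatment of terms (I) and (II) is essentially the paper's argument: the paper packages symmetrization, Dudley chaining on the projected class, and the high-probability tail into a single cited lemma (Lemma~\ref{lemma:statistical error}, taken from \cite{huang2022error}), then applies it to $\mathcal H$ and $\mathcal G$ separately, obtaining the two Dudley integrals plus the $B_{\tilde p}\sqrt{\log(1/\delta_1)/m}$ and $B_{\hat p}\sqrt{\log(1/\delta_2)/m}$ tails exactly as you wrote.

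Where you diverge from the paper is term (III). You argue (III)${}\le 0$ via the empirical optimality of $\{\hat\phi_t\}$ in \eqref{eq:ddpm1}, and you correctly flag the obstacle that empirical optimality compares $\sum_i\log\hat p_0(x_i)$ with $\sum_i\log\tilde p_0(x_i)$, whereas (III) involves $\sum_i\log\hat p_0(\hat z_{0,i})$ evaluated at the generated points. The paper does not use empirical optimality here at all: it simply invokes Proposition~\ref{prop:logbound}---the uniform $L^\infty$ bounds on $\log\tilde p_0$ and $\log\hat p_0$---and from those pointwise density bounds asserts $\tfrac{1}{m}\sum_i\log\tilde p_0(x_i)-\tfrac{1}{m}\sum_i\log\hat p_0(\hat z_{0,i})\le 0$. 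Thus the paper's route circumvents the $x_i$-versus-$\hat z_{0,i}$ mismatch you identified by comparing the pointwise bounds rather than appealing to the maximizer property; your proposed coupling argument is not the mechanism the paper relies on.
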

The detailed proofs of Proposition \ref{prop:logbound} and Proposition \ref{prop:staterror} postponed in Appendix~\ref{sec:proof of Statistical error}.
\paragraph{Sketch proof of Theorem \ref{thm:esti}} The analysis begins by examining the covering numbers of the function classes $\mathcal{\hat H}$, $\mathcal{\hat G}$, which correspond to $\log \tilde{p}_0(x)$ and $\log \hat{p}_0(\hat{z}_0)$, respectively. These covering numbers are bounded as,
\begin{align*}
	\mathcal{N}\left(\epsilon,\mathcal{\hat H}, \|\cdot\|_{\infty}\right) \leq C\left(B_{\hat p} / \epsilon\right)^{n^2}, \quad
	\mathcal{N}\left(\epsilon,\mathcal{\hat G}, \|\cdot\|_{\infty}\right) \leq C\left(B_{\tilde p} / \epsilon\right)^{n^2}.
\end{align*}
Substituting these bounds into Proposition~\ref{prop:staterror} yields the main result regarding the estimation error.
\paragraph{Discussion on the training set size} To reduce estimation error, we augment each training sample with multiple latent trajectories. Specifically, during the forward process, we generate $m_z$ independent latent trajectories $\left\{z_{t, i, j}\right\}_{j=1}^{m_z}$ at each time step $t$ for every data point. In the backward process, $m_z$ independent latent variables $\left\{\hat{z}_{T, i, j}\right\}_{j=1}^{m_z}$, where $\hat{z}_{T, i, j} \stackrel{\text { i.i.d. }}{\sim} \mathcal{N}\left(0, \mathbf{I}_d\right)$, are sampled at the initial step for each training example $x_i$. As a result, when the neural network is constructed in accordance with Theorem \ref{thm:esti}, the estimation error is reduced. This improvement arises because the noise-induced error at time steps $t \geq 2$ is suppressed to a level comparable to that achieved with an effective sample size of $m$. A detailed proof is provided in Appendix \ref{sec:proof of data size}.
\begin{corollary}
\label{coro:data}
Let \(\{x_i\}_{i=1}^{m}\) be \(m\) i.i.d.\ training samples drawn from \(q_0\). In the forward process, augment each sample with \(m_z\) latent variables \(\{z_{t,j}\}_{j=1}^{m_z}\). For each \(i \in \{1, \ldots, m\}\), draw \(\{\hat{z}_{T,i,j}\}_{j=1}^{m_z}\) independently from \(\mathcal{N}(0, \mathbf{I}_d)\), and use them as inputs to a neural network \(\hat{\phi}_t \in \mathcal{NN}(\mathcal{W}, \mathcal{L}, \mathcal{M})\) with depth \(\mathcal{L} = 7\), width \(\mathcal{W} = O(n + 9d)\), and parameter norm bounded by \(\mathcal{M} > 0\). Then, with probability at least \(1 - 2\delta\), the estimation error satisfies
\begin{align*}
          &D_{KL}\left(q_0(x) \| \hat{p}_0(\hat z_0)\right)-D_{KL}\left(q_0(x) \| \tilde{p}_0(x)\right)\\
		&\precsim T\left(\mathcal{M}^2+\log \mathcal{M}\right)(\sqrt{\frac{7n^2}{ m}}+\sqrt{\frac{2\log \frac{1}{\delta}}{m}})+\left(T^4\mathcal{M}^2\log \mathcal{M}\right)(\sqrt{\frac{7n^2}{m_z^2 m}}+\sqrt{\frac{2\log \frac{1}{\delta}}{ m_z^2 m}}).
\end{align*}
Moreover, if \(m_z = O(T^3)\), the error reduces to
\begin{align*}
          &D_{KL}\left(q_0(x) \| \hat{p}_0(\hat z_0)\right)-D_{KL}\left(q_0(x) \| \tilde{p}_0(x)\right)\\
		&\precsim T\left(\mathcal{M}^2+\log \mathcal{M}+\mathcal{M}^2\log \mathcal{M}\right)(\sqrt{\frac{7n^2}{ m}}+\sqrt{\frac{2\log \frac{1}{\delta}}{ m}}).
\end{align*}
\end{corollary}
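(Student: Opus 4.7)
\textbf{Proof proposal for Corollary \ref{coro:data}.} The plan is to revisit the estimation-error decomposition in Eq.~\eqref{eq:esti} and track precisely which terms are driven by randomness in the data samples $\{x_i\}$ versus the latent noise trajectories $\{z_{t,i,j}\}$ and $\{\hat z_{T,i,j}\}$. The starting point is Theorem~\ref{thm:esti}, whose bound combines two covering-number integrals, one tied to $\log \tilde p_0(x)$ (with sup-norm bound $B_{\tilde p}\lesssim T(\mathcal{M}^2+\log\mathcal{M})$ by Proposition~\ref{prop:logbound}) and one tied to $\log \hat p_0(\hat z_0)$ (with sup-norm bound $B_{\hat p}\lesssim T^4\mathcal{M}^2\log\mathcal{M}$). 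The augmentation strategy does not affect the first of these, since the term $\mathrm{I}$ in Eq.~\eqref{eq:esti} depends only on the original $m$ i.i.d.\ samples, so it contributes the clean $T(\mathcal{M}^2+\log\mathcal{M})(\sqrt{n^2/m}+\sqrt{\log(1/\delta)/m})$ factor; the whole gain must therefore come from the terms involving $\log \hat p_0$.

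The first main step is to rewrite $\log \hat p_0(\hat z_{0,i})$ under augmentation as a Monte-Carlo average
\[
\log \hat p_0(\hat z_{0,i}) \;\approx\; \frac{1}{m_z}\sum_{j=1}^{m_z} \Psi\bigl(\hat z_{T,i,j},\{\hat\phi_t\}\bigr),
\]
where $\Psi$ aggregates the per-trajectory log-density contributions along $\hat z_{T,i,j}\to\cdots\to\hat z_{0,i,j}$. A parallel rewriting applies to term $\mathrm{II}$ in the forward process via the latent variables $\{z_{t,i,j}\}$. Using the independence of the $m_z$ trajectories conditional on $x_i$ and the uniform boundedness from Proposition~\ref{prop:logbound}, standard sub-Gaussian / Hoeffding control on the inner average yields a variance-reduced concentration: the effective sample size for the noise-driven fluctuations becomes $m_z^2\cdot m$ rather than $m$, because the inner average contributes a $1/\sqrt{m_z^2}$ standard deviation and is then further averaged over the $m$ outer samples.

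The second step is to plug this refined concentration into the chaining / Dudley integral estimate used in the proof of Proposition~\ref{prop:staterror}. The covering-number integrals for $\mathcal{\hat G}$ still scale like $\sqrt{n^2/\cdot}$, but now the denominator is $m_z^2 m$ instead of $m$, and the leading constant in front is $B_{\hat p}\lesssim T^4\mathcal{M}^2\log\mathcal{M}$. Adding this to the unchanged $\mathrm{I}$-contribution and to the $B_{\hat p}\sqrt{\log(1/\delta)/(m_z^2 m)}$ deviation term yields the first displayed bound of the corollary. The final step is the simple algebraic choice $m_z=O(T^3)$: the ratio $T^4/m_z = T$ then exactly matches the $T$ prefactor of the data-sample term, so the two contributions can be combined into the single expression $T(\mathcal{M}^2+\log\mathcal{M}+\mathcal{M}^2\log\mathcal{M})(\sqrt{n^2/m}+\sqrt{\log(1/\delta)/m})$.

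The main obstacle is the second step, namely justifying rigorously that the $m_z$-fold averaging inside $\log \hat p_0$ behaves like a Monte-Carlo estimator with the claimed $1/m_z^2$ variance reduction across all $T$ reverse steps simultaneously. Because the trajectories $\hat z_{T-1,i,j},\ldots,\hat z_{0,i,j}$ are coupled through the same learned network $\hat\phi_t$, one must use conditional independence across $j$ given the training data, combined with Proposition~\ref{prop:logbound} to Lipschitz/boundedly control the per-step contribution, so that a Bernstein-type inequality applied at the outer expectation yields the desired factor. Once this variance reduction is in place, substituting into Proposition~\ref{prop:staterror} and choosing $m_z=O(T^3)$ is straightforward.
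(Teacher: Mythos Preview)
Your high–level decomposition matches the paper's exactly: you re-expand Eq.~\eqref{eq:esti} with the extra average over $j=1,\dots,m_z$, observe that term~$\mathrm{I}$ (and hence the $B_{\tilde p}$ contribution) is completely unaffected by the augmentation, and then argue that the $B_{\hat p}$ contribution is the one that shrinks. The final algebraic step $m_z=O(T^3)$ is also the same.

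The substantive difference is the \emph{mechanism} by which the $m_z^2$ enters the denominator. You try to get it through a variance-reduction/concentration argument on the Monte-Carlo average, claiming that ``the inner average contributes a $1/\sqrt{m_z^2}$ standard deviation.'' This is the gap: for $m_z$ conditionally i.i.d.\ bounded terms, Hoeffding or Bernstein only gives a standard-deviation reduction of order $1/\sqrt{m_z}$, so after the outer average over $m$ data points the effective sample size is $m\cdot m_z$, yielding $\sqrt{1/(m\,m_z)}$, not the $\sqrt{1/(m_z^2\,m)}$ that the corollary states. No amount of Lipschitz control on $\hat\phi_t$ or conditional independence across $j$ will squeeze an extra $1/\sqrt{m_z}$ out of a plain concentration bound.

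The paper does not attempt a variance-reduction argument at all. Instead it argues that, under the replication-and-averaging scheme, the \emph{sup-norm} bound itself improves:
\[
B_{\hat p}\;=\;\|\log \hat p_0(\hat z_0)\|_\infty \;\le\; O\!\Bigl(\tfrac{1}{m_z}\,T^4\mathcal{M}^2\log\mathcal{M}\Bigr),
\]
and then simply re-runs the covering-number/Dudley estimate of Proposition~\ref{prop:staterror} with this smaller $B_{\hat p}$ and the original $m$ samples. The $1/m_z$ in front of $B_{\hat p}$ is then pulled inside the square root to become $1/m_z^2$ in the denominator. So the missing idea in your plan is not a sharper concentration inequality, but revisiting Proposition~\ref{prop:logbound} (specifically the recursive radius argument of Proposition~\ref{prop:logbound-fixed}) and showing that averaging the $m_z$ backward trajectories reduces the effective bound on $\|\log\hat p_0\|_\infty$ by the factor $1/m_z$.
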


  \subsection{Excess KL Risk}
  Having established individual bounds for the approximation error (Theorem~\ref{thm:approx}) and the estimation error (Theorem~\ref{thm:esti}), we proceed to integrate these results to characterize the overall performance of the DDPM. Specifically, by leveraging the decomposition of the excess Kullback–Leibler (KL) risk as formulated in Eq.\ref{eq:excess}, we derive a unified upper bound that quantifies the total deviation of the learned model from the target distribution. This bound captures the combined effects of approximation and statistical errors, thereby providing a comprehensive theoretical guarantee for the excess KL risk incurred by the DDPM.
  \begin{theorem} 
  \label{thm:excess}
  Consider the DDPM architecture under Assumption~\ref{assump:target1}, where the target data distribution satisfies $\mathbf{X} \sim q_0$. Let $\left\{x_i\right\}_{i=1}^{m}$ be $m$ i.i.d training samples taken from $q_0$. 
  Let $\hat p_0$ be the generated distribution corresponding to the SignReLU neural network families $\{\hat \phi_t\}\in \CN\CN(\CW,\CL,\mathcal{M})$ with depth $\CL = 7$, width $\CW = O(n+9d)$ and parameter norm $\mathcal{M}=O(n^{1+\frac{3}{d}}\log n)$. If $n, T$ are selected to satisfy $ m= O(T^6n^{8+\frac{18}{d}}(\log n)^6)$, then, with probability at least $1 - 2\delta$, the following excess KL risk bound holds:
     \begin{equation*}
     D\left(q_0(x) \| \hat{p}_0(\hat{z}_0)\right)\precsim T(n^{-1-\frac{3}{d}}+\sqrt{\log\frac{1}{\delta}}n^{-2-\frac{3}{d}}).
     \end{equation*}
\end{theorem}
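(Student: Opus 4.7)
The plan is to combine Theorem~\ref{thm:approx} and Theorem~\ref{thm:esti} through the additive excess KL decomposition in Eq.~\ref{eq:excess}, and then tune the two free parameters $\mathcal{M}$ and $m$ so that the approximation and estimation contributions are driven to the same rate. Starting point: write
\begin{equation*}
D_{KL}(q_0 \,\|\, \hat p_0) = D_{KL}(q_0 \,\|\, \tilde p_0) + \big[D_{KL}(q_0 \,\|\, \hat p_0) - D_{KL}(q_0 \,\|\, \tilde p_0)\big],
\end{equation*}
and handle the two summands separately under the same network architecture $\mathcal{NN}(\mathcal{W},\mathcal{L},\mathcal{M})$ with $\mathcal{L}=7$ and $\mathcal{W}=O(n+9d)$.

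First I would substitute Theorem~\ref{thm:approx} into the approximation term, giving a bound of the order $T\big(n^{-1-3/d} + \log\mathcal{M}/\mathcal{M}\big)$ plus an $O(1)$ term that, as in the remark following Proposition~\ref{prop:approx1}, becomes negligible for the chosen forward schedule $\alpha_T \to 0$. The key calibration is to choose $\mathcal{M}$ so that $\log\mathcal{M}/\mathcal{M}$ is no larger than $n^{-1-3/d}$; plugging $\mathcal{M} = O(n^{1+3/d}\log n)$ yields $\log\mathcal{M}/\mathcal{M} = O((\log n + \log\log n)/(n^{1+3/d}\log n)) = O(n^{-1-3/d})$, so the approximation contribution reduces cleanly to $O(T n^{-1-3/d})$.

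Next I would invoke Theorem~\ref{thm:esti} to bound the estimation term with probability at least $1-2\delta$ by
\begin{equation*}
T\big(\mathcal{M}^2 + \log\mathcal{M} + T^3\mathcal{M}^2 \log\mathcal{M}\big)\Big(\sqrt{7n^2/m} + \sqrt{2\log(1/\delta)/m}\Big).
\end{equation*}
For the chosen $\mathcal{M}$, the dominant prefactor is $T^4 \mathcal{M}^2 \log\mathcal{M} = O(T^4 n^{2+6/d}(\log n)^3)$. The sample-size choice $m = O(T^6 n^{8+18/d}(\log n)^6)$ gives $\sqrt{m} = O(T^3 n^{4+9/d}(\log n)^3)$, so that the first summand in the bracket produces $T^4 n^{2+6/d}(\log n)^3 \cdot n/\sqrt{m} = O(T n^{-1-3/d})$ and the second produces $O\big(T\sqrt{\log(1/\delta)}\, n^{-2-3/d}\big)$. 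Adding these with the approximation contribution yields the claimed rate.

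The main obstacle, and the step requiring the most care, is the simultaneous calibration: $\mathcal{M}$ has to be large enough to make $\log\mathcal{M}/\mathcal{M}$ disappear against the leading $n^{-1-3/d}$ rate but small enough that the factor $T^4 \mathcal{M}^2 \log\mathcal{M}$ in the estimation error does not blow up beyond what a polynomially-in-$n$ sample size can control. One must verify that the polynomial/logarithmic dependence on $n$ cancels exactly in the ratio $\mathcal{M}^2 \log\mathcal{M}/\sqrt{m}$, which is what dictates the exponent $8+18/d$ and the sixth power of $\log n$ in $m$. A subsidiary point is to justify absorbing lower-order terms — the $\mathcal{M}^2+\log\mathcal{M}$ contribution in the estimation prefactor, the $O(1)$ residual $C(C_{q_0},\alpha_t,\sigma_{p,t})$ from Theorem~\ref{thm:approx}, and the standard-Gaussian distance from Proposition~\ref{prop:approx1} — into the leading $T n^{-1-3/d}$ rate, which follows once $T$ and $n$ are taken large under the stated schedule. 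Combining all pieces and taking a union bound over the two $\delta$-events in Theorem~\ref{thm:esti} (already folded into the $1-2\delta$ probability) delivers the stated excess KL bound.
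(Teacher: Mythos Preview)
Your proposal is correct and follows essentially the same route as the paper's own proof: combine the approximation bound from Theorem~\ref{thm:approx} with the estimation bound from Theorem~\ref{thm:esti} via the decomposition in Eq.~\ref{eq:excess}, set $\mathcal{M}\asymp n^{1+3/d}\log n$ to make $\log\mathcal{M}/\mathcal{M}$ match $n^{-1-3/d}$, and then solve for $m$ so that the dominant estimation term $T^4\mathcal{M}^2\log\mathcal{M}\cdot n/\sqrt{m}$ balances against $T n^{-1-3/d}$. Your calculations of the exponents and of the $\sqrt{\log(1/\delta)}$ term are exactly those in the paper, and your explicit discussion of the lower-order residuals (the $C(C_{q_0},\alpha_t,\sigma_{p,t})$ constant and the $\mathcal{M}^2+\log\mathcal{M}$ prefactor) is in fact more careful than the paper, which simply drops these in the final line.
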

\begin{proof}
Restating the results from Theorem \ref{thm:approx} and Theorem \ref{thm:esti},
    \begin{align}
    \label{eq:excess1}
         D_{KL}\left(q_0(x) \| \tilde p_0(x)\right)&\precsim T \cdot \sup _t\left(n^{-1-\frac{3}{d}}+\frac{\log \mathcal{M}}{\mathcal{M}}\right)+ C(C_{q_0},\alpha_t,\sigma_{p,t})\\
    \label{eq:excess2}
        D_{KL}\left(q_0(x) \| \hat{p}_0(\hat z_0)\right)&-D_{KL}\left(q_0(x) \| \tilde{p}_0(x)\right),\notag\\
		&\precsim T\left(\mathcal{M}^2+\log \mathcal{M}+T^3\mathcal{M}^2\log \mathcal{M}\right)(\sqrt{\frac{7n^2}{m}}+\sqrt{\frac{2\log \frac{1}{\delta}}{ m}}).
    \end{align}
   To balance the approximation error, as given in Eq.~\ref{eq:excess1}, and the estimation error, as characterized in Eq.~\ref{eq:excess2}, we analyze the required sample size $m$ when 
    \begin{align*}
       T\left(\mathcal{M}^2+\log \mathcal{M}+T^3\mathcal{M}^2\log \mathcal{M}\right)(\sqrt{\frac{7n^2}{m}}+\sqrt{\frac{2\log \frac{1}{\delta}}{ m}})\asymp T \cdot \sup _t\left(n^{-1-\frac{3}{d}}+\frac{\log \mathcal{M}}{\mathcal{M}}\right).
    \end{align*}
Take $\mathcal{M}\asymp n^{1+\frac{3}{d}}\log n$,
\begin{align*}
       T^3n^{2+\frac{6}{d}}(\log n)^3\left(\sqrt{\frac{7n^2}{m}}\right)\asymp  \sup _t\left(n^{-1-\frac{3}{d}}\right).
    \end{align*}
   Therefore, the relation between the estimation error and approximation error is satisfied as $m \asymp T^6n^{8+\frac{18}{d}}(\log n)^6$,
   \begin{align*}
       D\left(q_0(x) \| \hat{p}_0(\hat{z}_0)\right)\precsim T(n^{-1-\frac{3}{d}}+\sqrt{\log\frac{1}{\delta}}n^{-2-\frac{3}{d}}).
   \end{align*}
\end{proof}

As shown in Corollary \ref{coro:data}, generating multiple independent latent trajectories during the forward process reduces the estimation error. This allows the same excess Kullback–Leibler (KL) risk bound to be achieved with fewer training samples, provided the network architecture remains unchanged.

\begin{corollary}
    Let $\left\{x_i\right\}_{i=1}^m$ be $m$ i.i.d. training samples drawn from $q_0$. In the forward process, augment each sample with $m_z=O\left(T^3\right)$ latent variables $\left\{z_{t, j}\right\}_{j=1}^{m_z}$. For each $i \in\{1, \ldots, m\}$, draw $\left\{\hat{z}_{T, i, j}\right\}_{j=1}^{m_z}$ independently from $\mathcal{N}\left(0, \mathbf{I}_d\right)$, and use them as inputs to the neural network $\hat{\phi}_t \in \mathcal{N} \mathcal{N}(\mathcal{W}, \mathcal{L}, \mathcal{M})$, where $\mathcal{L}=7$, $\mathcal{W}=O(n+9 d)$, and $\mathcal{M}=O\left(n^{1+\frac{3}{d}} \log n\right)$. If $n$ and $T$ are chosen such that the training sample size satisfies $m=O\left(n^{8+\frac{18}{d}}(\log n)^6\right)$, then the excess KL risk bound holds as stated in Theorem \ref{thm:excess}.   
\end{corollary}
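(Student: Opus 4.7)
The plan is to run the same rate-balancing argument used in the proof of Theorem~\ref{thm:excess}, but with the estimation error replaced by the sharpened bound supplied by Corollary~\ref{coro:data} under the choice $m_z = O(T^3)$. The approximation side is untouched: Theorem~\ref{thm:approx} depends only on the network architecture and gives
\begin{equation*}
D_{KL}\left(q_0(x) \| \tilde p_0(x)\right) \precsim T \cdot \sup_t\left(n^{-1-\frac{3}{d}} + \frac{\log \mathcal{M}}{\mathcal{M}}\right) + C(C_{q_0},\alpha_t,\sigma_{p,t}).
\end{equation*}

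First I would invoke Corollary~\ref{coro:data} to pull out the data-augmented estimation bound
\begin{equation*}
D_{KL}(q_0 \| \hat p_0) - D_{KL}(q_0 \| \tilde p_0) \precsim T\left(\mathcal{M}^2 + \log \mathcal{M} + \mathcal{M}^2\log \mathcal{M}\right)\left(\sqrt{\tfrac{7n^2}{m}} + \sqrt{\tfrac{2\log(1/\delta)}{m}}\right).
\end{equation*}
The key structural point is that the $T^4 \mathcal{M}^2 \log \mathcal{M}$ contribution from the reverse-process statistical error has been damped by $\sqrt{1/m_z^2} = O(T^{-3})$, so that after absorption the coefficient multiplying $n/\sqrt{m}$ carries only a single power of $T$. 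This is precisely the saving that, in the original Theorem~\ref{thm:excess} budget, had to be paid for by inflating $m$ by a factor of $T^6$.

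Next I would fix the architectural parameter exactly as in Theorem~\ref{thm:excess}, namely $\mathcal{M} \asymp n^{1+3/d}\log n$, so that the $\log \mathcal{M}/\mathcal{M}$ contribution is dominated by $n^{-1-3/d}$ and the approximation error reduces to order $T\,n^{-1-3/d}$. Substituting this $\mathcal{M}$ into the estimation bound, the dominant coefficient is $\mathcal{M}^2\log\mathcal{M} \asymp n^{2+6/d}(\log n)^3$, so balancing
\begin{equation*}
T \cdot n^{2+6/d}(\log n)^3 \cdot \tfrac{n}{\sqrt{m}} \;\asymp\; T \cdot n^{-1-3/d}
\end{equation*}
yields $\sqrt{m} \asymp n^{4+9/d}(\log n)^3$, i.e. $m \asymp n^{8+18/d}(\log n)^6$, which is exactly the sample-size hypothesis of the corollary (and is smaller than the $T^6 n^{8+18/d}(\log n)^6$ required by Theorem~\ref{thm:excess}). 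Plugging these choices back into the decomposition in Eq.~\ref{eq:excess} reproduces the same excess KL bound $T(n^{-1-3/d} + \sqrt{\log(1/\delta)}\,n^{-2-3/d})$ stated in Theorem~\ref{thm:excess}.

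There is no real analytic obstacle here; the proof is essentially bookkeeping that the augmentation-induced factor of $T^{-3}$ in the covering-number term cancels the $T^3$ arising from $B_{\hat p} \asymp T^4 \mathcal{M}^2\log\mathcal{M}$. The only subtle step worth verifying carefully is that the low-order terms $T\mathcal{M}^2$ and $T\log\mathcal{M}$ in the estimation bound remain dominated by $T\mathcal{M}^2\log\mathcal{M}$ at the chosen $\mathcal{M}$, so that balancing against the single dominant term is legitimate; this is immediate since $\mathcal{M} \to \infty$.
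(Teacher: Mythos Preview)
Your proposal is correct and follows exactly the approach the paper intends: the corollary is stated without explicit proof, with the surrounding text making clear it is obtained by rerunning the balancing argument of Theorem~\ref{thm:excess} with the estimation bound of Corollary~\ref{coro:data} (specialized to $m_z=O(T^3)$) in place of Theorem~\ref{thm:esti}. Your bookkeeping---that the $T^{-3}$ from augmentation cancels the extra $T^3$ in $B_{\hat p}$, leaving a single $T$ on the estimation side and hence removing the $T^6$ from the required $m$---is precisely the point.
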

\section{Conclusion}
This paper establishes approximation guarantees for a general class of ratio functionals $\frac{f_1}{f_2}$, where ${f_1}$ and ${f_2}$ are kernel-induced marginal densities. We approximate this ratio using SignReLU neural networks, explicitly controlling the approximation error in both the numerator and denominator to ensure stability, particularly near vanishing denominators. Our theoretical framework accommodates denominators that are kernel-induced marginal integrals, thereby capturing the ratio structures inherent in diffusion training objectives—such as score matching and denoising—and enabling stable control through architectural constraints and regularization.

We instantiate this framework for diffusion models. For the standard diffusion model, Denoising Diffusion Probabilistic Model (DDPM), we derive a direct bound on the Kullback–Leibler divergence $D_{KL}(q_0(x) \| \hat{p}_0(\hat z_0))$ between the data distribution and the learned model. We demonstrate that when the ground-truth density belongs to a Hölder class, suitably constructed SignReLU networks achieve near-minimax optimal convergence rates. Our result (Theorem \ref{thm:excess}) further establishes that for any fixed sample size and diffusion time discretization, there exists a SignReLU architecture under which DDPM attains the claimed near-minimax excess risk. This provides an end-to-end pathway from the $\frac{f_1}{f_2}$ approximation error to the final diffusion excess risk under realistic training protocols.

Although developed for standard diffusion models, our ratio-based analysis is modular and extends naturally to accelerated samplers and learned noise schedules. We anticipate that the same $\frac{f_1}{f_2}$ approximation–estimation blueprint will yield improved excess risk bounds for these variants, a direction we leave for future work.

\newpage
\bibliographystyle{siamplain}
  \bibliography{main}

\newpage
\appendix
\section{Analysis on "$\frac{f_1}{f_2}$ - type" functions}
\subsection{Proof of Proposition \ref{prop:kernel}}
\label{sec:proof of kernel}
As defined in Eq.\ref{eq:pre1}, we consider a class of single-hidden-layer neural networks with parameters constrained to ensure both expressiveness and stability. Specifically, we define the function class:
\begin{align*}
\mathcal{N} \mathcal{N}(\mathcal{W}, 1, \mathcal{M})=\left\{\sum_{j=1}^n a_j \sigma\left(\omega_j \cdot x+b_j\right):\left|\omega_j\right|=1, b_j \in\left[c_1, c_2\right], \sum_{j=1}^n\left|a_j\right| \leq \mathcal{M}\right\}.
\end{align*}
The constants $c_1$ and $c_2$ are chosen such that,
\begin{align*}
c_1<\inf \{\omega \cdot x: x \in \Omega\}<\sup \{\omega \cdot x: x \in \Omega\}<c_2,
\end{align*}
ensuring that the affine transformation $\omega_j \cdot x + b_j$ maps the input into the active region of the activation function for all $x \in \Omega$. The approximation capabilities of this network class are established in Lemma \ref{lemma:siegel}, which provides bounds on the rate at which networks in $\mathcal{NN}(\mathcal{W}, 1, \mathcal{M})$ can approximate target functions from the class $\mathcal{B}$. Here, $\mathcal{B}$ denotes the closure of the convex, symmetric hull of ridge functions of the form $\sigma(\omega \cdot x + b)$, explicitly given by:
\begin{align*}
    \mathcal{B}=\overline{\left\{\sum_{j=1}^n a_j \sigma\left(\omega_j \cdot x+b_j\right): |\omega_j|=1, b_j \in\left[c_1, c_2\right], \sum_{i=1}^n\left|a_i\right| \leq 1\right\}}.
\end{align*}
\begin{lemma}[Approximation Rates for Smoothly Parameterized Dictionaries\cite{siegel2024sharp}]
\label{lemma:siegel}
Let $2 \leq p<\infty$. Then there exists $\CN\CN(\CW,1,\mathcal{M})$ with $\CW=n$ and $\mathcal{M}>0$ such that for all $f \in \mathcal{B}$ we have
\begin{align*}
    \inf _{f_n \in \CN\CN(\CW,1,\mathcal{M})}\left|f-f_n\right| \precsim n^{-\frac{1}{2}-\frac{p+1}{p d}}.
\end{align*}
\end{lemma}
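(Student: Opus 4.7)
The plan is to recover the rate $n^{-\frac{1}{2}-\frac{p+1}{pd}}$ for approximating elements of $\mathcal{B}$ by $n$-term sums drawn from the dictionary $\mathcal{D}=\{\sigma(\omega\cdot x+b):|\omega|=1,\, b\in[c_1,c_2]\}$. The baseline $n^{-1/2}$ would follow from the classical Maurey--Jones--Barron lemma applied in the type-2 Banach space $L^p(\Omega)$ for $p\ge 2$; the additional gain $n^{-(p+1)/(pd)}$ must be extracted from the fact that $\mathcal{D}$ is smoothly parameterized by the compact $d$-dimensional manifold $S^{d-1}\times[c_1,c_2]$.

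First I would establish covering-number estimates for $\mathcal{D}$ in $L^p(\Omega)$. Because SignReLU is $C^1$ away from the kink at the origin, and the choice of $c_1,c_2$ forces each preactivation $\omega\cdot x+b$ to remain inside a single smooth branch for every $x\in\Omega$, the parameter-to-function map $(\omega,b)\mapsto\sigma(\omega\cdot x+b)$ is Lipschitz into $L^p(\Omega)$. Combined with the compactness of $S^{d-1}\times[c_1,c_2]$, this yields $\log\mathcal{N}(\varepsilon,\mathcal{D},\|\cdot\|_{L^p})\precsim d\log(1/\varepsilon)$.

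Next I would replace $\mathcal{D}$ by an $\varepsilon$-net $\mathcal{D}_\varepsilon$ of cardinality $\varepsilon^{-d}$ and run a stratified greedy argument: express any $f\in\mathcal{B}$ as a limit of signed convex combinations over $\mathcal{D}$, snap each atom to its nearest net element (discretization cost $O(\varepsilon)$), and apply Maurey's bound to the induced probability measure on $\mathcal{D}_\varepsilon$. Using Rademacher type-2 concentration in $L^p$, this delivers an $n$-term error of order $n^{-1/2}$ with only a benign logarithmic cardinality correction. Balancing the discretization cost against the Maurey residual and optimizing $\varepsilon\asymp n^{-c}$ for the appropriate exponent $c$ produces the stated rate. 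The remaining step is to verify that the constructed approximant sits inside $\mathcal{NN}(\mathcal{W},1,\mathcal{M})$ with $\mathcal{W}=n$ and $\mathcal{M}=O(1)$, which is immediate because all selected atoms have unit coefficients after the averaging step.

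The hard part will be extracting the \emph{sharp} exponent $(p+1)/(pd)$, rather than the weaker $1/(2d)$ that a naive covering-plus-Maurey argument gives. Sharpness requires the Carl-type convex-hull inequality refined by Siegel--Xu, which interpolates the type-2 constant of $L^p$ against the entropy of $\mathcal{D}$ through Lorentz-type dual norms and a tight tail estimate for the empirical Rademacher process indexed by $\mathcal{D}$. Once that sharp convex-hull bound is in hand, the rest is essentially bookkeeping within the parameter-norm constraint $\sum_j|a_j|\le\mathcal{M}$.
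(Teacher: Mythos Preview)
The paper does not prove this lemma at all: it is quoted verbatim from \cite{siegel2024sharp} and used as a black box in the proof of Proposition~\ref{prop:kernel}. So there is no ``paper's own proof'' to compare against; your sketch is attempting something the authors deliberately outsourced.

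That said, your outline is broadly the right shape for results of this type (entropy of a smoothly parameterized dictionary plus a refined Maurey/Carl argument in a type-2 space), and you correctly identify that the naive covering-plus-Maurey route only gives an extra $n^{-1/(2d)}$ and that the sharp $(p+1)/(pd)$ requires the Siegel--Xu convex-hull machinery. One concrete slip: your claim that the choice of $c_1,c_2$ forces $\omega\cdot x+b$ to stay inside a single smooth branch of SignReLU is wrong. The paper only requires $c_1<\inf_{x\in\Omega}\omega\cdot x$ and $\sup_{x\in\Omega}\omega\cdot x<c_2$, with $b\in[c_1,c_2]$; since $\Omega=[-1,1]^d$ and $|\omega|=1$, the preactivation $\omega\cdot x+b$ certainly crosses zero for many $(\omega,b)$. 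This does not break the covering argument---SignReLU is globally Lipschitz with constant $\max\{1,\alpha\}$---but your stated justification for the Lipschitz parameter-to-function map is incorrect and should be replaced by the direct Lipschitz bound on the activation.
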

Having introduced the function class $\mathcal{S}$ in Eq.\ref{eq:set}, where
\begin{align*}
	\mathcal{S}=\left\{f(x)=\int_{\Omega} \Phi(x, y) g(y) d y: g \in L^1(\Omega), \Phi(x, y)=\sum_{j=1}^m \phi_j(x \cdot y),\left\|\phi_j^{\prime \prime}\right\|_{\infty}<\infty\right\},
\end{align*}
our next objective is to demonstrate that any $f \in \mathcal{S}$ also belongs to $\mathcal{B}$. This inclusion is crucial, as it allows us to apply the approximation rate of Lemma \ref{lemma:siegel} to functions in $\mathcal{S}$ via the network class $\mathcal{N} \mathcal{N}(\mathcal{W}, 1, \mathcal{M})$.\begin{lemma}
\label{lemma:convex-hull}
Let $\Omega, \tilde{\Omega} \subset \mathbb{R}^d$ be compact. If $\psi$ is a universal function with $\psi^{\prime \prime}$ existing and

\begin{equation}
    f(x)=\int_{\Omega} \psi(u(x,y)) g(y) d y, \quad \forall x \in \tilde{\Omega},
    \label{eq:form1}
\end{equation}

with $|u(x,y)| \leq c$ and $g \in L^1(\Omega)$, then

\begin{align}
    f(x)= & \int_{\Omega} \int_0^c \left[(u(x,y)-t)_{+} g(y) \psi^{\prime \prime}(t) +(-u(x,y)-t)_{+} g(y) \psi^{\prime \prime}(-t)\right] d t d y\notag\\
    &+ \int_{\Omega}u(x,y)\psi^{\prime}(0)g(y)+ \psi(0)g(y)dy
    \label{eq:form2}.
\end{align}

\end{lemma}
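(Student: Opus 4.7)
The plan is to reduce the lemma to a pointwise Taylor-type identity
\begin{equation}
\label{eq:pointwise-plan}
\psi(u) = \int_0^c \bigl[(u-t)_+\psi''(t) + (-u-t)_+\psi''(-t)\bigr]\,dt + u\,\psi'(0) + \psi(0), \qquad |u|\leq c,
\end{equation}
and then substitute $u=u(x,y)$, multiply by $g(y)$, integrate over $\Omega$, and apply Fubini to obtain \eqref{eq:form2} from \eqref{eq:form1}. Intuitively, \eqref{eq:pointwise-plan} expresses the smooth function $\psi$ as a continuous superposition of ReLU ridges $(u-t)_+$ and $(-u-t)_+$ weighted by $\psi''(\pm t)$, together with an affine correction; this is the distributional identity $\partial_u^2(u-t)_+=\delta_t$ integrated against $\psi''$.

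To prove \eqref{eq:pointwise-plan} I would split by the sign of $u$. For $u\geq 0$, the kernel $(-u-t)_+$ vanishes for all $t\geq 0$ and $(u-t)_+$ is supported on $t\in[0,u]$, so the right-hand side reduces to $\int_0^u (u-t)\psi''(t)\,dt + u\psi'(0) + \psi(0)$. Integration by parts (differentiating $u-t$, integrating $\psi''$) evaluates this integral to $\psi(u)-\psi(0)-u\,\psi'(0)$, and the affine correction cancels everything except $\psi(u)$; this is just Taylor's theorem with integral remainder. For $u\leq 0$ the roles of the two kernels swap, and applying the same argument to the auxiliary function $\tilde\psi(v):=\psi(-v)$ at $v=-u\geq 0$ (via the substitution $s=-t$) again produces $\psi(u)$. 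Together the two cases establish \eqref{eq:pointwise-plan} for every $|u|\leq c$.

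Substituting $u=u(x,y)$, which is legitimate pointwise because $|u(x,y)|\leq c$ by hypothesis, then multiplying by $g(y)$ and integrating over $\Omega$ converts \eqref{eq:pointwise-plan} into exactly \eqref{eq:form2}, provided the $t$- and $y$-integrals may be exchanged. Fubini applies because the ridge kernels are bounded by $c$, $\psi''$ is bounded on the compact interval $[-c,c]$ under the stated regularity, $g\in L^1(\Omega)$, and $t$ is confined to $[0,c]$, so the joint integrand is absolutely integrable. The main technical obstacle is minimal: the core computation is Taylor's theorem with integral remainder dressed up as a ridge-function decomposition, and the only subtle point is to ensure enough regularity on $\psi''$ (e.g.\ $\psi\in C^2$ or $\psi''\in L^1_{\mathrm{loc}}([-c,c])$) so that the integration by parts and the Fubini swap are both valid.
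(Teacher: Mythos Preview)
Your proposal is correct and follows essentially the same approach as the paper: both establish the pointwise identity \eqref{eq:pointwise-plan} by splitting on the sign of $u$ and integrating by parts (Taylor with integral remainder), then substitute $u=u(x,y)$ and integrate against $g(y)$. If anything, your write-up is slightly more careful, since you explicitly justify the Fubini swap, which the paper omits.
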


\begin{proof}
We claim that for any $c>0$ and $|u| \leq c$,

$$
\psi(u)=\int_0^c\left[(u-t)_{+} \psi^{\prime \prime}(t)+(-u-t)_{+} \psi^{\prime \prime}(-t)\right] d t+u \psi^{\prime}(0)+\psi(0).
$$

To verify this, we consider two cases when $u \geq 0$ and $u<0$. 

For $0 \leq u<c$,
$$
\begin{aligned}
& \int_0^c\left[(u-t)_{+} \psi^{\prime \prime}(t)+(-u-t)_{+} \psi^{\prime \prime}(-t)\right] d t=\int_0^c(u-t)_{+} \psi^{\prime \prime}(t) d t \\
= & \int_0^u(u-t) \psi^{\prime \prime}(t) d t=\left.(u-t) \psi^{\prime}(t)\right|_0 ^u+\int_0^u \psi^{\prime}(t) d t \\
= & \psi(u)-\psi(0)- u\psi^{\prime}(0) .
\end{aligned}
$$

For $-c<u<0$,
$$
\begin{aligned}
& \int_0^c\left[(u-t)_{+} \psi^{\prime \prime}(t)+(-u-t)_{+} \psi^{\prime \prime}(-t)\right] d t=\int_0^c(-u-t)_{+} \psi^{\prime \prime}(-t) d t \\
= & \int_0^{-u}-(u+t) \psi^{\prime \prime}(-t) d t=\left.(u+t) \psi^{\prime}(-t)\right|_0 ^{-u}-\int_0^{-u} \psi^{\prime}(-t) d t \\
= & \psi(u)-\psi(0)-u \psi^{\prime}(0) .
\end{aligned}
$$

Then substituting $\psi$ in Eq.\ref{eq:form1} will lead to Eq.\ref{eq:form2}.
\end{proof}
For a given $f \in \mathcal{S}$, define the function $u(x, y)=x \cdot y$, and note that $|u(x, y)| \leq 1$ for all $x, y \in \Omega$. We can then express $f(x)$ as follows:
\begin{align*}
	f(x)&=\int_{\Omega} \Phi(x, y) g(y) d y\\
	&=\int_{\Omega}\sum_{j=1}^m (\phi_j(x \cdot y)g(y)) d y\\
	&=\int_{\Omega}\sum_{j=1}^m (\int_0^1 \left[(u(x,y)-t)_{+} g(y) \phi_j^{\prime \prime}(t) +(-u(x,y)-t)_{+} g(y) \phi_j^{\prime \prime}(-t)\right] d t) d y.
\end{align*}
This specific integral representation allows us to conclude that $f \in \mathcal{B}$. Consequently, by applying Lemma \ref{lemma:siegel} to $f$, the proof of Proposition \ref{prop:kernel} is complete.

\subsection{Proof of Theorem \ref{thm:f1/f2}}
\label{sec:proof of thm1}
For any $f_1,f_2\in S$, Proposition~\ref{prop:kernel} guarantees the existence of shallow SignReLU networks $\phi_{f_1} \in \mathcal{N}\mathcal{N}(\mathcal{W}_1, 1, \mathcal{M}_1)$ and $\phi_{f_2} \in \mathcal{N}\mathcal{N}(\mathcal{W}_2, 1, \mathcal{M}_2)$, with widths $\mathcal{W}_1 = n_1$ and $\mathcal{W}_2 = n_2$, and parameter norms $\mathcal{M}_1>0$ and $\mathcal{M}_2>0$, such that
\begin{align*}
   &\inf _{\phi_{f_1} \in \CN\CN(\CW_1,1,\mathcal{M}_1)}\left|f_1-\phi_{f_1}\right| \precsim n_1^{-\frac{1}{2}-\frac{3}{2d}},\\
   &\inf _{\phi_{f_2} \in \CN\CN(\CW_2,1,\mathcal{M}_2)}\left|f_2-\phi_{f_2}\right| \precsim n_2^{-\frac{1}{2}-\frac{3}{2d}}.
\end{align*}

By Lemma~\ref{lemma:f1/f2sign}, there is a SignReLU network ${\phi}$ with $\alpha=1$ in Eq.\ref{eq:signrelu}, with depth $\mathcal{L}=6$, width $\mathcal{W}\leq 9$, number of weights $\mathcal{N}\leq 71$ computing $(x,y)\mapsto y/x$ on $[c,C]\times[-C,C]$. We define the final network component-wise by
$$\phi(x):=\bigl(\phi(\phi_{f_2}(x),\phi_{f_1}(x))\bigr)^\top.$$
For simplicity, we assume that $|f_1(x)| \leq C$ and $c \leq |f_2(x)| \leq C$ for all $x \in \mathbf{X}$, where $0 < c < C$ are constants. Then, point-wise in $\mathbf{X}$, the following holds:
\begin{align*}
    \left|\frac{f_1}{f_2}-\frac{\widetilde{f}_1}{\widetilde{f}_2}\right| \leq \frac{1}{\left|\widetilde{f}_2\right|}\left|f_1-\widetilde{f}_1\right|+\left|f_1\right|\left|\frac{1}{f_2}-\frac{1}{\widetilde{f}_2}\right|.
\end{align*}
Using the bounds above yields
\begin{align*}
    \left|\frac{f_1}{f_2}-\phi\right| \leq \frac{1}{\mathcal{M}_2}\left|f_1-\widetilde{f}_1\right|+\frac{C}{c\mathcal{M}_2}\left|f_2-\widetilde{f}_2\right|  \precsim \frac{1}{\mathcal{M}_2} n_1^{-\frac{1}{2}-\frac{3}{2d}}+\frac{C}{c\mathcal{M}_2} n_2^{-\frac{1}{2}-\frac{3}{2d}} . 
\end{align*}
Taking $n=\min\{n_1,n_2\}$, we obtain.
\begin{align*}
    \left|\frac{f_1}{f_2}-\phi\right| \precsim \frac{C}{c\mathcal{M}_2} n^{-\frac{1}{2}-\frac{3}{2d}}.
\end{align*}

We now analyze the complexity of the constructed network. The networks $\phi_{f_1}$ and $\phi_{f_2}$ are shallow (of constant depth $\mathcal{L} = 1$), with widths $n_1$ and $n_2$, respectively. The division sub-network contributes an additional depth of $6$, a width of at most $9$, and no more than $71$ parameters. Consequently, the final network $\phi$ has depth $\mathcal{L} \leq 7$ and width $\mathcal{W} = O(n + 9)$, where $n = \max\{n_1, n_2\}$. Since the parameters of $\phi_{f_1}$ and $\phi_{f_2}$ are also norm-bounded, the total parameter norm depends only on $p$ and $d$. Given that $\mathcal{M}_1$ and $\mathcal{M}_2$ are positive constants, the total parameter norm $\mathcal{M}$ remains a bounded positive constant, as it results from adding a finite number of bounded parameters.

Thus, the final network $\phi$ can be realized with depth $\mathcal{L} \leq 7$, width $\mathcal{W} = O(n + 9)$, and parameter norm $\mathcal{M}>0$. This concludes the proof.

\section{Approximation error}
\subsection{Decomposition of approximation error}
\label{sec:Decomposition of approx}
By the definition of the DDPM, we can write $D_{KL}\left(q_0(x) \| \tilde{p}_0(x)\right)$ as,
\begin{align*}
        &D_{KL}\left(q_0(x) \| \tilde p_0(x)\right)\\
        =&  \int q_0(x) \log \frac{q_0(x)}{\int q_T\left(z_T\right) \cdot \tilde p\left(x \mid z_1\right) \cdot \prod\limits_{t=2}^{T} \tilde p\left(z_{t-1} \mid z_t\right) d z_{1}, \ldots, z_T} d x \\
        =&H\left[q_0(x)\right]-E_{x\sim q_0}\log \int q_T\left(z_T\right) \cdot \tilde p\left(x \mid z_1\right) \cdot \prod_{t=2}^T \tilde p\left(z_{t-1} \mid z_t\right) d z_1, \ldots, z_T\\
  = & H\left[q_0(x)\right]-E_{x\sim q_0} \log \int q\left(z_1, \ldots, z_T \mid x\right)  \frac{ q_T\left(z_T\right) \cdot\tilde p\left(x \mid z_1\right) \cdot \prod\limits_{t=2}^T \tilde p\left(z_{t-1}\mid z_{t}\right)}{q\left(z_1, \ldots, z_t \mid x\right)} d z_1, \ldots, z_T,
        \end{align*}
        where $H_0(q_0)=\int q_0(x)\log q_0(x)dx$ is the entropy of the target distribution.       By inserting the conditional density $q(z_1,\ldots, z_T\mid x)$ and applying Jensen's inequality, one can have that
      \begin{align*}      
      &D_{KL}\left(q_0(x) \| \tilde p_0(x)\right)\\
\leq&H\left[q_0(x)\right]-E_{x\sim q_0}  \int q\left(z_1, \ldots, z_T \mid x\right) \log \frac{ q_T\left(z_T\right) \cdot \tilde p\left(x \mid z_1\right) \cdot \prod\limits_{t=2}^T \tilde p\left(z_{t-1}\mid z_{t}\right)}{q\left(z_1, \ldots, z_t \mid x\right)} d z_1, \ldots, z_T\\
= &H\left[q_0(x)\right]-E_{x \sim q_0} E_{z_1 \mid x} \log \tilde p\left(x \mid z_1\right)\\
&+E_{x\sim{q_0}} \int q\left(z_T \mid x\right) \cdot \prod\limits_{t=2}^Tq\left(z_{t-1} \mid z_t, x\right)q(z_t\mid x) \log \frac{q\left(z_T \mid x\right) \cdot \prod\limits_{t=2}^Tq\left(z_{t-1} \mid z_t, x\right) }{q_T\left(z_T\right) \cdot \prod\limits_{t=2}^T \tilde p\left(z_{t-1} \mid z_t\right)} d z_1, \ldots, z_T.
\end{align*}
where the last step you use the Bayes' rule $q(z_1,\ldots, z_T\mid x)=q\left(z_T \mid x\right) \cdot \prod\limits_{t=2}^Tq\left(z_{t-1} \mid z_t, x\right)$.
Then by splitting the integral, we have
\begin{align*}
&D_{KL}\left(q_0(x) \| \tilde p_0(x)\right)\\
\leq  &H\left[q_0(x)\right]+E_{x\sim q_0} E_{z_1 \mid x}  \frac{\left|x-\tilde \phi_1\left(z_1\right)\right|^2}{2 \sigma_{p,1}^2}+\frac{d}{2} \log \left(2 \pi \sigma_{p,1}^2\right) \\
&+E_{x\sim q_0} D_{KL}\left[q\left(z_T \mid x\right) \| q_T\left(z_T\right)\right]+ \sum_{t=2}^T E_{x\sim q_0} E_{z_t \mid x} D_{KL}\left[q\left(z_{t-1} \mid z_t, x\right) \|  \tilde p\left(z_{t-1} \mid z_t\right)\right].
    \end{align*}
\subsection{Distance between $q(z_{T})$ and $q(z_{T} \mid x)$}
\label{sec:Analysis of {Z}_t}
\paragraph{Proof of Proposition \ref{prop:approx1}}
    By definition,
\begin{align*}
D_{{KL}}\!\bigl(q(z_{T}\mid x)\,\|\,q_T(z_T)\bigr)
&= \mathbb E_{\mathbf{Z}_T\mid x}\!\left[\log q(z_{T}\mid x)-\log q_T(z_T)\right].
\end{align*}
Since
$$
q\left(z_T \mid x\right)=\left(2 \pi \sigma_{q, T}^2\right)^{-d / 2} \exp \left(-\frac{\left|z_T-\sqrt{\alpha_T} x\right|^2}{2 \sigma_{q, T}^2}\right),
$$
and
$$
q_T\left(z_T\right)=\left(2 \pi \sigma_{q, T}^2\right)^{-d / 2} \int q_0\left(x^{\prime}\right) \exp \left(-\frac{\left|z_T-\sqrt{\alpha_T} x^{\prime}\right|^2}{2 \sigma_{q, T}^2}\right) \mathrm{d} x^{\prime},
$$
yielding
\begin{align*}
\log\frac{q(z_{T}\mid x)}{q_T(z_{T})}
&=-\frac{1}{2\sigma^2}\,\|z_{T}-\sqrt{\alpha_{T}}x\|_2^2
-\log\!\int q_0(x')\exp\!\left(-\frac{\|z_{T}-\sqrt{\alpha_{T}}x'\|_2^2}{2(1-\alpha_T)}\right)\mathrm dx'.
\end{align*}
Applying Jensen's inequality,
\begin{align*}
D_{{KL}}\!\bigl(q(z_T\mid x)\,\|\,q_T(z_T)\bigr)
&\leq -\frac{d}{2}
+\frac{1}{2\sigma^2}\,\mathbb E_{\mathbf{Z}_T\mid x}\mathbb E_{\mathbf{X}'}\bigl\|z_T-\sqrt{\alpha_T}x'\bigr\|_2^2 \\
&= -\frac{d}{2}
+\frac{1}{2\sigma^2}\Bigl(\sigma^2 d+\alpha_T\mathbb E_{\mathbf{X}'}\|x-x'\|_2^2\Bigr) \\
&=\frac{\alpha_T}{2(1-\alpha_T)}\;\mathbb E_{\mathbf{X}'\sim q_0}\bigl[\|x-x'\|_2^2\bigr].
\end{align*}
By Assumption \ref{assump:target1}, the support is on $\Omega=[-1,1]^d$, then $\|\mathbf{X}\|_2\leq \sqrt d$, $\mathrm{tr}\,\Sigma\leq d$, and $\|x-\mathbb E[\mathbf{X}]\|_2\leq 2\sqrt d$, hence
 \begin{align*}
     D_{{KL}}\left(q\left(z_T \mid x\right) \| q_T\left(z_T\right)\right) \leq \frac{5}{2} d \cdot \frac{\alpha_T}{1-\alpha_T} .
 \end{align*}

\subsection{Distance between $\tilde p(z_{t-1} \mid z_t)$ and $q(z_{t-1} \mid z_t, x)$}
\label{sec:proof of p/q}
\subsubsection{Proof of Proposition \ref{prop:z_tmin}}
As both $f_1$ and $f_2$ in Eqs.\ref{eq:ddpmf1} and \ref{eq:ddpmf2} involve Gaussian kernels, we aim to demonstrate that functions of the form, $$f( { z}_t)\;:=\;
\int_{\Omega} q_{0}(x)
\exp(-\frac{\|\sqrt{\alpha_t}\,x-z_t\|_{2}^{2}}        {2(1-\alpha_t)})\,d x,
\qquad { z}_t\in\mathbb R^{d}$$ can be approximated via Proposition \ref{prop:kernel}.
  To proceed, define $\mu({z_t},{x},\alpha_t)\coloneq \frac{\|\sqrt{\alpha_t}\,x-z_t\|_{2}^{2}}{2(1-\alpha_t)}$. Assuming $\|{x}\|_2\leq 1$ and $\left|{z}_t\right| \leq(\sqrt{2\xi}+\sqrt{d}+1)\sqrt{(1-{\alpha}_t)}$,  it follows that $\mu({z_t},{x},\alpha_t)\leq\xi$. By applying Lemma~\ref{lemma:convex-hull} to $f( {z_t})$ with  $\psi(x)= \exp(-x)$, we get
  \begin{align*}
&f\left(z_t\right)=\int_{\Omega}\left(\int_0^{\xi}\left[(\mu-t)_{+} \psi^{\prime \prime}(t)+(-\mu-t)_{+} \psi^{\prime \prime}(-t)\right] d t-\mu+1\right) q_0(x) d x.
\end{align*}
Note that the integral part after normalization,
\begin{align*}
 I(\mu)=\frac{1}{\|\psi''\|_{1}}\int_{\Omega}\int_0^{\xi}\left[(\mu-t)_{+} \psi^{\prime \prime}(t)+(-\mu-t)_{+} \psi^{\prime \prime}(-t)\right] d tq_0(x)dx,
\end{align*} belongs to the space $\mathcal S$. Using Proposition \ref{prop:kernel}, there exists $\phi( {x}) \in \CN\CN(\CW,1,\mathcal{M})$ with width $\CW=O(n)$ and parameter norm $\mathcal{M}$ such that
\begin{align*}
        |{f( {z}_t)-\phi}| \precsim  n^{-\frac{1}{2}-\frac{3}{2d}}. 
\end{align*} 
 In particular, for $f_1$ and $f_2$, there exist shallow neural networks $\phi_{f_1}\in\CN\CN(\CW_1,1,\mathcal{M}_1)$ and $\phi_{f_2}\in\CN\CN(\CW_2,1,\mathcal{M}_2)$ with widths $\CW_1=\mathcal O(n_1)$ and $\CW_2=\mathcal O(n_2)$,
and parameter norms $\mathcal{M}_1>0$ and $\mathcal{M}_2>0$, such that,
\begin{align*}
 &\left|f_1-\phi_{f_1}\right| \precsim n_1^{-\frac{1}{2}-\frac{3}{2d}},\\
 &\left|f_2-\phi_{f_2}\right| \precsim n_2^{-\frac{1}{2}-\frac{3}{2d}}.
\end{align*}
Basing on Theorem \ref{thm:f1/f2}, the ratio network is denoted component-wise by
$$\phi(x):=\left(\phi_1\left(\phi_{f_2}(x), \phi_{f_1, 1}(x)\right), \ldots, \phi_d\left(\phi_{f_2}(x), \phi_{f_1, d}(x)\right)\right)^{\top}.$$
As $e^{-\xi} \precsim f_2$ when $\left|z_t\right|_2 \leq (\sqrt{2\xi}+\sqrt{d}+1)\sqrt{(1-{\alpha}_t)}$, then we have,
\begin{equation*}
\left|\frac{f_{1, i}}{f_2}-\frac{\widetilde{f}_{1, i}}{\widetilde{f}_2}\right| \precsim \frac{1}{\mathcal{M}_2}\left|f_{1, i}-\widetilde{f}_{1, i}\right|+\frac{e^{\xi}}{\mathcal{M}_2}\left|f_2-\widetilde{f}_2\right|,
\end{equation*}
for $i=1,\cdots,d$. Using the approximation error bounds of $\phi_{f_1}$ and $\phi_{f_2}$ above yields
\begin{align*}
    \left|\frac{f_1}{f_2}-\phi\right| \precsim \frac{1}{\mathcal{M}_2} n_1^{-\frac{1}{2}-\frac{3}{2 d}}+\frac{e^{\xi}}{\mathcal{M}_2} n_2^{-\frac{1}{2}-\frac{3}{2 d}}.
\end{align*}
Taking $n=\min\{n_1,n_2\}$ gives
\begin{align*}
    \left|\frac{f_1}{f_2}-\phi\right| \precsim \frac{e^{\xi}}{\mathcal{M}_2} n^{-\frac{1}{2}-\frac{3}{2d}} .
\end{align*}
The division sub-network is applied to each output coordinate, adding depth $6$, width at most $9d$, and at most $71d$ number of parameters. Then, there exists a SignReLU network $\CN\CN(\CW,\CL,\mathcal{M})$ with depth $\CL=7$, width $\CW= O(n+9d)$ and bounded parameter norm $\mathcal{M}>0$, such that for $f_\rho=\frac{f_1}{f_2}$ and $\tilde \phi_t \in \CN\CN(\CW,\CL,\mathcal{M})$,
\begin{align*}
    \Bigl|f_\rho-\tilde \phi_t\Bigr|
\precsim C(C_{q_0}, \alpha_t)\frac{e^{\xi}}{\mathcal{M}} n^{-\frac{1}{2}-\frac{3}{2d}},
   \end{align*}
Since
\begin{align*}
    \mathbb{P}\left(\left|\mathbf{Z}_t\right| \leq(\sqrt{2 \xi}+\sqrt{d}+1) \sqrt{1-\alpha_t}\right)=1-e^{-\xi},
\end{align*}
Then, over the high-probability region of $z_t$, we obtain the following bound:
\begin{align*}
    \int_{\left|z_t\right| \leq(\sqrt{2 \xi}+\sqrt{d}+1) \sqrt{1-\alpha_t}}\left|f_\rho\left(z_t\right)-\tilde{\phi}_t\left(z_t\right)\right|^2 d z_t \precsim C\left(C_{q_0}, \alpha_t\right) \cdot \frac{e^{2\xi}}{\mathcal{M}^2} \cdot n^{-1-\frac{3}{d}},
\end{align*}
which completes the proof.
\subsubsection{Proof of Proposition \ref{prop:z_tmax}}
We first analyze the behavior of $f_1$ and $f_2$ in Eqs.\ref{eq:ddpmf1} and \ref{eq:ddpmf2} for inputs satisfying the condition $\|z_t\|_2\geq \sqrt{1 - \alpha_t}(\sqrt{2\xi} + \sqrt{d} + 1)$.
For $f_1(z_t)\in \mathbb R^d$,
\begin{align}
&\precsim\left|\int\left(\frac{1-\alpha_{t-1}}{1-\alpha_t} \sqrt{1-\beta_t} z_t+\frac{\sqrt{\alpha_{t-1}} \beta_t}{1-\alpha_t}\right) \cdot q\left(z_t \mid x\right)d x\right|\notag\\
&{=}\left|\frac{1}{\sqrt{\alpha_t}}\int\left(\frac{1-\alpha_{t-1}}{1-\alpha_t} \sqrt{1-\beta_t} z_t+\frac{\sqrt{\alpha_{t-1}} \beta_t}{1-\alpha_t}\right) \cdot\frac{1}{\sqrt{2(1-\alpha_t)\pi} } e^{-\frac{\|\sqrt{\alpha_t}x- z_t\|_2^2}{2 (1-\alpha_t)}}dx\right|\notag\\
&\precsim \frac{d}{\sqrt{2 \pi\alpha_t(1-\alpha_t)}}\|z_t\|_2e^{-\xi},\label{eq:z_max1}
\end{align}
where the last step due to the bound $\left\|\sqrt{\alpha_t} x-z_t\right\| \geq \sqrt{2 \xi\left(1-\alpha_t\right)}$ the condition $\left|z_t\right|$ and the constraint $\|x\|_2 \leq 1$. With the similar argument, for $f_2\left(z_t\right)\in \mathbb R$,
\begin{align}
 f_2(z_t)&\geq \frac{1}{C_{q_0}\sqrt{\alpha_t}}\int \frac{1}{\sqrt{2 \pi(1-\alpha_t)} } e^{-\frac{\|\sqrt{\alpha_t}x- z_t\|_2^2}{2 (1-\alpha_t)}}dx\notag\\
    &\asymp\frac{1}{2C_{q_0}\sqrt{\alpha_t}}e^{-\xi}\label{eq:z_max2}.
\end{align}
Then, dominated by the tail of the Gaussian density centered at $z_t$ that the probability mass decays exponentially,
we have that
\begin{align*}
\int_{\left|z_t\right| > (\sqrt{2\xi}+\sqrt{d}+1)\sqrt{(1-{\alpha}_t)}}|\frac{f_1}{f_2}|^2dz_t&\precsim \int_{\left|z_t\right| > (\sqrt{2\xi}+\sqrt{d}+1)\sqrt{(1-{\alpha}_t)}}\frac{2}{\pi\sigma^2_{q,t}}z^2_tdz_t\\
&=\frac{2C_{q_0}d^2(2\xi+1)}{\pi}\cdot e^{-\xi}.
\end{align*}
On the other hand, the output of the network $\tilde{\phi}_t$ is assumed to be uniformly bounded by a constant $C\left(\alpha_t, d\right)$, which is a reasonable assumption to control extreme outlier cases. Specifically, for any input satisfying
$$
\left|z_t\right|>(\sqrt{2 \xi}+\sqrt{d}+1) \sqrt{1-\alpha_t},
$$
we have
$$
\left|\tilde{\phi}_t\left(z_t\right)\right| \leq C\left(\alpha_t, d\right) .
$$
This yields the following bound on the tail integral:
$$
\int_{\left|z_t\right|>(\sqrt{2 \xi}+\sqrt{d}+1) \sqrt{1-\alpha_t}}\left|\frac{f_1\left(z_t\right)}{f_2\left(z_t\right)}-\tilde{\phi}_t\left(z_t\right)\right|^2 d z_t \precsim\left(4 \xi+C^2\left(\alpha_t, d\right)\right) \cdot e^{-\xi},
$$
and thus completes the proof.

\subsubsection{Discussion on Part(III)}
\label{sec:Hyper-parameters}
The Part(III) discusses the bounding of
\begin{align*}
    E_{\mathbf{Z}_t} E_{\mathbf{X} \mid \mathbf{Z}_t}\left[ \frac{1}{2 \sigma_{p,t}^2}\left(\left|\frac{1-\alpha_{t-1}}{1-\alpha_t} \sqrt{1-\beta_t} z_t+\frac{\sqrt{\alpha_{t-1}} \beta_t}{1-\alpha_t} x-f_\rho\left(z_t\right)\right|^2\right)\right].
\end{align*}
To make the computation simple and clear, we define $\mu\left(z_t\right):=E\left[x \mid z_t\right]$ and $\Sigma\left(z_t\right):=\operatorname{Var}\left[x \mid z_t\right]$, such that
\begin{align*}
 g\left(z_t\right):=&E_{\mathbf{X} \mid \mathbf{Z}_t}\left[\left|A z_t+B x-f_\rho\left(z_t\right)\right|^2\right] \\
 =&\left|A z_t+B \mu\left(z_t\right)-f_\rho\left(z_t\right)\right|^2+B^2 \operatorname{tr}(\Sigma\left(z_t\right)).
\end{align*}
Let ${G}(x):={q}\left(z_{{t}} \mid x\right)$ be the Gaussian kernel. For $q_0(x) \in\left[C_{q_0}^{-1}, C_{q_0}\right]$, we have
\begin{align*}
&C^{-2}_{q_0} \mu_G\left(z_t\right) \leq \mu\left(z_t\right) \leq  C^2_{q_0} \mu_G\left(z_t\right), \\
&C^{-2}_{q_0} \Sigma_G\left(z_t\right) \leq \Sigma\left(z_t\right) \leq C^2_{q_0} \Sigma_G\left(z_t\right) .
\end{align*}
Based on the definition of diffusion model, $\mu_G\left(z_t\right)=\sqrt{\alpha_t} x$ and $\Sigma_G\left(z_t\right)=\left(1-\alpha_t\right)\mathbf{I}$.
\begin{align*}
    &\mu_{\text {low}}\left(z_t\right)=C^{-2}_{q_0}\sqrt{\alpha_t} x, \ \mu_{\text{high}}\left(z_t\right)=C^2_{q_0} \sqrt{\alpha_t} x,\\
    & \Sigma_\text {low}\left(z_t\right)=C^{-2}_{q_0}\left(1-\alpha_t\right)\mathbf{I}, \ \Sigma_\text {high}\left(z_t\right)= C^2_{q_0}\left(1-\alpha_t\right)\mathbf{I}.
\end{align*}
The lower and upper bound holds for $g\left(z_t\right)$.
\begin{align*}
    &g_\text{low}\left(z_t\right)=| A z_t+B \mu_\text{low}\left(z_t\right)-f_\rho\left(z_t\right)|^2+B^2\operatorname{tr}\Sigma_\text{low}\left(z_t\right),\\
    &g_\text{high}\left(z_t\right)=| A z_t+B \mu_\text{high}\left(z_t\right)-f_\rho\left(z_t\right)|^2+B^2\operatorname{tr}\Sigma_\text{high}\left(z_t\right).
\end{align*}
As $$E_{\mathbf{Z}_t}\left[g_\text{low}\left(z_t\right)\right] \leq E_{\mathbf{Z}_t}\left[g\left(z_t\right)\right] \leq E_{\mathbf{Z}_t}\left[g_\text{high}\left(z_t\right)\right],$$ then
\begin{align*}
     &L_t^{-}=\frac{1}{2}\sigma_{p,t}^{-2} E_{\mathbf{Z}_t}\left[\left|A z_t+B C_{q_0}^{-2}\sqrt{\alpha_t}z_t-f_\rho\left(z_t\right)\right|^2\right] +2d(1-\alpha_t)B^2C_{q_0}^{-2} \sigma_{p,t}^{-2},\\
     &L_t^{+}=\frac{1}{2}\sigma_{p,t}^{-2} E_{\mathbf{Z}_t}\left[\left|A z_t+B C_{q_0}^{2}\sqrt{\alpha_t}z_t-f_\rho\left(z_t\right)\right|^2\right] +2d(1-\alpha_t)B^2C_{q_0}^{2} \sigma_{p,t}^{-2},
\end{align*}
which implies,
\begin{align*}
    L_t^{-} \leq L_t \leq L_t^{+}.
\end{align*}
Bounding $L_t^{+}$ and $L_t^{-}$ thus reduces to bounding the behavior of the function $f_\rho(z_t) = \frac{f_1(z_t)}{f_2(z_t)}$. In particular, the worst-case occurs when $f_2(z_t) \to 0$ and $|z_t| > (\sqrt{2\xi} + \sqrt{d} + 1) \sqrt{1 - \alpha_t}$.

From the tail bound analysis in Eq.\ref{eq:z_max1} and Eq.\ref{eq:z_max2}, we conclude that $f_\rho$ is uniformly bounded by a constant $C_\rho(\alpha_t, \sigma_{p,t}, C_{q_0})$,
which completes the proof of the stated bounds.
\subsection{Proof of Theorem \ref{thm:approx}}
\label{sec:proof of approx}
Combing the results,
\begin{align}
    \label{eq:approx_proof}
         &D\left(q_0 \| \tilde{p}_0\right)\precsim C(C_{q_0},\alpha_t,\sigma_{p,t}) + T\underset{t}{\sup}(B_{Approx1} + B_{Approx2}),
    \end{align}
    where  
   \begin{align*}
       B_{Approx1} &= C\left(C_{q_0}, \alpha_t\right) \cdot \frac{e^{2\xi}}{\mathcal{M}^2} \cdot n^{-1-\frac{3}{d}},\\
       B_{Approx2} &= O((4\xi+C^2(\alpha_t,d))\cdot e^{-\xi}).
   \end{align*}

Regarding $B_{\text{Approx2}}$ as the parameter $\xi$ increases, the exponential decay term $e^{-\xi}$ dominates more rapidly, causing the overall expression of order $O\left(\xi e^{-\xi}\right)$ to eventually decrease. Notably, this expression attains its maximum at $\xi=1$.

Following the analysis of $B_{\text{Approx1}}$, let us take $\xi=\log \mathcal{M}$, yielding
$$
B_{\text{Approx1}}=C\left(C_{q_0}, \alpha_t\right) \cdot \frac{e^{2 \xi}}{\mathcal{M}^2} \cdot n^{-1-\frac{3}{d}} \asymp n^{-1-\frac{3}{d}},
$$
which depends only on the structure of the network. Since $\xi=\log \mathcal{M} \gg 1$, we have
$$
B_{\text{Approx1}}=O\left(\frac{\log \mathcal{M}}{\mathcal{M}}\right),
$$
which converges to zero as $\mathcal{M} \rightarrow \infty$.
Therefore, the KL divergence between the data distribution $q_0$ and the learned distribution $\tilde{p}_0$ admits the following upper bound:
$$
D\left(q_0 \| \tilde{p}_0\right) \precsim C\left(C_{q_0}, \alpha_t, \sigma_{p, t}\right)+T \cdot \sup _t\left(n^{-1-\frac{3}{d}}+\frac{\log \mathcal{M}}{\mathcal{M}}\right) .
$$

\section{Estimation Error}
\label{sec:proof of esti}
\subsection{Statistical error analysis}
\label{sec:proof of Statistical error}
\subsubsection{Proof of Proposition \ref{prop:logbound}}
An analysis of the boundedness of $\left\|\log \tilde{p}_0(x)\right\|_{\infty}$ is first conducted, as it is a prerequisite for the ideal reversal of the forward process. Beginning from the definition,
\begin{align*}
\left\|\log \tilde{p}_0(x)\right\|_{\infty}=\left\|\log \int q_T\left(z_T\right) \cdot \tilde{p}\left(x \mid z_1\right) \prod_{t=2}^T p\left(z_{t-1} \mid z_t\right) d z_1 \ldots d z_T\right\|_{\infty},
\end{align*}
the following bound is derived,
\begin{align*}
         \|\log \tilde p_0(x)\|_\infty \leq\| \log \prod_{t=1}^T\left(2 \pi \sigma_{p,t}^2\right)^{-\frac{d}{2}}\int \exp \left(-\sum^T_{t=1}\frac{\left|\phi_t(z_t)\right|^2}{ \sigma_{p,t}^2}-\sum^T_{t=2} \frac{\left|z_{t}\right|^2}{\sigma_{p,t}^2}-\frac{\left|x\right|^2}{\sigma_{p,1}^2}\right)d z_1, \ldots, z_T\|_\infty
\end{align*}
Under Assumption \ref{assump:target1}, which states $\mathbb{P}\left(\|X\|_2 \leq 1 \mid X \sim q_0\right)=1$, and given that for sufficiently large $\xi$, the inequality $\left|z_t\right| \leq (\sqrt{2 \xi}+\sqrt{d}+1) \sqrt{1-\alpha_t}$ holds with probability at least $1-e^{-\xi}$ for all $t \in T$, it follows that
\begin{align*}
         \|\log \tilde p_0(x)\|_\infty 
        \precsim \left|\log\prod_{t=1}^T\left(2 \pi \sigma_{p,t}^2\right)^{-\frac{d}{2}}\right|+T(1-\alpha_t)(\sqrt{2\xi}+1)^2+\sup_{\phi_t}\left|\log \left\{\exp\left(-\sum^T_{t=1}\frac{\left|\phi_t\right|^2}{\sigma_{p,t}^2}\right)\right\}\right|.
\end{align*}
Given that each function $\phi_t$ belongs to a neural network class $\mathcal{N} \mathcal{N}(\mathcal{W}, \mathcal{L}, \mathcal{M})$ with its parameter norm bounded by $\mathcal{M}$, the final bound is obtained,
\begin{align*}
        \|\log \tilde p_0(x)\|_\infty\precsim T\left(\mathcal{M}^2+(\sqrt{2\xi}+\sqrt{d}+1)^2\right) +C(d,\sigma_{p,t}, T).
    \end{align*}

We next analyze an upper bound for $\|\log \hat{p}_0(\hat{z}_0)\|_{\infty}$. As a minor modification, if $|\hat{z}_0| \geq 2$, we reset $\hat{z}_0$ to $0$. This adjustment does not increase the estimation error, since $|x| \leq 1$ by Assumption \ref{assump:target1}.
\begin{prop}[Log-density bounds for the backward process]
\label{prop:logbound-fixed}

As $\mathbf{\hat Z}_T \sim\mathcal{N}(0,I_d)$, we fix $\xi>0$ and define the radius recursively of $\hat z_t$ at each time t by for $t=T,T-1,\dots,2,1$, 
\begin{align*}
R_T(\xi):=\sqrt{d}+\sqrt{2\xi}, 
\quad
R_{t-1}(\xi):= a_t\, R_t(\xi) + b_t + \sigma_{p,t}(\sqrt{d}+\sqrt{2\xi}), 
\end{align*}
Suppose we implement $\phi_t\in\CN\CN(\CW,\CL,\mathcal{M})$, then the operator norm $a_t$ and $b_t$ is bounded by $\mathcal{M}$.
We have
\begin{enumerate}
\item[(i)] (Mass on a high-probability ball) 
\begin{align*}
\mathbb{P}\left(\left|\hat{\mathbf{Z}}_1\right| \leq R_1(\xi)\right) \geq 1-(T+1) e^{-\xi} .
\end{align*}
\item[(ii)] (Pointwise upper bound) For all $\hat z_0\in[-2, 2]^d$,
\begin{equation}
\label{eq:upper}
\log \hat{p}_0\left(\hat{z}_0\right) \leq-\frac{d}{2} \log \left(2 \pi \sigma_{p, 1}^2\right).
\end{equation}
\item[(iii)] (Pointwise lower bound) For all $\hat z_0\in[-2,2]^d$,
\begin{equation}
\label{eq:lower}
\log \hat{p}_0\left(\hat{z}_0\right) \geq-\frac{d}{2} \log \left(2 \pi \sigma_{p, 1}^2\right)+\log \left(1-(T+1) e^{-\xi}\right)-\frac{\left(2\sqrt{d}+a_1 R_1(\xi)+b_1\right)^2}{2 \sigma_{p, 1}^2} .
\end{equation}
\item[(iv)] ($L^\infty$ bound) Combining Eq.\ref{eq:upper}--Eq.\ref{eq:lower}, we obtain
\begin{equation*}
\left\|\log \hat{p}_0\left(\hat{z}_0\right)\right\|_{\infty} \precsim \left(\sqrt{d}+\sqrt{2 \xi}\right)^2T^4\mathcal{M}^2+C(d,\sigma_{p,t}, T).
\end{equation*}
\end{enumerate}
\end{prop}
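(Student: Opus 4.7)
The plan is to tackle the four parts in order, with part (iv) being an immediate consequence of parts (ii) and (iii). The common thread is that the backward process is a chain of conditionally Gaussian steps: writing $\hat{\mathbf{Z}}_{t-1}=\hat{\phi}_t(\hat{\mathbf{Z}}_t)+\sigma_{p,t}\epsilon_t$ with independent $\epsilon_t\sim\mathcal{N}(0,I_d)$, every bound we need reduces to controlling the norms of $\hat{\phi}_t(\hat{\mathbf{Z}}_t)$ (through the operator-norm constants $a_t,b_t\le\mathcal{M}$) and the fluctuations of the Gaussian noise.

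For (i), I would apply the standard chi-squared tail bound $\mathbb{P}(\|\epsilon_t\|_2\le\sqrt{d}+\sqrt{2\xi})\ge 1-e^{-\xi}$ to each of the $T$ noise increments and to the initial draw $\hat{\mathbf{Z}}_T$, then take a union bound over the $T+1$ events, obtaining a good event of probability at least $1-(T+1)e^{-\xi}$. On this event, the triangle inequality and the affine bound $\|\hat{\phi}_t(\hat z_t)\|\le a_t\|\hat z_t\|+b_t$ give the deterministic recursion $\|\hat{\mathbf{Z}}_{t-1}\|\le a_t\|\hat{\mathbf{Z}}_t\|+b_t+\sigma_{p,t}(\sqrt{d}+\sqrt{2\xi})$, which unwinds backwards to $\|\hat{\mathbf{Z}}_1\|\le R_1(\xi)$. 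For (ii), the Gaussian kernel $\hat{p}(\hat z_0\mid\hat z_1)$ is pointwise bounded by its peak $(2\pi\sigma_{p,1}^2)^{-d/2}$; pulling this constant out of the defining integral for $\hat{p}_0(\hat z_0)$ and integrating the remaining $\prod_{t=2}^T\hat{p}(\hat z_{t-1}\mid\hat z_t)p_T(\hat z_T)$ to unity yields the claim after taking logs. For (iii), I would restrict the integral for $\hat{p}_0(\hat z_0)$ to the set $\{\|\hat z_1\|\le R_1(\xi)\}$; on this set and for $\hat z_0\in[-2,2]^d$ so that $\|\hat z_0\|\le 2\sqrt{d}$, the triangle inequality gives $\|\hat z_0-\hat{\phi}_1(\hat z_1)\|\le 2\sqrt{d}+a_1 R_1(\xi)+b_1$, so the Gaussian factor is bounded below by the exponential in the statement; the residual marginal mass is exactly $\mathbb{P}(\|\hat{\mathbf{Z}}_1\|\le R_1(\xi))\ge 1-(T+1)e^{-\xi}$ by part (i), and taking logs completes the step.

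For (iv), combining (ii) and (iii) leaves the dominant term $(2\sqrt{d}+a_1 R_1(\xi)+b_1)^2/(2\sigma_{p,1}^2)$, with the $-\tfrac{d}{2}\log(2\pi\sigma_{p,1}^2)$ and $\log(1-(T+1)e^{-\xi})$ contributions absorbed into $C(d,\sigma_{p,t},T)$ once $\xi$ is chosen so that $(T+1)e^{-\xi}\le\tfrac12$. It remains to unroll the recursion $R_{t-1}=a_t R_t+b_t+\sigma_{p,t}(\sqrt{d}+\sqrt{2\xi})$ starting from $R_T=\sqrt{d}+\sqrt{2\xi}$, substitute $a_t,b_t\le\mathcal{M}$, collect the resulting terms into a single polynomial-in-$T$ prefactor, and square.

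The main obstacle will be the last bookkeeping step, because the honest unrolling produces $R_1(\xi)\le\bigl(\prod_{t=2}^T a_t\bigr)R_T(\xi)+\sum_{j=2}^T\bigl(\prod_{t=2}^{j-1}a_t\bigr)(b_j+\sigma_{p,j}(\sqrt{d}+\sqrt{2\xi}))$, and the naive substitution $a_t\le\mathcal{M}$ yields a $\mathcal{M}^T$ blow-up rather than the target $T^2\mathcal{M}$ growth whose square is the advertised $T^4\mathcal{M}^2$. To reconcile the two, one needs to exploit finer structure: either the affine operator norm of a single SignReLU layer is actually of order $1+O(1/T)$ under the imposed parameter constraints, or the relevant per-step amplification has to be tracked separately from the constant $\mathcal{M}$ appearing in the bias bound. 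This is the technical crux I would invest the most care in; once $R_1(\xi)\precsim T^2\mathcal{M}(\sqrt{d}+\sqrt{2\xi})$ is in hand, squaring and dividing by $2\sigma_{p,1}^2$ immediately gives the stated $L^\infty$ bound up to constants absorbed into $C(d,\sigma_{p,t},T)$.
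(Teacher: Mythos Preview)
Your treatment of parts (i)--(iii) is essentially identical to the paper's: the same chi-squared tail bound and union bound over the $T+1$ Gaussian events $\{E_T,F_T,\ldots,F_1\}$ followed by induction on the recursion for (i); the same Gaussian-peak argument $\hat p_0(\hat z_0)\le\sup_{\hat z_1}\hat p(\hat z_0\mid\hat z_1)=(2\pi\sigma_{p,1}^2)^{-d/2}$ for (ii); and the same restriction of the integral to $\{\|\hat z_1\|\le R_1(\xi)\}$ combined with $\|\hat z_0-\hat\phi_1(\hat z_1)\|\le 2\sqrt d+a_1R_1(\xi)+b_1$ for (iii).

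For part (iv) you have correctly located the crux: the honest unrolling with $a_t\le\mathcal{M}$ produces $\mathcal{M}^{T-1}$, not the polynomial-in-$T$ growth required. The paper does \emph{not} derive the needed control from the SignReLU architecture. Instead, in the proof of (iv) it inserts an additional hypothesis that was not stated in the proposition: it supposes that the per-step parameter norms satisfy $\mathcal{M}_t\le\mathcal{M}^{1/((t+c_0)\log\mathcal{M})}$ for some $c_0\ge1$, i.e.\ $\log a_t\le 1/(t+c_0)$. Under that assumption $\prod_{k=2}^T a_k\le\exp\bigl(\sum_{t=2}^T 1/(t+c_0)\bigr)\le C(c_0)(T+c_0)$, and the unrolled expression
\[
R_1(\xi)\le\Bigl(\prod_{k=2}^T a_k\Bigr)R_T(\xi)+\sum_{j=2}^T\Bigl(\prod_{k=2}^{j-1}a_k\Bigr)\bigl(b_j+\sigma_{p,j}R_T(\xi)\bigr)
\]
becomes $R_1(\xi)\precsim T^2(\sqrt d+\sqrt{2\xi})$; then $a_1\le\mathcal{M}$ and squaring give the stated $T^4\mathcal{M}^2(\sqrt d+\sqrt{2\xi})^2$. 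Thus your first suggested resolution---that the per-step amplification is effectively $1+O(1/t)$---is precisely what the paper \emph{assumes} rather than proves. Your plan is therefore correct modulo this extra hypothesis, which is invisible from the proposition statement alone.
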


\begin{proof}
(i) Define events
\begin{equation*}
E_T:=\left\{\left|\hat{\mathbf{Z}}_T\right| \leq \sqrt{d}+\sqrt{2 \xi}\right\}, \quad F_t:=\left\{\left|\varepsilon_t\right| \leq \sqrt{d}+\sqrt{2 \xi}\right\}, \quad t=T, \ldots, 1 .
\end{equation*}
We have that $\mathbb{P}(E_T) \geq 1 - e^{-\xi}$ and $\mathbb{P}(F_t) \geq 1 - e^{-\xi}$ hold for each $t$. Suppose that both events $E_T$ and $F_t$ hold for all $t$. Given the relation $\hat{z}_{t-1} = \phi_t(\hat{z}_t) + \sigma_{p,t} \varepsilon_t$, it follows that
$$
\left|z_{t-1}\right| \leq\left|\phi_t\left(z_t\right)\right|+\sigma_{p, t}\left|\varepsilon_t\right| \leq a_t\left|z_t\right|+b_t+\sigma_{p, t}(\sqrt{d}+\sqrt{2 \xi}) .
$$
Inductively with $R_T(\xi)=\sqrt{d}+\sqrt{2\xi}$, this yields $|\mathbf{\hat Z}_t|\leq R_t(\xi)$ for all $t$, hence $|\mathbf{\hat Z}_1|\leq R_1(\xi)$. By a union bound over the $T+1$ events $\{E_T,F_T,\dots,F_1\}$,
$$
\mathbb{P}\left(\left|\hat{\mathbf{Z}}_1\right| \leq R_1(\xi)\right) \geq 1-(T+1) e^{-\xi}=1-\delta_T(\xi) .
$$

(ii) Since $\hat p_0(\hat z_0)$ is a integral of Gaussians with common covariance $\sigma_{p,1}^2 I_d$,
$$
\hat{p}_0\left(\hat{z}_0\right)=\int \hat{p}_1\left(\hat{z}_1\right) \hat{p}\left(\hat{z}_0 \mid \hat{z}_1\right) d \hat{z}_1 \leq \sup _{\hat{z}_1} \hat{p}\left(\hat{z}_0 \mid \hat{z}_1\right)=\left(2 \pi \sigma_{p, 1}^2\right)^{-\frac{d}{2}}.
$$
Taking logarithms gives Eq.\ref{eq:upper}.

(iii) Fix $\hat z_0\in[-2,2]^d$, hence $|\hat z_0|\leq 2\sqrt{d}$. Let $E_1(\xi):=\{\hat z_1:|\hat z_1|\leq R_1(\xi)\}$. Then
\begin{align*}
\hat{p}_0\left(\hat{z}_0\right) & =\int \hat{p}_1\left(\hat{z}_1\right) \hat{p}\left(\hat{z}_0 \mid \hat{z}_1\right) d \hat{z}_1 \\
&\geq \int_{E_1(\xi)} p_1\left(z_1\right) p\left(\hat{z}_0 \mid \hat{z}_1\right) d \hat{z}_1 \\
& \geq \mathbb{P}\left(E_1(\xi)\right) \cdot\left(2 \pi \sigma_{p, 1}^2\right)^{-\frac{d}{2}} \cdot \exp \left(-\frac{\sup _{\hat{z}_1 \in E_1(\xi)}\left|\hat{z}_0-\phi_1\left(\hat{z}_1\right)\right|^2}{2 \sigma_{p, 1}^2}\right).
\end{align*}
By the triangle inequality, for $\hat z_1\in E_1(\xi)$,
$$
|\hat z_0-\phi_1(\hat z_1)| \leq |\hat z_0| + |\phi_1(\hat z_1)| \leq 2\sqrt{d} + a_1 R_1(\xi)+b_1.
$$
Using (i), $\mathbb{P}\bigl(E_1(\xi)\bigr)\ge 1-\delta_T(\xi)$. Taking logarithms yields Eq.\ref{eq:lower}.

(iv) For the network $\CN\CN(\CW,\CL,\mathcal{M})$, the parameters are bounded by $\mathcal{M}$ and follows that $a_t$ and $b_t$ bounded by $\mathcal{M}$. Suppose for each step t, there exist $c_0\geq 1$, such that
$$\mathcal{M}_t\leq \mathcal{M}^{\frac{1}{(t+c_0)\log \mathcal{M}}}.$$
In this way, $a_t \leq \mathcal{M}_t$, products satisfy
\begin{align*}
\prod_{k=2}^{T} a_k \leq \exp(\sum_{t=2}^{T} \frac{1}{(t+c_0)\log \mathcal{M}}\log \mathcal{M})= C(c_0)(T+c_0).
\end{align*}
Expanding the backward process gives
\begin{align*}
    R_1(\xi)
\;\le\; \Big(\prod_{k=2}^{T} a_k\Big) R_T(\xi)
\;+\; \sum_{j=2}^{T} \Big(\prod_{k=2}^{j-1} a_k\Big)\, \bigl(1 + \sigma_{p,j} R_T(\xi)\bigr).
\end{align*}
Finally, plug these $R_1(\xi)$ bounds and $a_1\leq \mathcal{M}$ into Eq.\ref{eq:upper}--Eq.\ref{eq:lower} and take the supremum over $\hat z_0\in[-2,2]^d$ to obtain the displayed bound.
\end{proof}
For a neural network $\NN(\CW,\CL,\mathcal{M})$ of depth $\CL = 7$, width $\CW = O(n+9d)$ and parameter norm $\mathcal{M} >0$, we set $\xi \asymp \log \mathcal{M}$ according to Theorem \ref{thm:approx}. This completes the proof.

\subsubsection{Proof of Proposition \ref{prop:staterror}}
To derive an upper bound for the statistical error, a standard analytical approach is employed in Lemma \ref{lemma:statistical error}. Specifically, the upper bound is first expressed in terms of the Rademacher complexity. Subsequently, the Rademacher complexity is controlled using covering numbers. 

\begin{lemma}[Statistical error bounding \cite{huang2022error}]
\label{lemma:statistical error}

    Suppose $\sup _{h \in \mathcal{H}}\|h\|_{\infty} \leq B$, then we can bound $E[d_\mathcal{H} (\mu,\hat\mu)]$ with $d_\mathcal{H} (\mu,\hat\mu)=\sup _{h \in \mathcal{H}} E_{x \sim \mu}[h(x)]-\frac{1}{n} \sum_{i=1}^n h\left(x_i\right)$,
    \begin{equation*}
    \begin{aligned}
E\left[d_{\mathcal{H}}\left(\mu, \widehat{\mu}\right)\right] \leq 2 E \inf _{0<\eta<B / 2}\left(4 \eta+\frac{12}{\sqrt{n}} \int_\eta^{B / 2} \sqrt{\log \mathcal{N}\left(\varepsilon, \mathcal{H}_{\mathbf{\hat x}}, \|\cdot\|_{\infty}\right)} d \epsilon\right),
\end{aligned}
\end{equation*}
where we denote $\mathcal{\hat H}=\left\{\left(h\left(\mathbf{x}_1\right), \ldots, h\left(\mathbf{x}_n\right)\right): h \in \mathcal{H}\right\}$ for any i.i.d. samples $\left\{\mathbf{x}_i\right\}_{i=1}^n$ from $\mu$ and $\mathcal{N}\left(\epsilon, \mathcal{\hat H},\|\cdot\|_{\infty}\right)$ is the $\epsilon$-covering number of $\mathcal{\hat H} \subseteq \mathbb{R}^d$ with respect to the $\|\cdot\|_{\infty}$ distance.
\end{lemma}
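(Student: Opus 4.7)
The plan is to establish Lemma~\ref{lemma:statistical error} via the two classical pillars of empirical process theory: symmetrization followed by Dudley's chaining bound. First I would introduce an independent ghost sample $\{x'_i\}_{i=1}^n \stackrel{\text{i.i.d.}}{\sim} \mu$ together with Rademacher signs $\{\sigma_i\}_{i=1}^n$, and exploit the identity $E_\mu h = E\bigl[\tfrac{1}{n}\sum_i h(x'_i)\bigr]$. Combined with Jensen's inequality and the distributional symmetry between $x_i$ and $x'_i$ (so that inserting $\sigma_i$ does not change the law of the centered increments), this yields
\begin{align*}
E\bigl[d_{\mathcal{H}}(\mu, \hat\mu)\bigr] \le E\!\left[\sup_{h\in\mathcal{H}} \frac{1}{n}\sum_{i=1}^n \sigma_i\bigl(h(x'_i) - h(x_i)\bigr)\right] \le 2\, E\!\left[\sup_{h\in\mathcal{H}} \frac{1}{n}\sum_{i=1}^n \sigma_i h(x_i)\right].
\end{align*}
The right-hand side is (twice) the Rademacher complexity of $\mathcal{H}$ on the empirical sample, and the remaining task is to bound it by the stated entropy integral.

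Next I would condition on $\{x_i\}_{i=1}^n$ and work with the evaluation class $\mathcal{H}_{\hat x}= \{(h(x_1),\dots,h(x_n)):h\in\mathcal{H}\}\subset \mathbb{R}^n$. Conditional on the sample, the stochastic process $h \mapsto \tfrac{1}{n}\sum_i \sigma_i h(x_i)$ is subgaussian with parameter $\|h-h'\|_{2,n}/\sqrt n$, and since $\|h-h'\|_{2,n} \le \|h-h'\|_\infty$, the $\|\cdot\|_\infty$-covering numbers of $\mathcal{H}_{\hat x}$ control the increments. Dudley's chaining then builds dyadic covers of $\mathcal{H}_{\hat x}$ at scales $2^{-k}$ descending from $B/2$ toward an arbitrary truncation level $\eta$; at each scale, Massart's finite-class maximal inequality produces a term of order $\sqrt{\log \mathcal{N}(\epsilon, \mathcal{H}_{\hat x}, \|\cdot\|_\infty)}/\sqrt n$, and telescoping across levels yields the entropy integral $\tfrac{12}{\sqrt n}\int_\eta^{B/2} \sqrt{\log \mathcal{N}(\epsilon, \mathcal{H}_{\hat x}, \|\cdot\|_\infty)}\,d\epsilon$. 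The residual link at the finest scale is controlled by the diameter $\eta$, contributing the additive $4\eta$, so taking infimum over $\eta$ gives the conditional bound. Removing the conditioning by outer expectation over $\{x_i\}$ and combining with the factor $2$ from symmetrization finishes the proof.

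The main obstacle will be the explicit bookkeeping of constants inside the chaining argument: reproducing the factor $12$ in front of the entropy integral and the pre-factor $4$ before $\eta$ requires careful tracking of the subgaussian constants and of the triangle inequality at every dyadic link, together with a sharp application of Massart's lemma that uses $\sqrt{2\log |\mathcal{N}|}$ rather than a looser bound. By contrast, the symmetrization step is entirely standard, and the reduction from the intrinsic subgaussian metric to the $\|\cdot\|_\infty$ covering is immediate on the evaluated class $\mathcal{H}_{\hat x}$, so these parts should not pose any essential difficulty.
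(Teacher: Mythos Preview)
Your proposal is correct and follows the standard route (symmetrization to Rademacher complexity, then Dudley chaining on the evaluated class with Massart's finite-class bound at each scale). The paper itself does not prove this lemma at all: it is quoted verbatim from \cite{huang2022error} and invoked as a black box in the proof of Proposition~\ref{prop:staterror}, so there is no in-paper argument to compare against. Your sketch is precisely the argument that underlies the cited result, and your identification of the constant-tracking in the chaining as the only nontrivial step is accurate.
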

Let $\mathcal{H}$ denote the function class corresponding to $h(x) \coloneq \log \tilde{p}_0(x)$, and let $\mathcal{G}$ denote the function class corresponding to $g(\hat{z}_0) \coloneq \log \hat{p}_0(\hat{z}_0)$. Since $h(x)$ is bounded by $B_{\tilde{p}}$ and $g(\hat{z}_0)$ is bounded by $B_{\hat{p}}$, Lemma \ref{lemma:statistical error} can be applied to obtain that, with probability at least $1 - 2\delta$,
    \begin{align*}
		&D_{KL}\left(q_0(x) \| \hat{p}_0(\hat z_0)\right)-D_{KL}\left(q_0 (x)\| \tilde{p}_0(x)\right)\\ 
		&\leq E_{\mathbf{X}\sim q_0} \log \tilde{p}_0(x) - \frac{1}{m}\sum_{i=1}^m \log \tilde{p}_0(x_i)+\frac{1}{m}\sum_{i=1}^m \log \hat{p}_0(\hat z_0)-E_{\mathbf{\hat Z}_0\sim \hat p_0} \log \hat p_0(\hat z_0).
    \end{align*}
This inequality follows from the observation that, for part ($\mathrm{III}$) in Eq.\ref{eq:esti}, Proposition \ref{prop:logbound} implies
$$\frac{1}{m}\sum_{i=1}^m \log \tilde{p}_0(x_i)-\frac{1}{m}\sum_{i=1}^m \log \hat{p}_0(\hat z_{0,i}) \leq 0.$$
Hence, we obtain the bound
    \begin{align*}
           D_{KL}\left(q_0(x) \| \hat{p}_0(\hat z_0)\right)-D_{KL}\left(q_0 (x)\| \tilde{p}_0(x)\right)\leq B_{\tilde p} \sqrt{\frac{2\log \frac{1}{\eta_1}}{ m}}+ \mathcal{R}\left(\mathcal{H}\right)+B_{\hat p} \sqrt{\frac{2\log \frac{1}{\eta_2}}{ m}}+ \mathcal{R}\left(\mathcal{G}\right),
    \end{align*}
    where
        \begin{align*}
\mathcal{R}\left(\mathcal{H}\right)&=E_{\mathbf{X}, \epsilon} \sup _{h \in \mathcal{H}}\left|\frac{1}{m} \sum_{i=1}^{m} \epsilon_i h\left( x_i\right)\right|\\
        &\leq E\inf _{0<\eta_1< B_{\tilde p}}\left(4 \eta_1+\frac{12}{\sqrt{m}} \int_{\eta_1}^{ B_{\tilde p}} \sqrt{\log \mathcal{N}\left(\varepsilon, \mathcal{\hat H}, \|\cdot\|_{\infty}\right)} d \epsilon\right),\\
        \mathcal{R}\left(\mathcal{G}\right)&=E_{\mathbf{\hat Z}_0, \epsilon} \sup _{g \in \mathcal{G}}\left|\frac{1}{m} \sum_{i=1}^{m} \epsilon_i g\left( \hat z_{0,i}\right)\right|\\
        &\leq E \inf _{0<\eta< B_{loss}}\left(4 \eta_2+\frac{12}{\sqrt{m}} \int_{\eta_2}^{ B_{\hat p}} \sqrt{\log \mathcal{N}\left(\varepsilon, \mathcal{\hat G}, \|\cdot\|_{\infty}\right)} d \epsilon\right).
    \end{align*}
\subsection{Proof of Theorem \ref{thm:esti}}
The proof relies on the bounding of covering number $\mathcal{N}\left(\varepsilon, \mathcal{\hat H}, \|\cdot\|_{\infty}\right)$ and $\mathcal{N}\left(\varepsilon, \mathcal{\hat G}, \|\cdot\|_{\infty}\right)$.
\begin{prop}
\label{prop:cover_num}
    Let $\mathcal{H} := \mathcal{N}\mathcal{N}(\mathcal{W},\mathcal{L},\mathcal{M})$ be the class of functions defined by a $\operatorname{SignReLU}$ neural network on $[-1,1]^d$, 
    $$\mathcal{H}=\left\{h_1: h_1(x)= h^{[L]}, h^{[j]}=\sigma\left(A_j^{\top} h^{[j-1]}+b_j\right), h^{[0]}=x\right\},$$
    with the parameter constraint
$$
\begin{aligned}
\Omega= \left\{ A_j \in \mathbb{R}^{\mathcal{W}_{j-1} \times \mathcal{W}_j}, b_j \in \mathbb{R}^{\mathcal{W}_j}:\right. \left.\max \left\{\left\|A_{j,:, i}\right\|_1,\left\|b_j\right\|_{\infty}\right\} \leqslant \mathcal{M}_j\right\}.
\end{aligned}
$$
Then the covering number of $\mathcal{H}$ can be upper bounded as
$$
\mathcal{N}\left(\epsilon,\mathcal{H}, \|\cdot\|_{\infty}\right) \leq C(\mathcal{W}_j, \mathcal{L})\left(\mathcal{M} / \epsilon\right)^D,
$$
where $\mathcal{M} = \prod_{j=1}^\CL \max_{j}\{\mathcal{M}_j\}$ and $D=\sum_{i=1}^L \mathcal{W}_j \mathcal{W}_{j-1}+\sum_{i=1}^L \mathcal{W}_j\leq \mathcal{W}^2 \mathcal{L}$.
\end{prop}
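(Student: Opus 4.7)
The plan is to reduce the function-class covering to a covering of the bounded parameter space $\Omega$ via a parameter-to-output Lipschitz argument. First, I would record two preliminary facts about $\operatorname{SignReLU}$: it is globally $L_\sigma$-Lipschitz with $L_\sigma = \max(1,\alpha)$ (since the derivative is bounded by $1$ for $x>0$ and by $\alpha$ for $x\leq 0$), and it satisfies $|\sigma(u)|\leq |u|$, both of which follow directly from the piecewise definition in Eq.\ref{eq:signrelu}. The row-wise constraint $\|A_{j,:,i}\|_1\leq \mathcal{M}_j$ is precisely the $\|\cdot\|_\infty\to\|\cdot\|_\infty$ operator-norm bound on $A_j^\top$, which will be used throughout.

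Next, I would establish two inductive estimates over layers. For the forward bound, starting from $\|h^{[0]}\|_\infty = \|x\|_\infty\leq 1$, I would show by induction that $\|h^{[j]}\|_\infty \leq C_j\prod_{i=1}^j \mathcal{M}_i$ using $\|A_j^\top h^{[j-1]}+b_j\|_\infty \leq \mathcal{M}_j(\|h^{[j-1]}\|_\infty+1)$ together with $|\sigma(u)|\leq|u|$. For the sensitivity bound, given two parameter tuples $\theta=(A_j,b_j)$ and $\theta'=(A_j',b_j')$ with $\|A_j-A_j'\|_\infty,\|b_j-b_j'\|_\infty\leq \delta$, I would prove inductively that
\begin{equation*}
\|h_\theta^{[j]}(x)-h_{\theta'}^{[j]}(x)\|_\infty \;\leq\; L_\sigma\mathcal{M}_j\,\|h_\theta^{[j-1]}-h_{\theta'}^{[j-1]}\|_\infty + L_\sigma \delta\bigl(\|h_{\theta'}^{[j-1]}\|_\infty+1\bigr),
\end{equation*}
by splitting the perturbation into weight-part and bias-part and invoking the Lipschitz property of $\sigma$. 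Unrolling this recursion and plugging in the forward bound yields $\|h_\theta-h_{\theta'}\|_\infty \leq C(\mathcal{W}_j,\mathcal{L})\,\mathcal{M}\,\delta$ uniformly on $[-1,1]^d$, where $\mathcal{M}=\prod_{j=1}^\mathcal{L}\mathcal{M}_j$.

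Finally, I would cover $\Omega$ by a Cartesian grid. Each scalar parameter lies in $[-\mathcal{M}_j,\mathcal{M}_j]$, and there are $D=\sum_j \mathcal{W}_j\mathcal{W}_{j-1}+\sum_j \mathcal{W}_j$ of them. A $\delta$-grid in $\|\cdot\|_\infty$ on each coordinate produces a parameter set of cardinality at most $\prod_j(2\mathcal{M}_j/\delta)^{\mathcal{W}_j\mathcal{W}_{j-1}+\mathcal{W}_j}\leq C(\mathcal{W}_j,\mathcal{L})(\mathcal{M}/\delta)^D$. Choosing $\delta = \epsilon/(C(\mathcal{W}_j,\mathcal{L})\mathcal{M})$ makes the associated functions an $\epsilon$-cover of $\mathcal{H}$ by the sensitivity bound, yielding $\mathcal{N}(\epsilon,\mathcal{H},\|\cdot\|_\infty)\leq C(\mathcal{W}_j,\mathcal{L})(\mathcal{M}/\epsilon)^D$. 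The parameter-count estimate $D\leq \mathcal{W}^2\mathcal{L}$ is then immediate from $\mathcal{W}_j\leq \mathcal{W}$.

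The main obstacle is the bookkeeping in the sensitivity recursion: one must ensure that perturbations at early layers do not compound super-exponentially through the remaining $\mathcal{L}-j$ compositions. This is handled by using the $\|\cdot\|_\infty$ hidden-layer norm throughout and exploiting the row-$\ell_1$ constraint $\|A_{j,:,i}\|_1\leq \mathcal{M}_j$, which keeps the per-layer amplification at exactly $L_\sigma\mathcal{M}_j$; any mismatch between the norms on parameters and hidden activations would lose a factor of $\mathcal{W}$ per layer and spoil the clean exponent $D$. A secondary subtlety is that the bias perturbation introduces an additive term scaling with $\|h^{[j-1]}\|_\infty+1$ rather than $\delta$ alone, which is absorbed into the overall prefactor via the forward norm bound.
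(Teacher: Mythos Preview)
Your proposal is correct and follows essentially the same route as the paper: a forward inductive norm bound on $\|h^{[j]}\|_\infty$, a layer-wise perturbation recursion using the Lipschitz constant $L_\sigma=\max(1,\alpha)$ of $\operatorname{SignReLU}$ together with the row-$\ell_1$ constraint, and a transfer from a $\delta$-grid on the parameter box $\Omega$ to an $\epsilon$-cover of $\mathcal{H}$. One minor correction: the claim $|\sigma(u)|\leq |u|$ only holds for $\alpha\leq 1$; in general you need $|\sigma(u)|\leq L_\sigma|u|$, which the paper handles by carrying the factor $B=\max\{\alpha,1\}$ through the forward bound (and ultimately specializes to $\alpha=1$), so this does not affect your argument beyond the absorbed constant.
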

\begin{proof}
    The parameter constraint denoted as,
\begin{equation}
\label{eq:cn_constraint}
   \begin{aligned}
   &\Omega^\prime_1=\left\{\mathbf{A}_j \in \mathbb{R}^{d_{j-1}\times d_j} : \max_{i,j} \quad\left\|\mathbf{A}_{j, :, i}\right\|_1 \leq \mathcal{M}_j\right\} \subseteq \mathbb{R}^{\sum_{j=1}^\CL d_{j-1} d_j},\\
&\Omega^\prime_2=\left\{\mathbf{b}_j \in \mathbb{R}^{ d_j} : \max_{j} \quad\left\|\mathbf{b}_j\right\|_{\infty} \leq \mathcal{M}_j\right\} \subseteq \mathbb{R}^{\sum_{j=1}^\CL d_j}.
\end{aligned}
\end{equation}
Basing on the tunable parameter $\alpha$ in the activation function SignReLU (see in Eq.\ref{eq:signrelu}), $B=\max\{\alpha, 1\}$. We next analyse the bounding of $\left\|h_1^{[j]}-\tilde h_1^{[j]}\right\|_{\infty}$. For any $j=1, \cdots, \CL$, let $\mathcal{M}^\prime = \max_{j}\{\mathcal{M}_j\}$,
$$
\begin{aligned}
& \left\|h_1^{[j]}\right\|_{\infty}=\left\|\sigma\left(w_k^{\top} h_1^{[j-1]}+\mathbf{b}_k\right)\right\|_{\infty}  \leq B\mathcal{M}^\prime\left\|h_1^{[j-1]}\right\|_{\infty}+B\mathcal{M}^\prime, \\
& \left\|h_1^{[j]}\right\|_{\infty}+\frac{B\mathcal{M}^\prime}{B\mathcal{M}^\prime-1} \leq B\mathcal{M}^\prime\left(\left\|h_1^{[j-1]}\right\|_{\infty}+\frac{B\mathcal{M}^\prime}{B\mathcal{M}^\prime-1}\right), \\
&\left\|h_1^{[j]}\right\|_{\infty} \leqslant {(B\mathcal{M}^\prime)}^j\left(1+\frac{B\mathcal{M}^\prime}{B\mathcal{M}^\prime-1}\right), \\
&\left\|h_1^{[\CL]}\right\|_{\infty} \leq 3{(B\mathcal{M}^\prime)}^\CL .\\
\end{aligned}
$$
Then,
$$
    \begin{aligned}
        \left\|h_1^{[j]}-\tilde h_1^{[j]}\right\|_{\infty}&=\left\|\sigma\left(\mathbf{A}_j^{\top} h_1^{[j-1]}+\mathbf{b}_j\right)-\sigma\left({ \mathbf{\tilde A}}_j^{\top} \tilde h_1^{[j-1]}+\mathbf{\tilde b}_j\right)\right\|_{\infty}\\
        & \leq B\left\|\mathbf{A}_j^{\top} h_1^{[j-1]}+\mathbf{b}_j-\tilde{\mathbf{A}}^{\top}_j \tilde{h_1}^{[j-1]}-\tilde{\mathbf{b}}_j\right\|_{\infty} \\
        &\leq B{\left\|\mathbf{A}_j-\tilde{\mathbf{A}}_j\right\|_1}\left\|h_1^{[j-1]}\right\|_{\infty}+\left\|{\mathbf{\tilde A}_j}\right\|_1\left\|h_1^{[j-1]}-\tilde{h_1}^{[j-1]}\right\|_{\infty}+B\left\|\mathbf{b}_j-\mathbf{\tilde b}_j\right\|_{\infty}\\
        &\leq  3B^j{\mathcal{M}^\prime}^{j-1}\left\|\mathbf{A}_j-\tilde{\mathbf{A}}_j\right\|_1+BD\left\|h_1^{[j-1]}-\tilde{h_1}^{[j-1]}\right\|_{\infty} +\left\|\mathbf{b}_j-\mathbf{\tilde b}_j\right\|_{\infty}\\
        &\leq \CL B^\CL{\mathcal{M}^\prime}^{(\CL-1)}\max_j \left(3\left\|\mathbf{A}_j-\tilde{\mathbf{A}}_j\right\|_1\right. \left.+\left\|\mathbf{b}_j-\mathbf{\tilde b}_j\right\|_{\infty}\right).
    \end{aligned}
    $$
    In this way, for the covering number $ \mathcal{N}\left(\epsilon,\mathcal{H}_1, \|\cdot\|_{\infty}\right)$,
    $$
    \begin{aligned}
        &\mathcal{N}\left(\epsilon,\mathcal{H}_1, \|\cdot\|_{\infty}\right) \leq \mathcal{N}\left(\Omega^\prime_1, \frac{\epsilon}{2 \CL B^\CL{\mathcal{M}^\prime}^\CL},\|\cdot\|_1\right) \cdot \mathcal{N}\left(\Omega^\prime_2, \frac{\epsilon}{2 \CL B^\CL{\mathcal{M}^\prime}^\CL},\|\cdot\|_{\infty}\right).
    \end{aligned}
    $$
    As the parameters are defined in Eq.\ref{eq:cn_constraint}, we can get that,
    $$
\mathcal{N}\left(\epsilon,\mathcal{H}_1, \|\cdot\|_{\infty}\right) \leq C\left(B^\CL{\mathcal{M}^\prime}^{\CL} / \epsilon\right)^D,
$$
where $\mathcal{M} = \prod_{j=1}^\CL \max_{j}\{\mathcal{M}_j\}$ and $D=\sum_{i=1}^\CL \mathcal{W}_j \mathcal{W}_{j-1}+\sum_{i=1}^\CL \mathcal{W}_j\leq \mathcal{W}^2 \mathcal{L}$. Following the construction of the SignReLU network we used in approximation, we consider the tunable parameter $\alpha=1$. 
\end{proof}

Now, we are in the position to prove Theorem \ref{thm:esti}. Let  $\phi_t \in \mathcal{N}\mathcal{N}(\mathcal{W},\mathcal{L},\mathcal{M})$ be SignReLU networks with a depth of $\mathcal{L} = 7$, width $\mathcal{W} = O(n + 9d)$, and parameter norm bounded by $\mathcal{M} = O(n)$. The covering numbers of the function classes $\mathcal{\hat H}$ and $\mathcal{\hat G}$, as defined in Proposition \ref{prop:staterror}, are analyzed using the bounds $B_{\hat{p}}$ and $B_{\tilde{p}}$ from Eq.~\ref{eq:esti}. Based on the framework established in Proposition \ref{prop:cover_num}, the covering numbers of $\mathcal{\hat H}$ and $\mathcal{\hat G}$ can be bounded as follows:
\begin{align*}
	\mathcal{N}\left(\epsilon,\mathcal{\hat H}, \|\cdot\|_{\infty}\right) \leq C\left(B_{\hat p} / \epsilon\right)^{n^2},\\
	\mathcal{N}\left(\epsilon,\mathcal{\hat G}, \|\cdot\|_{\infty}\right) \leq C\left(B_{\tilde p} / \epsilon\right)^{n^2}.
\end{align*}
Apply the upper bound of the covering number to further bounding Proposition \ref{prop:staterror},
     \begin{align*}
       & E_{\mathbf{X}} \log \tilde{p}_0(x) - \frac{1}{m}\sum_{i=1}^m \log \tilde{p}_0(x) \\
        &\leq C E \inf _{0<{\eta_1}< B_{\tilde p} / 2}\left({\eta_1}+\frac{3}{\sqrt{m}} \int_{\eta}^{B_{\tilde p} / 2} \sqrt{\log \mathcal{N}\left(\epsilon, \mathcal{\hat H}, \|\cdot\|_{\infty}\right)} d \epsilon\right)+ B_{\tilde p} \sqrt{\frac{2\log \frac{1}{\delta_1}}{ m}}\\
        &\leq C \inf _{0<{\eta_1}< {B_{\tilde p}} / 2}\left( \eta_1+\sqrt{\frac{C\CW^2\CL}{m}} \int_{\eta}^{B_{\tilde p} / 2} \sqrt{\log (B_{\hat p} / \epsilon)} d \epsilon\right)+  B_{\tilde p} \sqrt{\frac{2\log \frac{1}{\delta_1}}{ m}}\\
        &\leq C \inf _{0<{\eta_1}< {B_{\tilde p}} / 2}\left( \eta_1+B_{\tilde p} \sqrt{\frac{C\CW^2\CL\log (B_{\hat p} / \eta_1)}{m}} \right)+  B_{\tilde p} \sqrt{\frac{2\log \frac{1}{\delta_1}}{ m}}\\
& \leq C B_{\tilde p} (\sqrt{\frac{\CW^2\CL}{m}}+\sqrt{\frac{\log \frac{1}{\delta_1}}{ m}}).
    \end{align*}
Following the same strategy, 
\begin{align*}
	& E_{\mathbf{\hat Z}_0} \log \hat{p}_0(\hat z_0) - \frac{1}{m}\sum_{i=1}^m \log \hat{p}_0(\hat z_0)\leq C B_{\hat p} (\sqrt{\frac{\CW^2\CL}{m}}+\sqrt{\frac{\log \frac{1}{\delta_2}}{ m}}).
\end{align*}
Combining the preceding results yields the following conclusion: for a neural network $\NN(\CW,\CL,\mathcal{M})$ of depth $\CL = 7$, width $\CW = O(n+9d)$ and parameter norm $\mathcal{M}>0$, the estimation error bound holds with probability at least $1 - 2\delta$, as stated in the following,
\begin{align*}
          &D_{KL}\left(q_0(x) \| \hat{p}_0(\hat z_0)\right)-D_{KL}\left(q_0(x) \| \tilde{p}_0(x)\right)\\
		&\precsim T\left(\mathcal{M}^2+\log \mathcal{M}+T^3\mathcal{M}^2\log \mathcal{M}\right)(\sqrt{\frac{7n^2}{m}}+\sqrt{\frac{2\log \frac{1}{\delta}}{ m}}).
\end{align*}

\subsection{Proof of Corollary~\ref{coro:data}}
\label{sec:proof of data size}

Assume the neural network is constructed as in Theorem~\ref{thm:esti}. For each training sample $x_i$, we average the network outputs over the $m_z$ terminal latent draws $\{\hat z_{T,i,j}\}_{j=1}^{m_z}$. The resulting statistical error admits the decomposition
\begin{align}
\label{eq:esti2}
    & D_{K L}\left(q_0(x) \| \hat{p}_0\left(\hat{z}_0\right)\right)-D_{K L}\left(q_0(x) \| \tilde{p}_0(x)\right) \notag\\
= & \underbrace{\mathbb{E}_{x \sim q_0}\left[\log \tilde{p}_0(x)\right]-\frac{1}{m} \sum_{i=1}^{m} \log \tilde{p}_0\left(x_i\right)}_{\mathrm{I}}+\underbrace{\frac{1}{m m_z} \sum_{i=1}^{m} \sum_{j=1}^{m_z} \log \hat{p}_0\left(\hat{z}_{0, i, j}\right)-\mathbb{E}_{\hat{z}_0 \sim \hat{p}_0}\left[\log \hat{p}_0\left(\hat{z}_0\right)\right]}_{\mathrm{II}}\notag \\
& +\underbrace{\frac{1}{m} \sum_{i=1}^{m} \log \tilde{p}_0\left(x_i\right)-\frac{1}{m m_z} \sum_{i=1}^{m} \sum_{j=1}^{m_z} \log \hat{p}_0\left(\hat{z}_{0, i, j}\right)}_{\mathrm{III}}.
\end{align}
The analysis of~\eqref{eq:esti2} follows that of the statistical error in~\eqref{eq:esti}.

Since the network architecture is unchanged, the bound on $\|\log \tilde p_0(x)\|_\infty$ remains the same as in Theorem~\ref{thm:esti}. Next, we account for the effect of processing $m_z$ noise realizations per step in the forward (noise-injection) process. Specifically, at the start of the backward process we draw, for each $i\in\{1,\ldots,m\}$,
\[
\{\hat z_{T,i,j}\}_{j=1}^{m_z}\stackrel{\mathrm{i.i.d.}}{\sim}\CN(0,\mathbf I_d),
\]
and train the backward process using these $m_z$ replicas to generate the corresponding added-noise samples used in the forward process. Under this replication-and-averaging scheme, the essential supremum of the log-density term associated with $\hat p_0$ satisfies
\begin{align*}
B_{\hat p}\coloneq \|\log \hat p_0(\hat z_0)\|_\infty
\ \le\ O\!\left(\frac{1}{m_z}\,T^4\mathcal{M}^2\log \mathcal{M}\right).
\end{align*}

Following the proof of Theorem~\ref{thm:esti}, when the training sample size is $m$, processing $m_z$ added-noise samples at each step $t\in T$ yields
\begin{align*}
&D_{KL}\!\left(q_0(x) \| \hat{p}_0(\hat z_0)\right)-D_{KL}\!\left(q_0(x) \| \tilde{p}_0(x)\right)\\
&\precsim T\left(\mathcal{M}^2+\log \mathcal{M}\right)\!\left(\sqrt{\frac{7n^2}{m}}+\sqrt{\frac{2\log \frac{1}{\delta}}{m}}\right)
+\left(T^4\mathcal{M}^2\log \mathcal{M}\right)\!\left(\sqrt{\frac{7n^2}{m_z^2 m}}+\sqrt{\frac{2\log \frac{1}{\delta}}{m_z^2 m}}\right).
\end{align*}
This concludes the proof.

\end{document}